\documentclass[11pt]{article}
\usepackage{enumerate}
\usepackage{url}
\usepackage{pdflscape} 
\usepackage{rotating} 
\usepackage{booktabs}
\usepackage{threeparttable} 

\usepackage{geometry}
\geometry{
 left=3cm,
 right=3cm,
 top=3cm,
 bottom=3cm,
}

\RequirePackage{amsthm,amsmath,amsfonts,amssymb}
\usepackage{xcolor}
\usepackage{xr-hyper}
\usepackage[pdftex,bookmarks,colorlinks,breaklinks]{hyperref} 

\definecolor{dullmagenta}{rgb}{0.4,0,0.4} 
\definecolor{darkblue}{rgb}{0,0,0.4}
\definecolor{coquelicot}{rgb}{0.20, 0.12, 0.72}
\definecolor{navyblue}{rgb}{0,0,0.5}
\hypersetup{linkcolor=blue,citecolor=blue,filecolor=blue,urlcolor=blue}

\newcommand\independent{\protect\mathpalette{\protect\independenT}{\perp}}
\def\independenT#1#2{\mathrel{\rlap{$#1#2$}\mkern2mu{#1#2}}}

\usepackage{epstopdf}
\usepackage{tablefootnote}

\usepackage{multirow}
\usepackage{mathtools}
\usepackage{bbm}
\usepackage{graphicx} 
\usepackage{flafter} 
\usepackage{subcaption}
\usepackage{graphicx}

\usepackage{tikz}
\usetikzlibrary{positioning}
\usetikzlibrary{arrows.meta,arrows}

\usetikzlibrary{arrows, decorations.markings,shapes,arrows,fit}
\tikzset{box/.style={draw, minimum size=2em, text width=4.5em, text centered},
	bigbox/.style={draw, inner sep=20pt,label={[shift={(-3ex,3ex)}]south east:#1}}
}

\usepackage{bm} 

\usepackage{colortbl}
 \usepackage{arydshln}
 \setlength\dashlinedash{0.4pt}
\setlength\dashlinegap{1.0pt}
\setlength\arrayrulewidth{0.5pt}
\usepackage{booktabs}

\usepackage{algorithm}
\usepackage{algpseudocode}

\definecolor{coquelicot}{rgb}{0.90, 0.42, 0.72}
\definecolor{burntorange}{rgb}{0.8, 0.33, 0.0}


\newcommand{\jelenax}[1]{\textcolor{black}{#1}}

\definecolor{burntblue}{RGB}{0, 114, 206}

\numberwithin{table}{section}
\numberwithin{equation}{section}

\def\bx{\mathbf{x}}
\def\bX{\mathbf{X}}

\def\bU{\mathbf{U}}

\def\bV{\mathbf{V}}

\def\bP{\mathbf{P}}

\def\bC{\mathbf{C}}
\def\bc{\mathbf{c}}
\def\bd{\mathbf{d}}

\def\ba{\mathbf{a}}
\def\bA{\mathbf{A}}

\def\bQ{\mathbf{Q}}
\def\b0{\mathbf{0}}

\def\E{\mathbb{E}}

\def\P{\mathbb{P}}

\def\bbeta{\boldsymbol{\beta}}

\def\bxi{\boldsymbol{\xi}}

\def\bbetahat{\widehat{\bbeta}}

\def\bzero{\boldsymbol{0}}

\def\mhat{\widehat{m}}

\def\independenT#1#2{\mathrel{\rlap{$#1#2$}\mkern4mu{#1#2}}}

\def\Csc{\mathcal{C}}
\def\Isc{\mathcal{I}}
\def\Jsc{\mathcal{J}}

\def\Var{\mbox{Var}}

\def\bzero{\mathbf{0}}
\def\bb{\mathbf{b}}

\def\bT{\mathbf{T}}

\def\bS{\mathbf{S}}

\def\bg{\mathbf{g}}
\def\bG{\mathbf{G}}

\def\S{\mathbb{S}}

\def\bDelta{\boldsymbol{\Delta}}

\def\Vtil{\widetilde{\mathbb{V}}}

\def\bW{\mathbf{W}}

\def\bgamma{\boldsymbol{\gamma}}

\def\Ybar{\overline{Y}}

\def\Asc{\mathcal{A}}
\def\Bsc{\mathcal{B}}
\def\Csc{\mathcal{C}}

\def\R{\mathbb{R}}
\def\Jsc{\mathcal{J}}

\def\S{\mathbb{S}}

\def\cbar{\bar{c}}

\def\bxi{\boldsymbol{\xi}}

\def\bD{\mathbf{D}}

\def \hs2{\hspace{2mm}}

\numberwithin{table}{section}
\numberwithin{equation}{section}

\definecolor{jcolor}{RGB}{041,122,000}
\definecolor{darkred}{RGB}{100,000,000}
\definecolor{purple}{RGB}{200,000,200}

\def\boxit#1{\vbox{\hrule\hbox{\vrule\kern6pt  \vbox{\kern6pt#1\kern6pt}\kern6pt\vrule}\hrule}}

\def\muhat{\widehat{\mu}}

\def\muhat{\widehat{\mu}}

\def\muhat{\widehat{\mu}}

\def\be{\mathbf{e}}

\def\bT{\mathbf{T}}
\def\bS{\mathbf{S}}
\def\bG{\mathbf{G}}

\def\Rbar{\bar{R}}
\def\Rtil{\widetilde{R}}

\def\dbar{\bar{d}}
\def\Vbar{\bar{V}}

\def\bVtil{\widetilde{\bV}}

\def\pihat{\widehat{\pi}}
\def\mhat{\widehat{m}}

\def\Vtil{\widetilde{V}}

\def\bT{\mathbf{T}}

\def\bxi{\boldsymbol{\xi}}

\def\sigmahat{\widehat{\sigma}}

\def\thetahat{\widehat{\theta}}

\def\Vtil{\widetilde{V}}

\DeclareMathOperator*{\argminn}{argmin}

\theoremstyle{plain}
\newtheorem{theorem}{Theorem}[section]

\newtheorem{lemma}[theorem]{Lemma}

\theoremstyle{remark}
\newtheorem{assumption}{Assumption}
\newtheorem{remark}{Remark}

\usepackage{xr}

\begin{document}

\date{}

 
\title{\bf Adaptive Split Balancing for Optimal Random Forest}

\author{Yuqian Zhang\thanks{Institute of Statistics and Big Data, Renmin University of China} \and Weijie Ji\thanks{School of Statistics and Management, Shanghai University of Finance and Economics} \and Jelena Bradic\thanks{Department of Mathematics and Halicioglu Data Science Institute, University of California, San Diego, E-mail: \href{mailto:jbradic@ucsd.edu}{jbradic@ucsd.edu} }}




\maketitle

\begin{abstract}
\jelenax{In this paper, we propose a new random forest algorithm that constructs the trees using a novel adaptive split-balancing method.} Rather than relying on the widely-used random feature selection, we \jelenax{propose} a \jelenax{permutation-based} balanced \jelenax{splitting criterion}. The \emph{adaptive split balancing forest} (ASBF), achiev\jelenax{es} minimax optimality under the Lipschitz class. Its localized version, \jelenax{which fits local regressions at the leaf level}, attains the minimax rate under the \jelenax{broad} H\"older class $\mathcal{H}^{q,\beta}$ \jelenax{of problems} for any $q\in\mathbb{N}$ and $\beta\in(0,1]$. 
\jelenax{We identify} that over-reliance on auxiliary randomness \jelenax{in tree construction} may compromise the approximation power of trees, leading to suboptimal results. Conversely, \jelenax{the proposed} less random, \jelenax{permutation-based} approach demonstrates optimality \jelenax{over a wide range of models}. \jelenax{Although random forests are known to perform well empirically, their theoretical convergence rates are slow. Simplified versions that construct trees without data dependence offer faster rates but lack adaptability during tree growth.} Our proposed method achieves optimality in simple, smooth scenarios while adaptively learning the tree structure from the data. \jelenax{Additionally, we establish uniform upper bounds and demonstrate that ASBF improves dimensionality dependence in average treatment effect estimation problems.} Simulation studies and real-world applications demonstrate \jelenax{our methods' superior performance} over existing random forests.\end{abstract}

\section{Introduction}\label{sec:intro}

\jelenax{Renowned for their empirical success, random forests are the preferred method in numerous applied scientific domains. Their versatility has greatly contributed to their popularity, extending well beyond conditional mean problems to include} quantile estimation \cite{meinshausen2006quantile}, survival analysis \cite{ishwaran2008random, ishwaran2010consistency}, and feature selection \cite{goldstein2011random, mentch2014ensemble, louppe2013understanding, li2019debiased, behr2022provable}. Despite its widespread use, the theoretical analysis remains \jelenax{largely} incomplete, even \jelenax{in terms of minimax optimality}.

Consider estimat\jelenax{ing} the conditional mean function $m(\bx) := \E[Y \mid \bX = \bx]$ for any $\bx \in [0,1]^d$\jelenax{, where $Y \in \R$ is the response and $\bX \in [0,1]^d$ is the covariate vector.} Let $\S_N := (Y_i, \bX_i)_{i=1}^N$ be independent and identically distributed (i.i.d.) samples\jelenax{, with $(Y_i, \bX_i)_{i=1}^N \overset{d}{\sim} (Y, \bX)$. Let }$\mhat(\cdot)$ \jelenax{be} the random forest \jelenax{estimate} constructed \jelenax{from} $\S_N$. \jelenax{This paper} focus\jelenax{es} on the integrated mean squared error (IMSE) $\E_{\bx} [\mhat(\bx) - m(\bx)]^2$, where the expectation is taken with respect to the new observation $\bx$.

Breiman's original algorithm \cite{breiman2001random} \jelenax{is} based on \jelenax{Classification And Regression Trees} (CART) \cite{breiman1984classification}. \jelenax{Its essential components,} introduce auxiliary randomness \jelenax{through bagging and the \emph{random feature selection} and play a crucial role in the algorithm's effectiveness.} \jelenax{At each node of every} tree\jelenax{, the algorithm} optimize\jelenax{s Ginny index (for classification) or prediction squared error (for regression) } by selecting \jelenax{axes-aligned} splitting directions and locations \jelenax{from a subset of randomly chosen set of features, using} a subset of samples.
 \jelenax{T}he overall forest is obtained by averaging over an ensemble of trees. \cite{scornet2015consistency} \jelenax{established} consistency of Breiman's original algorithm under additive models \jelenax{ with continuous components, but} did not specify a rate. Recently, \cite{chi2022asymptotic} established the rate under a ``sufficient impurity decrease'' (SID) condition \jelenax{for} high dimensional \jelenax{setting}, and \cite{klusowski2023large} extended this to \jelenax{cases} where covariates' dimension grows sup-exponentially with the sample size.
 These results suggest that Breiman's original algorithm retains consistency \jelenax{with} discontinuous conditional mean functions and high-dimensional covariates. However, the established consistency rates for smooth functions were observed to be slow. For instance, for simple linear functions, the rates are no faster than \(N^{-1/(16d^2)}\) and \(1/\log(N)\), respectively; see Table \ref{table:rate}.
\jelenax{Given} the theoretical \jelenax{complexities} associated with \jelenax{CART-split} criterion, \cite{biau2012analysis}\footnote{The results presented by \cite{biau2012analysis} necessitate prior knowledge of the active features, information that is typically unknown in practice. As an alternative, the author also suggested the honest technique albeit without providing any theoretical guarantees.}, \cite{arlot2014analysis}, and \cite{klusowski2021sharp} investigated a simplified version named the ``centered forest,'' \jelenax{where the} splitting directions are chosen randomly, and splitting points are selected as midpoints of parent nodes. 
 A \jelenax{more advanced} variant, the ``median forest,'' \cite{klusowski2021sharp, duroux2018impact}, \jelenax{selects} sample medians as splitting points.
\jelenax{Importantly}, both centered and median forests\jelenax{' consistency rates} are slow, with minimax rates attained only when $d=1$; see Table \ref{table:rate} and Figure \ref{fig:rate}.

\renewcommand{\arraystretch}{1.5}
\begin{table}[h!]
\caption{IMSE rates. 
Splitting criteria that use only $\bX_i$ are labeled unsupervised, while those using both $\bX_i$ and $Y_i$ are labeled supervised.
Note: \cite{cattaneo2023inference} refers to point-wise MSE at interior points, \cite{cai2023extrapolated} to in-sample excess risk, and \cite{athey2019generalized,friedberg2020local} to normality at a specific point.}\label{table:rate}
\scalebox{0.78}{
\begin{tabular}{|c|c|c|c|c|}
\hline
Methods & Consistency rate 
& Functional class & Random Forest & Splitting criterion\\
\hline
\cite{genuer2012variance}&$N^{-2/3}$&$\mathcal{H}^{0,1}$, $d=1$& Purely uniform &Data-independent\\
\hline
\cite{biau2012analysis}&$N^{-\frac{3/4}{q\log (2)+3/4}}$&$q$-sparse
 $\mathcal{H}^{0,1}$&Centered &Data-independent\\
\hline
\multirow{2}{*}{\cite{arlot2014analysis}}&$N^{\frac{-2\log(1-1/(2d))}{2\log(1-1/(2d))-\log (2)}}$ &$\mathcal{H}^{1,1}$, $d\leq3$&\multirow{2}{*}{Centered }&\multirow{2}{*}{Data-independent}\\
\cline{2-3}
&$N^{\delta+2\log(\frac{2d-1}{2d})}$, $\delta>0$&$\mathcal{H}^{1,1}$, $d\geq4$& &\\
\hline
\cite{mourtada2020minimax}&$N^{-\frac{2(q+\beta)}{d+2(q+\beta)}}$&$\mathcal{H}^{q,\beta}$, $q+\beta\leq1.5$ &Mondrian&Data-independent\\
\cline{2-3}
\hline
\cite{o2024minimax}&$N^{-\frac{2(q+\beta)}{d+2(q+\beta)}}$&$\mathcal{H}^{q,\beta}$, $q\in\{0,1\}$, $\beta\in(0,1]$&Tessellation &Data-independent\\
\hline
\cite{cattaneo2023inference}&$N^{-\frac{2(q+\beta)}{d+2(q+\beta)}}$&$\mathcal{H}^{q,\beta}$, $q\in \mathbb{N}$, $\beta\in(0,1]$&Debiased Mondrian &Data-independent\\
\hline
\cite{cai2023extrapolated}&$(N/\log(N))^{-\frac{2(q+\beta)}{d+2(q+\beta)}}$&$\mathcal{H}^{q,\beta}$, $q\in \mathbb{N}$, $\beta\in(0,1]$&Extrapolated tree&Data-independent\\
\hline
\multirow{3}{*}{\cite{klusowski2021sharp}}&$(N\log^{(d-1)/2}(N))^{-r}$,&\multirow{2}{*}{$\mathcal{H}^{0,1}$}&\multirow{2}{*}{Centered }&\multirow{2}{*}{Data-independent}\\
&$r=\frac{2\log(1-1/(2d))}{2\log(1-1/(2d))-\log (2)}$&&&\\
\cline{2-5}
&$N^{-\frac{2\log(1-1/(2d))}{2\log(1-1/(2d))-\log (2)}}$&$\mathcal{H}^{0,1}$&Median &Unsupervised\\
\hline
\cite{duroux2018impact}&$N^{-\frac{\log(1-3/(4d))}{\log(1-3/(4d))-\log (2)}}$&$\mathcal{H}^{0,1}$&Median &Unsupervised\\
\hline
\cite{scornet2015consistency}&Only $o_p(1)$&Additive model &Breiman&Supervised\\
\hline
\cite{klusowski2023large}&$O_p(1/\log(N))$&Additive model &Breiman&Supervised\\
\hline
\cite{chi2022asymptotic}&$N^{-\frac{c}{\alpha}\land\eta}$, $c<1/4$, $\eta<1/8$ &SID($\alpha)$, $\alpha\geq1$ 
&Breiman&Supervised
\\
\hline
\multirow{2}{*}{\cite{athey2019generalized}}&$N^{\delta-\frac{\log(1-\alpha)}{d\log(\alpha)+\log(1-\alpha)}}$,&\multirow{2}{*}{$\mathcal{H}^{0,1}$}&\multirow{2}{*}{Honest }&Unsupervised if $\alpha=0.5$,\\
&$\alpha\leq1$, $\delta>0$&&&supervised if $\alpha<0.5$\\
\hline
\multirow{2}{*}{\cite{friedberg2020local}}&$N^{\delta-\frac{1.3\log(1-\alpha)}{d\log(\alpha)+1.3(1-\alpha)}}$,&\multirow{2}{*}{$\mathcal{H}^{1,1}$}&\multirow{2}{*}{Local linear honest }&\multirow{2}{*}{Supervised}\\
&$\alpha\leq0.2$, $\delta>0$&&&\\
\hline
\multirow{2}{*}{This paper}&$N^{\frac{-2\log(1-\alpha)}{d\log(\alpha)+2\log(1-\alpha)}}$&$\mathcal{H}^{0,1}$&ASBF&Unsupervised if $\alpha=0.5$,\\
\cline{2-4}
&$N^{\frac{-2(q+\beta)\log(1-\alpha)}{d\log(\alpha)+2(q+\beta)\log(1-\alpha)}}$ &$\mathcal{H}^{q,\beta}$, $q\in \mathbb{N}$, $\beta\in(0,1]$&L-ASBF&supervised if $\alpha<0.5$\\
\hline
\end{tabular}}
\end{table}

Centered forests \jelenax{and other} ``purely random forests'' \cite{mourtada2020minimax, o2024minimax, biau2008consistency, arlot2014analysis, klusowski2021sharp}, grow \jelenax{trees} independently from all the samples. \jelenax{Few studies among these} have achieved minimax rates \jelenax{for} smooth functions. 
 \cite{gao2022towards}  achieved near-minimax rates exclusively for Lipschitz functions by using an ``early stopping'' technique, though their splitting criterion remains data-independent. Mondrian forests \cite{mourtada2020minimax} attain \jelenax{IMSE} minimax optimal rate for the H\"older class $\mathcal{H}^{q,\beta}$ when $s=q+\beta\leq1.5$. 
 \cite{o2024minimax} introduced Tessellation forests, \jelenax{extending} minimax optimality to $s \leq 2$. \cite{cattaneo2023inference} proposed debiased Mondrian forests, establishing minimax optimal rates \jelenax{any $q \in \mathbb{N}$ and $\beta\in(0,1]$} in the point-wise mean squared error (MSE) $\E \left[\mhat(\bx) - m(\bx)\right]^2$ for fixed interior points. However, this debiasing procedure only corrects for interior bias and not boundary bias\jelenax{, preventing it from achieving minimax optimality in terms of the IMSE.} \cite{cai2023extrapolated} allows arbitrary $q \in \mathbb{N}$ and established nearly optimal in-sample excess risk; however, they did not provide upper bounds for the out-of-sample IMSE. All \jelenax{aforementioned} works use data-independent \jelenax{splits}, \jelenax{restricting the use of data during tree growth}. As a result, purely random forests lose the unique advantages of tree-based methods.

\jelenax{Recently}, \cite{athey2019generalized, wager2015adaptive, wager2018estimation, friedberg2020local} explored ``honest forest'' variant that \jelenax{differs from} Breiman's forest in two key aspects: (a) the splitting point \jelenax{ensures} that child nodes contain at least $\alpha \leq 0.5$ fraction of parent' samples, and (b) the forest is ``honest'' \jelenax{using two independent sub-samples per tree}.
One sub-sample\jelenax{'s outcomes and} all the covariates, are used for splitting, while \jelenax{the other sub-sample's} outcomes
 are used for local averaging. \jelenax{Now,} the splits depend on both the covariates and outcomes as long as $\alpha < 0.5$. For $\alpha = 0.5$, their method degenerates to the median forest. \jelenax{Without the $\alpha$-fraction} constraint, splits tend to concentrate \jelenax{at the parent node} endpoints, resulting in inaccurate local averaging; see \cite{ishwaran2015effect, breiman1984classification, cattaneo2022pointwise}. However, the consistency rates for honest forests remain sub-optimal; see Table \ref{table:rate} and Figure \ref{fig:rate}.

\begin{figure}[h!]
\captionsetup[subfloat]{labelformat=empty}
\subfloat[(a) $d=2$]{\includegraphics[height=0.3\linewidth,width=0.45\linewidth]{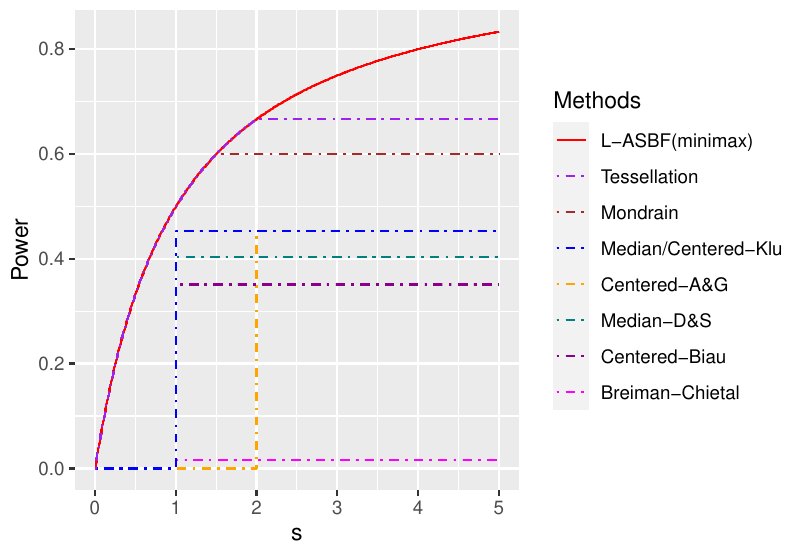}}
\hfill
\subfloat[(b) $d=4$]{
\includegraphics[height=0.3\linewidth,width=0.45\linewidth]{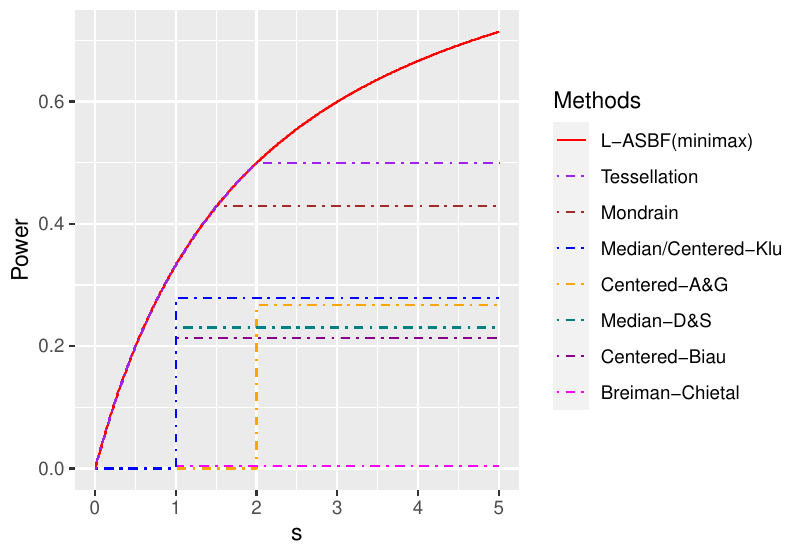}}
\caption{IMSE for the H\"older class $\mathcal{H}^{q,\beta}$ with $s=q+\beta$ and $d\in\{2,4\}$. Here, $y$ denotes IMSE as $O_p(N^{-y})$ excluding log terms; see Table \ref{table:rate}. Klu, A$\&$G, D$\&$S, Biau, and Chietal refer to \cite{klusowski2021sharp}, \cite{arlot2014analysis}, \cite{duroux2018impact}, \cite{biau2012analysis}, and \cite{chi2022asymptotic}, which provided rates for certain integer $s$. L-ASBF is the proposed local ASBF with $\alpha=0.5$, achieving the minimax optimal rate for any $s>0$.}\label{fig:rate}
\end{figure}

 \jelenax{We aim to improve upon the existing methods, which either fail to achieve minimax rates or do not fully utilize the data's information. To address these limitations, we propose a new splitting criterion that eliminates the auxiliary randomness of the \jelenax{random feature selection}. Instead, our approach employs a permutation-based splitting criterion that balances the depth of the tree branches, resulting in minimax optimal rates in simple, smooth situations.} Detailed construction is deferred to Algorithm \ref{alg:balance_RF}. 
A special case of the proposed method is \emph{balanced} median forest \jelenax{which} attains the minimax optimal rate for Lipschitz continuous functions. 
We propose \jelenax{a local ASBF (L-ASBF)} \jelenax{for the H\"older class $\mathcal{H}^{q,\beta}$}, 
 that performs $q$-th order local polynomial regression within the terminal leaves \jelenax{for} any $q \geq 1$. \jelenax{A specific case is the local linear approach of \cite{friedberg2020local}, where now the addition of balanced splitting ensures a faster convergence rate; see} Table \ref{table:rate}. 
 \jelenax{We show that}
 the L-ASBF reaches minimax optimality over the \jelenax{broad} H\"older class $\mathcal H^{q,\beta}$ for any $q\in\mathbb N$ and $\beta\in(0,1]$. To the best of our knowledge, this marks the first random forest that attain\jelenax{s} minimax optimal rates in terms of the IMSE \jelenax{for} the H\"older smoothness \jelenax{class} with any $q\geq2$ and $\beta\in(0,1]$, as seen in Figure \ref{fig:rate}. 
 We also derive uniform rates and demonstrate minimax optimality for any $q\in\mathbb N$; see in Section \ref{sec:uniform}.

Random forests have \jelenax{been} extensive\jelenax{ly} appli\jelenax{ed to} causal inference, \jelenax{with much of the literature} focus\jelenax{ing} on estimating the conditional average treatment effect (CATE); see \cite{wager2018estimation,athey2019generalized}. Estimating the average treatment effect (ATE), however, poses unique challenges; see Remark \ref{remark:ATE}. We \jelenax{use} ASBF with augmented inverse propensity weighting (AIPW) to estimate ATE. 
\jelenax{In contrast to existing random forest methods for which ATE inferential guarantees would confine feature dimensions to one, our methods advance the field}. They support multi-dimensional features, \jelenax{therefore} significantly broaden\jelenax{ing} the practical application\jelenax{s} for forest-based ATE inference.

\section{Adaptive Split Balancing Forest}\label{sec:balance}

The regression tree models function $m(\cdot)$ by recursively partitioning the feature space $[0,1]^{d}$ into non-overlapping rectangles, generally called leaves or nodes. For any given point $\bx \in [0,1]^d$, a regression tree estimates $m(\bx)$ using the average of responses for those samples in the same leaf as $\bx$: $\mathrm{T}(\bx,\xi) = \sum_{i\in\Isc}\mathbbm{1}_{\{\bX_i \in L(\bx,\xi)\}}Y_i/\sum_{i\in\Isc}\mathbbm{1}_{\{\bX_i \in L(\bx,\xi)\}}$, where $\xi$ denotes all the auxiliary randomness in the tree-growing process and is independent of the samples, $\Xi$ denotes the support of $\xi$, $\Isc\subseteq\{1,\dots,N\}$ is the indices of training samples used for local averaging and possibly depends on $\xi$, and $L(\bx,\xi)$ represents the terminal leaf containing the point $\bx$.

Random forests consider ensembles of regression trees, where the forests' predictions are the average of all the tree predictions. Let $\{\mathrm{T}(\bx,\xi_j), j=1, \dots, B\}$ denote the collection of regression trees in a forest, where $B$ is the number of trees and $\xi_1, \dots, \xi_B\in\Xi$ are i.i.d. auxiliary variables. For any $B\geq1$, random forests estimate the conditional mean as $\widehat{m}(\bx):=B^{-1}\sum_{j=1}^B\mathrm{T}(\bx,\xi_j)=\E_{\xi}[\mathrm{T}(\bx,\xi)],$ where for any function $f(\cdot)$, $\E_{\xi}[f(\bx)]=B^{-1}\sum_{j=1}^Bf(\xi_j)$ denotes the empirical average over the auxiliary variables, and we omit the dependence of such an expectation on $B$ for the sake of notation simplicity. Using the introduced notations, random forests can also be represented as a weighted average of the outcomes:
\begin{align}\label{weight}
 \widehat m (\bx) = \E_{\xi}\left[\sum_{i\in\Isc}\omega_i(\bx,\xi)Y_i\right],\;\;\mbox{where}\;\;\omega_i(\bx,\xi):=\frac{\mathbbm{1}_{\left \{\bX_i\in L(\bx,\xi)\right\}}}{\sum_{l\in\Isc}\mathbbm{1}_{\left\{\bX_l \in L(\bx,\xi)\right\}}}.
\end{align}

To study the estimation \jelenax{rates} of random forests, we consider the following decomposition of IMSE: $\E_{\bx}\left[\mhat(\bx)-m(\bx)\right]^2 \leq 2R_1+2R_2,$ where $R_1:=\E_{\bx}\left[\E_{\xi}\left[\sum_{i\in\Isc}\omega_i(\bx,\xi)\varepsilon_i\right]\right]^2$ is the stochastic error originating from the random noise $\varepsilon_i = Y_i-m(\bX_i)$, and $R_2:=\E_{\bx}\left[\E_{\xi}\left[\sum_{i\in\Isc}\omega_i(\bx,\xi)(m(\bX_i)-m(\bx))\right]\right]^2$ is the approximation error. Let $k$ be the minimum leaf size. Standard techniques lead to $R_1=O_p(1/k)$; see \eqref{thm:balance_eq2} of the Supplement. The control of the remaining approximation error is the key to reaching an optimal overall IMSE.

\subsection{Auxiliary randomness and approximation error}\label{sec:aux}

\jelenax{We focus on the Lipschitz class here, with the H\"older class explored in Section \ref{LLCF_consistency}.} In the following, we illustrate how the auxiliary randomness introduced by the widely-used random feature selection affects the approximation error of tree models.

\begin{assumption}[Lipschitz continuous]\label{cond:lip}
Assume that $m(\cdot)$ satisfies $|m(\bx)-m(\bx')|\leq L_0\|\bx-\bx'\|$ for all $\bx,\bx' \in [0,1]^d$ with some constant $L_0>0$. 
\end{assumption}

For any leaf $L\subseteq[0,1]^d$, \jelenax{let} $\mathrm{diam}(L):=\sup_{\bx,\bx'\in L}\|\bx-\bx'\|$ \jelenax{be} its diameter. Under the Lipschitz condition, the approximation error can be controlled by the leaves' diameters: $R_2\leq L_0^2\E_{\bx}[\E_{\xi}[\mathrm{diam}^2(L(\bx,\xi))]].$ Therefore, it suffices to obtain an upper bound for the diameters. 
Although the Breiman's algorithm \jelenax{selects up to $\mbox{mtry} \leq d$ features at random}, we find it worthwhile to study the special case with $\mbox{mtry}=1$, as seen in centered and median forests. \jelenax{Recall that} in the centered forest, a splitting direction is randomly selected (with probability $1/d$) for each split, and the splitting location is chosen as the center point of the parent node. For the moment, let $B=\infty$, meaning the forest is the ensemble of infinitely many trees. Suppose that each terminal leaf has been split for $\mathcal M$ times, and consider a sequence of $\mathcal M$ consecutive leaves containing $\bx$, with the smallest (terminal) leaf denoted as $L(\bx,\xi)$.

For each $m\leq\mathcal M$ and $j\leq d$, let $\delta_{j,m}(\bx,\xi)=1$ if the $m$-th split is performed along the $j$-th coordinate, and $\delta_{j,m}(\bx,\xi)=0$ otherwise. Here, $\P_\xi$ and $\E_\xi$ represent the corresponding probability measure and the expectation taken with respect to $\xi$, respectively. For any given $j\leq d$, the sequence $(\delta_{j,m}(\bx,\xi))_{m=1}^{\mathcal M}$ is i.i.d., with $\P_\xi(\delta_{j,m}(\bx,\xi)=1)=1/d$ \jelenax{for centered forest}. Let $\mathrm{diam}_j(L(\bx,\xi))$ be the length of the longest segment parallel to the $j$-th axis that is a subset of $L(\bx,\xi)$. Then,
\begin{align}
&\E_{\xi}[\mathrm{diam}^2(L(\bx,\xi))]=\sum_{j=1}^d\E_{\xi}[\mathrm{diam}_j^2(L(\bx,\xi))]=\sum_{j=1}^d\E_{\xi}\left[\prod_{m=1}^{\mathcal M}2^{-2\delta_{j,m}(\bx,\xi)}\right]\nonumber\\
&\qquad=\sum_{j=1}^d\E_{\xi}\left[2^{-2c_j(\bx,\xi)}\right]\overset{(i)}{>}\sum_{j=1}^d2^{-2\E_{\xi}[c_j(\bx,\xi)]}=\sum_{j=1}^d2^{-2\sum_{m=1}^{\mathcal M}1/d}=d2^{-2\mathcal M/d}.\label{bound:diam_lower}
\end{align}
The discrepancy introduced by the strict inequality (i),``Jensen gap'', stems from the variation in the quantity $c_j(\bx,\xi):=\sum_{m=1}^{\mathcal M}\delta_{j,m}(\bx,\xi)$, representing the count of splits along the $j$-th direction induced by the auxiliary randomness $\xi$. This discrepancy results in a relatively large approximation error $R_2\asymp (1-1/(2d))^{2\mathcal M}$ (up to logarithmic terms). By selecting an optimal $\mathcal M$ (or $k$) that strikes a balance between stochastic and approximation errors, the centered forest yields an overall IMSE \jelenax{of the size} $N^\frac{-2\log(1-1/(2d))}{2\log(1-1/(2d))-\log (2)}$ -- which is \emph{not} minimax optimal for Lipschitz functions as long as $d>1$, as depicted in Figure \ref{fig:rate}.

The sub-optimality stems from the forests' excessive reliance on auxiliary randomness, leading to a significant number of redundant and inefficient splits. When splitting directions are chosen randomly, there is a non-negligible probability that some directions are overly selected while others are scarcely chosen. Consequently, terminal leaves tend to be excessively wide in some directions and overly narrow in others. For centered and median forests, this long and narrow leaf structure is solely due to auxiliary randomness and is unrelated to the data. This prevalence of long and narrow leaves increases the expected leaf diameter, resulting in a significant approximation error that deviates from the optimal one.
Averaging across the trees stabilizes the forest, but to attain minimax optimal rates, controlling the number of splits at the \emph{tree level} is essential to mitigate the Jensen gap, as shown in \eqref{bound:diam_lower}.

Instead of selecting splitting directions randomly, we adopt a \jelenax{permutation-based} and more balanced approach. By ensuring a sufficiently large $c_j(\bx,\xi)$ for each $j \leq d$ and reducing dependence on auxiliary randomness $\xi$, we achieve an approximation error of $O((N/k)^{-2/d})$, as shown in Lemma \ref{lem:balance diam}. This upper bound mirrors the right-hand side of \eqref{bound:diam_lower} (whereas centered or median forests cannot improve the left-hand side) when $\mathcal{M} \approx \log_2(N/k)$, resulting in a minimax optimal rate for the overall IMSE when $\mathcal{M}$ (or $k$) is chosen to balance the approximation and the stochastic error. 

\subsection{Adaptive Split Balancing}\label{sec:cyclic}

In order to reduce the large approximation error caused by auxiliary randomness, we propose a \jelenax{permutation-based splitting method}. Each time a leaf is split, we randomly select a direction from one of the sides that has been \jelenax{\it split the least number of times}. \jelenax{Moreover}, the splitting directions are chosen in a balanced fashion -- we have to split once in each direction \jelenax{on any given tree path} before proceeding to the next round.

\begin{algorithm}[h!] \caption{Adaptive split balancing forests}\label{alg:balance_RF}
\begin{algorithmic}[1]
\Require Observations $\S_N=(\bX_i,Y_i)_{i=1}^N$, with $B\geq1$, $\alpha\in(0,0.5]$, $w\in(0,1]$, and $k\leq\lfloor wN\rfloor$.
\For{$b=1,\dots,B$}
\State Divide $\S_N$ into disjoint $\S_\Isc^{(b)}$ and $\S_\Jsc^{(b)}$ with $|\Isc^{(b)}|=\lfloor wN\rfloor$ and $|\Jsc^{(b)}|=N-\lfloor wN\rfloor$.
\Repeat { For each current node $L\subseteq[0,1]^d$:}
 \State Select direction $j$ along which the node has been split the least number of times. 
\State Partition along $j$-th direction to minimize the mean squared error (MSE) on $\S_\Jsc^{(b)}$:
\begin{align}\label{rule:cyclic}
\sum_{i\in\Jsc^{(b)}}(Y_i-\Ybar_1)^2\mathbbm1\{\bX_i\in L_1\}+\sum_{i\in\Jsc^{(b)}}(Y_i-\Ybar_2)^2\mathbbm1\{\bX_i\in L_2\},\\ \label{rule:alpha}
\mbox{ensuring } \#\{i\in\Isc^{(b)}:\bX_i\in L_l\}\geq\alpha\#\{i\in\Isc^{(b)}:\bX_i\in L\},\; l=1,2.
\end{align}
 \hskip 34pt Here, $\Ybar_1$ and $\Ybar_2$ are the average responses within the child nodes $L_1$ and $L_2$. 
\Until{each node contains $k$ to $2k-1$ samples $\S_\Isc^{(b)}$.}
\State Estimate $m(\mathbf{x})$ with the $b$-th adaptive split balancing tree:
\begin{equation}\label{eq:ASBT}
T(\bx,\xi_b):=\frac{\sum_{i\in\Isc^{(b)}}\mathbbm{1}_{\left\{\bX_i \in L(\bx,\xi_b)\right\}}Y_i}{\sum_{i\in\Isc^{(b)}}\mathbbm{1}_{\left\{\bX_i \in L(\bx,\xi_b)\right\}}}.
\end{equation}
\EndFor\\
\Return Adaptive split balancing forest estimate $\mhat(\bx):=B^{-1}\sum_{b=1}^BT(\bx,\xi_b)$.
\end{algorithmic}
\end{algorithm}

 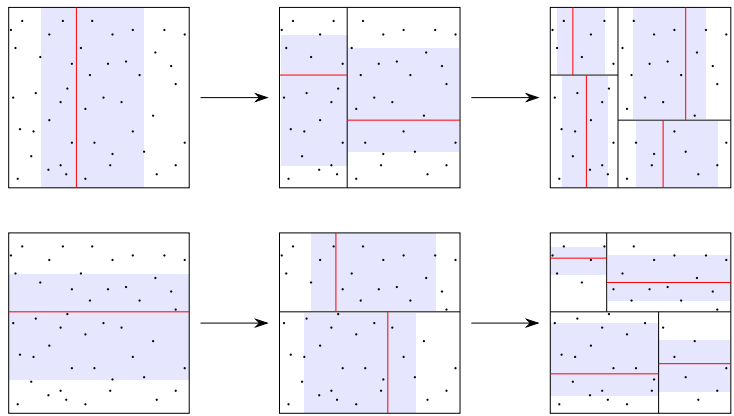
\begin{figure}[h!]
 \scriptsize
 \centering
 \scalebox{0.6}{
 \begin{tikzpicture}
 		\path
		(6,-1.08) node[rectangle,fill=blue!10, minimum width=39.7mm,minimum height=23.5mm](21) {}
		(6,-1) node[rectangle,draw,minimum width=40mm,minimum height=40mm](2) {}
		(5.85,4) node[rectangle,fill=blue!10,minimum width=22.8mm,minimum height=39.7mm](31) {}
		(6,4) node[rectangle,draw,minimum width=40mm,minimum height=40mm](3) {}
		(12.08,0.125) node[rectangle,fill=blue!10,minimum width=27.5mm,minimum height=17.1mm](41) {}
 (11.78,-1.885) node[rectangle,fill=blue!10,minimum width=24.7mm,minimum height=22.1mm](42) {}		
		(12,-1) node[rectangle,draw,minimum width=40mm,minimum height=40mm](4) {}
	 (10.77,3.94) node[rectangle,fill=blue!10,minimum width=14.75mm,minimum height=29mm](51) {}
 (12.74,3.94) node[rectangle,fill=blue!10,minimum width=24.68mm,minimum height=23mm](52) {}
		(12, 4) node[rectangle,draw,minimum width=40mm,minimum height=40mm](5) {}
		(16.77,3.25) node[rectangle,fill=blue!10,minimum width=10mm,minimum height=24.5mm](61) {}
(16.68,5.25) node[rectangle,fill=blue!10,minimum width=10.5mm,minimum height=14.5mm](62) {}
(18.65,4.75) node[rectangle,fill=blue!10,minimum width=16mm,minimum height=24.5mm](63) {}
(18.8,2.75) node[rectangle,fill=blue!10,minimum width=18mm,minimum height=14.5mm](64) {}
		(18,4) node[rectangle,draw,minimum width=40mm,minimum height=40mm](6) {}
		(16.63,0.38) node[rectangle,fill=blue!10,minimum width=12mm,minimum height=6mm](71) {}
(18.625,0) node[rectangle,fill=blue!10,minimum width=27mm,minimum height=10mm](72) {}
(19.18,-1.95) node[rectangle,fill=blue!10,minimum width=16mm,minimum height=11.5mm](73) {}
(17.21,-1.8) node[rectangle,fill=blue!10,minimum width=24mm,minimum height=16mm](74) {}
		(18,-1) node[rectangle,draw,minimum width=40mm,minimum height=40mm](7) {};
		
		\path (16.1,4) node[circle, fill, inner sep=0.5]{}
		(16.2,2.2) node[circle, fill, inner sep=0.5]{}
		(16.25,3.3) node[circle, fill, inner sep=0.5]{}
		(16.5,2.7) node[circle, fill, inner sep=0.5]{}
		(16.55,3.25) node[circle, fill, inner sep=0.5]{}
		(16.6,4.1) node[circle, fill, inner sep=0.5]{}
		(16.88,2.4) node[circle, fill, inner sep=0.5]{}
		(16.9,3.5) node[circle, fill, inner sep=0.5]{}
		(17.15,3.9) node[circle, fill, inner sep=0.5]{}
		(17.275,2.3) node[circle, fill, inner sep=0.5]{}
 (17.3,4.2) node[circle, fill, inner sep=0.5]{}
		(17.4,3) node[circle, fill, inner sep=0.5]{}
		(16.15,5.1) node[circle, fill, inner sep=0.5]{}
		(16.3,5.7) node[circle, fill, inner sep=0.5]{}
		(17.4,4.75) node[circle, fill, inner sep=0.5]{}
		(17.2,5.7) node[circle, fill, inner sep=0.5]{}
		(16.7,4.9) node[circle, fill, inner sep=0.5]{}
		(16.9,5.4) node[circle, fill, inner sep=0.5]{}
		(16.05,5.5) node[circle, fill, inner sep=0.5]{}
		(17.7,3.75) node[circle, fill, inner sep=0.5]{}
		(17.6,5.1) node[circle, fill, inner sep=0.5]{}
		(17.8,4.5) node[circle, fill, inner sep=0.5]{}
		(17.85,5.7) node[circle, fill, inner sep=0.5]{}
	 	(18.1,4) node[circle, fill, inner sep=0.5]{}
	 	(18.2,4.75) node[circle, fill, inner sep=0.5]{}
	 (18.3,5.4) node[circle, fill, inner sep=0.5]{}
	 (18.5,3.9) node[circle, fill, inner sep=0.5]{}
	 (18.6,4.8) node[circle, fill, inner sep=0.5]{}
	 (18.75,5.5) node[circle, fill, inner sep=0.5]{}
	 (18.9,4.5) node[circle, fill, inner sep=0.5]{}
	 (19.25,5) node[circle, fill, inner sep=0.5]{}
	 (19.2,3.6) node[circle, fill, inner sep=0.5]{}
 	(19.45,5.5) node[circle, fill, inner sep=0.5]{}
		(19.6,4.7) node[circle, fill, inner sep=0.5]{}
		(19.7,4.3) node[circle, fill, inner sep=0.5]{}
		(19.9,5.4) node[circle, fill, inner sep=0.5]{}
	 (17.7,2.2) node[circle, fill, inner sep=0.5]{}
		(17.9,3) node[circle, fill, inner sep=0.5]{}
		(17.15,2.5) node[circle, fill, inner sep=0.5]{}
		(18.25,2.5) node[circle, fill, inner sep=0.5]{}
		(18.3,2.75) node[circle, fill, inner sep=0.5]{}
		(18.75,3.25) node[circle, fill, inner sep=0.5]{}
		(19,2.8) node[circle, fill, inner sep=0.5]{}
		(19.28,2.3) node[circle, fill, inner sep=0.5]{}
		(19.7,2.5) node[circle, fill, inner sep=0.5]{}
		(19.9,3) node[circle, fill, inner sep=0.5]{}; 
		
		\path (16.1,-1) node[circle, fill, inner sep=0.5]{}
		(16.2,-2.8) node[circle, fill, inner sep=0.5]{}
		(16.25,-1.7) node[circle, fill, inner sep=0.5]{}
		(16.5,-2.3) node[circle, fill, inner sep=0.5]{}
		(16.55,-1.75) node[circle, fill, inner sep=0.5]{}
		(16.6,-0.9) node[circle, fill, inner sep=0.5]{}
		(16.88,-2.6) node[circle, fill, inner sep=0.5]{}
		(16.9,-1.5) node[circle, fill, inner sep=0.5]{}
		(17.15,-1.1) node[circle, fill, inner sep=0.5]{}
		(17.275,-2.7) node[circle, fill, inner sep=0.5]{}
		(17.3,-0.8) node[circle, fill, inner sep=0.5]{}
		(17.4,-2) node[circle, fill, inner sep=0.5]{}
		(16.15,0.1) node[circle, fill, inner sep=0.5]{}
		(16.3,0.7) node[circle, fill, inner sep=0.5]{}
		(17.4,-0.25) node[circle, fill, inner sep=0.5]{}
		(17.2,0.7) node[circle, fill, inner sep=0.5]{}
		(16.7,-0.1) node[circle, fill, inner sep=0.5]{}
		(16.9,0.4) node[circle, fill, inner sep=0.5]{}
		(16.05,0.5) node[circle, fill, inner sep=0.5]{}
		(17.7,-1.25) node[circle, fill, inner sep=0.5]{}
		(17.6,0.1) node[circle, fill, inner sep=0.5]{}
		(17.8,-0.5) node[circle, fill, inner sep=0.5]{}
		(17.85,0.7) node[circle, fill, inner sep=0.5]{}
		(18.1,-1) node[circle, fill, inner sep=0.5]{}
		(18.2,-0.25) node[circle, fill, inner sep=0.5]{}
		(18.3,0.4) node[circle, fill, inner sep=0.5]{}
		(18.5,-1.1) node[circle, fill, inner sep=0.5]{}
		(18.6,-0.2) node[circle, fill, inner sep=0.5]{}
		(18.75,0.5) node[circle, fill, inner sep=0.5]{}
		(18.9,-0.5) node[circle, fill, inner sep=0.5]{}
		(19.25,0) node[circle, fill, inner sep=0.5]{}
		(19.2,-1.4) node[circle, fill, inner sep=0.5]{}
		(19.45,0.5) node[circle, fill, inner sep=0.5]{}
		(19.6,-0.3) node[circle, fill, inner sep=0.5]{}
		(19.7,-0.7) node[circle, fill, inner sep=0.5]{}
		(19.9,0.4) node[circle, fill, inner sep=0.5]{}
		(17.7,-2.8) node[circle, fill, inner sep=0.5]{}
		(17.9,-2) node[circle, fill, inner sep=0.5]{}
		(17.15,-2.5) node[circle, fill, inner sep=0.5]{}
		(18.25,-2.5) node[circle, fill, inner sep=0.5]{}
		(18.3,-2.25) node[circle, fill, inner sep=0.5]{}
		(18.75,-1.75) node[circle, fill, inner sep=0.5]{}
		(19,-2.2) node[circle, fill, inner sep=0.5]{}
		(19.28,-2.7) node[circle, fill, inner sep=0.5]{}
		(19.7,-2.5) node[circle, fill, inner sep=0.5]{}
		(19.9,-2) node[circle, fill, inner sep=0.5]{};

		\path (10.1,4) node[circle, fill, inner sep=0.5]{}
		(10.2,2.2) node[circle, fill, inner sep=0.5]{}
		(10.25,3.3) node[circle, fill, inner sep=0.5]{}
		(10.5,2.7) node[circle, fill, inner sep=0.5]{}
		(10.55,3.25) node[circle, fill, inner sep=0.5]{}
		(10.6,4.1) node[circle, fill, inner sep=0.5]{}
		(10.88,2.4) node[circle, fill, inner sep=0.5]{}
		(10.9,3.5) node[circle, fill, inner sep=0.5]{}
		(11.15,3.9) node[circle, fill, inner sep=0.5]{}
		(11.275,2.3) node[circle, fill, inner sep=0.5]{}
		(11.3,4.2) node[circle, fill, inner sep=0.5]{}
		(11.4,3) node[circle, fill, inner sep=0.5]{}
		(10.15,5.1) node[circle, fill, inner sep=0.5]{}
		(10.3,5.7) node[circle, fill, inner sep=0.5]{}
		(11.4,4.75) node[circle, fill, inner sep=0.5]{}
		(11.2,5.7) node[circle, fill, inner sep=0.5]{}
		(10.7,4.9) node[circle, fill, inner sep=0.5]{}
		(10.9,5.4) node[circle, fill, inner sep=0.5]{}
		(10.05,5.5) node[circle, fill, inner sep=0.5]{}
		(11.7,3.75) node[circle, fill, inner sep=0.5]{}
		(11.6,5.1) node[circle, fill, inner sep=0.5]{}
		(11.8,4.5) node[circle, fill, inner sep=0.5]{}
		(11.85,5.7) node[circle, fill, inner sep=0.5]{}
		(12.1,4) node[circle, fill, inner sep=0.5]{}
		(12.2,4.75) node[circle, fill, inner sep=0.5]{}
		(12.3,5.4) node[circle, fill, inner sep=0.5]{}
		(12.5,3.9) node[circle, fill, inner sep=0.5]{}
		(12.6,4.8) node[circle, fill, inner sep=0.5]{}
		(12.75,5.5) node[circle, fill, inner sep=0.5]{}
		(12.9,4.5) node[circle, fill, inner sep=0.5]{}
		(13.25,5) node[circle, fill, inner sep=0.5]{}
		(13.2,3.6) node[circle, fill, inner sep=0.5]{}
		(13.45,5.5) node[circle, fill, inner sep=0.5]{}
		(13.6,4.7) node[circle, fill, inner sep=0.5]{}
		(13.7,4.3) node[circle, fill, inner sep=0.5]{}
		(13.9,5.4) node[circle, fill, inner sep=0.5]{}
		(11.7,2.2) node[circle, fill, inner sep=0.5]{}
		(11.9,3) node[circle, fill, inner sep=0.5]{}
		(11.15,2.5) node[circle, fill, inner sep=0.5]{}
		(12.25,2.5) node[circle, fill, inner sep=0.5]{}
		(12.3,2.75) node[circle, fill, inner sep=0.5]{}
		(12.75,3.25) node[circle, fill, inner sep=0.5]{}
		(13,2.8) node[circle, fill, inner sep=0.5]{}
		(13.28,2.3) node[circle, fill, inner sep=0.5]{}
		(13.7,2.5) node[circle, fill, inner sep=0.5]{}
		(13.9,3) node[circle, fill, inner sep=0.5]{};
		
		\path (10.1,-1) node[circle, fill, inner sep=0.5]{}
		(10.2,-2.8) node[circle, fill, inner sep=0.5]{}
		(10.25,-1.7) node[circle, fill, inner sep=0.5]{}
		(10.5,-2.3) node[circle, fill, inner sep=0.5]{}
		(10.55,-1.75) node[circle, fill, inner sep=0.5]{}
		(10.6,-0.9) node[circle, fill, inner sep=0.5]{}
		(10.88,-2.6) node[circle, fill, inner sep=0.5]{}
		(10.9,-1.5) node[circle, fill, inner sep=0.5]{}
		(11.15,-1.1) node[circle, fill, inner sep=0.5]{}
		(11.275,-2.7) node[circle, fill, inner sep=0.5]{}
		(11.3,-0.8) node[circle, fill, inner sep=0.5]{}
		(11.4,-2) node[circle, fill, inner sep=0.5]{}
		(10.15,0.1) node[circle, fill, inner sep=0.5]{}
		(10.3,0.7) node[circle, fill, inner sep=0.5]{}
		(11.4,-0.25) node[circle, fill, inner sep=0.5]{}
		(11.2,0.7) node[circle, fill, inner sep=0.5]{}
		(10.7,-0.1) node[circle, fill, inner sep=0.5]{}
		(10.9,0.4) node[circle, fill, inner sep=0.5]{}
		(10.05,0.5) node[circle, fill, inner sep=0.5]{}
		(11.7,-1.25) node[circle, fill, inner sep=0.5]{}
		(11.6,0.1) node[circle, fill, inner sep=0.5]{}
		(11.8,-0.5) node[circle, fill, inner sep=0.5]{}
		(11.85,0.7) node[circle, fill, inner sep=0.5]{}
		(12.1,-1) node[circle, fill, inner sep=0.5]{}
		(12.2,-0.25) node[circle, fill, inner sep=0.5]{}
		(12.3,0.4) node[circle, fill, inner sep=0.5]{}
		(12.5,-1.1) node[circle, fill, inner sep=0.5]{}
		(12.6,-0.2) node[circle, fill, inner sep=0.5]{}
		(12.75,0.5) node[circle, fill, inner sep=0.5]{}
		(12.9,-0.5) node[circle, fill, inner sep=0.5]{}
		(13.25,0) node[circle, fill, inner sep=0.5]{}
		(13.2,-1.4) node[circle, fill, inner sep=0.5]{}
		(13.45,0.5) node[circle, fill, inner sep=0.5]{}
		(13.6,-0.3) node[circle, fill, inner sep=0.5]{}
		(13.7,-0.7) node[circle, fill, inner sep=0.5]{}
		(13.9,0.4) node[circle, fill, inner sep=0.5]{}
		(11.7,-2.8) node[circle, fill, inner sep=0.5]{}
		(11.9,-2) node[circle, fill, inner sep=0.5]{}
		(11.15,-2.5) node[circle, fill, inner sep=0.5]{}
		(12.25,-2.5) node[circle, fill, inner sep=0.5]{}
		(12.3,-2.25) node[circle, fill, inner sep=0.5]{}
		(12.75,-1.75) node[circle, fill, inner sep=0.5]{}
		(13,-2.2) node[circle, fill, inner sep=0.5]{}
		(13.28,-2.7) node[circle, fill, inner sep=0.5]{}
		(13.7,-2.5) node[circle, fill, inner sep=0.5]{}
		(13.9,-2) node[circle, fill, inner sep=0.5]{};

		\path (4.1,4) node[circle, fill, inner sep=0.5]{}
		(4.2,2.2) node[circle, fill, inner sep=0.5]{}
		(4.25,3.3) node[circle, fill, inner sep=0.5]{}
		(4.5,2.7) node[circle, fill, inner sep=0.5]{}
		(4.55,3.25) node[circle, fill, inner sep=0.5]{}
		(4.6,4.1) node[circle, fill, inner sep=0.5]{}
		(4.88,2.4) node[circle, fill, inner sep=0.5]{}
		(4.9,3.5) node[circle, fill, inner sep=0.5]{}
		(5.15,3.9) node[circle, fill, inner sep=0.5]{}
		(5.275,2.3) node[circle, fill, inner sep=0.5]{}
		(5.3,4.2) node[circle, fill, inner sep=0.5]{}
		(5.4,3) node[circle, fill, inner sep=0.5]{}
		(4.15,5.1) node[circle, fill, inner sep=0.5]{}
		(4.3,5.7) node[circle, fill, inner sep=0.5]{}
		(5.4,4.75) node[circle, fill, inner sep=0.5]{}
		(5.2,5.7) node[circle, fill, inner sep=0.5]{}
		(4.7,4.9) node[circle, fill, inner sep=0.5]{}
		(4.9,5.4) node[circle, fill, inner sep=0.5]{}
		(4.05,5.5) node[circle, fill, inner sep=0.5]{}
		(5.7,3.75) node[circle, fill, inner sep=0.5]{}
		(5.6,5.1) node[circle, fill, inner sep=0.5]{}
		(5.8,4.5) node[circle, fill, inner sep=0.5]{}
		(5.85,5.7) node[circle, fill, inner sep=0.5]{}
		(6.1,4) node[circle, fill, inner sep=0.5]{}
		(6.2,4.75) node[circle, fill, inner sep=0.5]{}
		(6.3,5.4) node[circle, fill, inner sep=0.5]{}
		(6.5,3.9) node[circle, fill, inner sep=0.5]{}
		(6.6,4.8) node[circle, fill, inner sep=0.5]{}
		(6.75,5.5) node[circle, fill, inner sep=0.5]{}
		(6.9,4.5) node[circle, fill, inner sep=0.5]{}
		(7.25,5) node[circle, fill, inner sep=0.5]{}
		(7.2,3.6) node[circle, fill, inner sep=0.5]{}
		(7.45,5.5) node[circle, fill, inner sep=0.5]{}
		(7.6,4.7) node[circle, fill, inner sep=0.5]{}
		(7.7,4.3) node[circle, fill, inner sep=0.5]{}
		(7.9,5.4) node[circle, fill, inner sep=0.5]{}
		(5.7,2.2) node[circle, fill, inner sep=0.5]{}
		(5.9,3) node[circle, fill, inner sep=0.5]{}
		(5.15,2.5) node[circle, fill, inner sep=0.5]{}
		(6.25,2.5) node[circle, fill, inner sep=0.5]{}
		(6.3,2.75) node[circle, fill, inner sep=0.5]{}
		(6.75,3.25) node[circle, fill, inner sep=0.5]{}
		(7,2.8) node[circle, fill, inner sep=0.5]{}
		(7.28,2.3) node[circle, fill, inner sep=0.5]{}
		(7.7,2.5) node[circle, fill, inner sep=0.5]{}
		(7.9,3) node[circle, fill, inner sep=0.5]{};
		
		\path (4.1,-1) node[circle, fill, inner sep=0.5]{}
		(4.2,-2.8) node[circle, fill, inner sep=0.5]{}
		(4.25,-1.7) node[circle, fill, inner sep=0.5]{}
		(4.5,-2.3) node[circle, fill, inner sep=0.5]{}
		(4.55,-1.75) node[circle, fill, inner sep=0.5]{}
		(4.6,-0.9) node[circle, fill, inner sep=0.5]{}
		(4.88,-2.6) node[circle, fill, inner sep=0.5]{}
		(4.9,-1.5) node[circle, fill, inner sep=0.5]{}
		(5.15,-1.1) node[circle, fill, inner sep=0.5]{}
		(5.275,-2.7) node[circle, fill, inner sep=0.5]{}
		(5.3,-0.8) node[circle, fill, inner sep=0.5]{}
		(5.4,-2) node[circle, fill, inner sep=0.5]{}
		(4.15,0.1) node[circle, fill, inner sep=0.5]{}
		(4.3,0.7) node[circle, fill, inner sep=0.5]{}
		(5.4,-0.25) node[circle, fill, inner sep=0.5]{}
		(5.2,0.7) node[circle, fill, inner sep=0.5]{}
		(4.7,-0.1) node[circle, fill, inner sep=0.5]{}
		(4.9,0.4) node[circle, fill, inner sep=0.5]{}
		(4.05,0.5) node[circle, fill, inner sep=0.5]{}
		(5.7,-1.25) node[circle, fill, inner sep=0.5]{}
		(5.6,0.1) node[circle, fill, inner sep=0.5]{}
		(5.8,-0.5) node[circle, fill, inner sep=0.5]{}
		(5.85,0.7) node[circle, fill, inner sep=0.5]{}
		(6.1,-1) node[circle, fill, inner sep=0.5]{}
		(6.2,-0.25) node[circle, fill, inner sep=0.5]{}
		(6.3,0.4) node[circle, fill, inner sep=0.5]{}
		(6.5,-1.1) node[circle, fill, inner sep=0.5]{}
		(6.6,-0.2) node[circle, fill, inner sep=0.5]{}
		(6.75,0.5) node[circle, fill, inner sep=0.5]{}
		(6.9,-0.5) node[circle, fill, inner sep=0.5]{}
		(7.25,0) node[circle, fill, inner sep=0.5]{}
		(7.2,-1.4) node[circle, fill, inner sep=0.5]{}
		(7.45,0.5) node[circle, fill, inner sep=0.5]{}
		(7.6,-0.3) node[circle, fill, inner sep=0.5]{}
		(7.7,-0.7) node[circle, fill, inner sep=0.5]{}
		(7.9,0.4) node[circle, fill, inner sep=0.5]{}
		(5.7,-2.8) node[circle, fill, inner sep=0.5]{}
		(5.9,-2) node[circle, fill, inner sep=0.5]{}
		(5.15,-2.5) node[circle, fill, inner sep=0.5]{}
		(6.25,-2.5) node[circle, fill, inner sep=0.5]{}
		(6.3,-2.25) node[circle, fill, inner sep=0.5]{}
		(6.75,-1.75) node[circle, fill, inner sep=0.5]{}
		(7,-2.2) node[circle, fill, inner sep=0.5]{}
		(7.28,-2.7) node[circle, fill, inner sep=0.5]{}
		(7.7,-2.5) node[circle, fill, inner sep=0.5]{}
		(7.9,-2) node[circle, fill, inner sep=0.5]{};

		\draw[-, red] (5.5,6) -- (5.5, 2);
		\draw[-, red] (4,-0.75) -- (8, -0.75);
		\draw[-, black] (11.5,6) -- (11.5, 2);
		\draw[-, black] (10,-0.75) -- (14, -0.75);
		\draw[-, red] (10,4.5) -- (11.5, 4.5);
		\draw[-, red] (11.5,3.5) -- (14, 3.5);
		\draw[-, red] (11.25,1) -- (11.25, -0.75);
		\draw[-, red] (12.4,-0.75) -- (12.4, -3);
		
		\draw[-, black] (17.5,6) -- (17.5, 2);
		\draw[-, black] (16,-0.75) -- (20, -0.75);
		\draw[-, black] (16,4.5) -- (17.5, 4.5);
		\draw[-, black] (17.5,3.5) -- (20, 3.5);
		\draw[-, black] (17.25,1) -- (17.25, -0.75);
		\draw[-, black] (18.4,-0.75) -- (18.4, -3);
	 \draw[-, red] (16.8,2) -- (16.8, 4.5);
	 \draw[-, red] (16.5,4.5) -- (16.5, 6);
	 \draw[-, red] (18.5,2) -- (18.5, 3.5);
	 \draw[-, red] (19,3.5) -- (19, 6);
	 \draw[-, red] (16,0.44) -- (17.25, 0.44);
	 \draw[-, red] (17.25,-0.1) -- (20, -0.1);
	 \draw[-, red] (16,-2.125) -- (18.4, -2.125);
	 \draw[-, red] (18.4,-1.9) -- (20, -1.9);
		
		\draw[-{Stealth[length=3mm, width=2mm]}] (8.25,4) -- (9.75, 4);
		\draw[-{Stealth[length=3mm, width=2mm]}] (8.25,-1) -- (9.75, -1);
		\draw[-{Stealth[length=3mm, width=2mm]}] (14.25,4) -- (15.75, 4);
 \draw[-{Stealth[length=3mm, width=2mm]}] (14.25,-1) -- (15.75, -1);
	\end{tikzpicture}
	}
\caption{Illustrations of the balanced tree-growing process with $d = 2$. The purple shading represents the splitting range that satisfies the $\alpha$-fraction constraint. The red lines indicate the chosen splits.}\label{fig:eg}
\end{figure}

To further enhance the practical performance, \jelenax{we employ} data-dependent splitting rules \jelenax{ of the ``honest forests'' approach that are dependent} on both $\bX_i$ and $Y_i$. This flexibility is particularly valuable when the local smoothness level varies in different directions and locations. \jelenax{F}or each tree, \jelenax{we} partition the samples into two sets, denoted by $\Isc$ and $\Jsc$. Additionally, we impose constraints on the child node $\alpha$-fraction and terminal leaf size $k$. \jelenax{W}ith $\alpha \in (0, 0.5]$ and $k \in \mathbb{N}$, we require the following conditions to hold: (a) each child node contains at least an $\alpha$-fraction of observations within the parent node, and (b) the number of observations within terminal leaves is between $k$ and $2k-1$. The splitting locations are then determined to minimize the empirical mean squared error (or maximize the impurity gain) within each parent node, selected from the set of points satisfying the above conditions. For more details, refer to Algorithm \ref{alg:balance_RF} and an illustration in Figure \ref{fig:eg}.

\begin{algorithm} [h!] \caption{Sparse adaptive split balancing forests}\label{alg:balance_GRF}
\begin{algorithmic}[1]
\Require Observations $\S_N=(\bX_i,Y_i)_{i=1}^N$, with $B\geq1$, $\alpha\in(0,0.5]$, $w\in(0,1]$, $k\leq\lfloor wN\rfloor$, and $\mbox{mtry}\in\{1,\dots,d\}$.
\For{$b=1,\dots,B$}
 \State Divide $\S_N$ into disjoint $\S_\Isc^{(b)}$ and $\S_\Jsc^{(b)}$ with $|\Isc^{(b)}|=\lfloor wN\rfloor$ and $|\Jsc^{(b)}|=N-\lfloor wN\rfloor$.
\State Create a collection of index sets, $\mathcal Q([0,1]^d)=\{Q_1, \dots, Q_d\}$, with each $Q_j \subset \{1, \dots, d\}$ containing \textit{mtry} directions while ensuring that each direction $1, \dots, d$ appears in exactly \textit{mtry} of the sets $Q_1, \dots, Q_d$.

\Repeat { For each current node $L\subseteq[0,1]^d$:}

\State Randomly select a set $Q\in\mathcal Q(L)$.

\State Partition along $j \in Q$-th direction to minimize the MSE as in \eqref{rule:cyclic} and \eqref{rule:alpha}.

\If{$\#\mathcal{Q}(L) > 1$}
 \State $\mathcal{Q}(L_1) = \mathcal{Q}(L_2) = \mathcal{Q}(L) \setminus \{Q\}$.
\Else
 \State Randomly reinitialize $\mathcal{Q}(L_1)$ and $\mathcal{Q}(L_2)$ as in Steps 3.
\EndIf
\Until{each current node contains $k$ to $2k-1$ samples $\S_\Isc^{(b)}$.}
\State The $b$-th sparse adaptive split balancing tree estimates $m(\bx)$ as in \eqref{eq:ASBT}.
\EndFor\\
\Return Sparse adaptive split balancing forest as the average of $B$ trees.
\end{algorithmic}
\end{algorithm}

Any $\alpha$ controls the balance in leaf lengths, and any $\alpha > 0$ prevents greedy splits near the endpoints of the parent node. For any $\alpha < 0.5$, \jelenax{while} the terminal leaves may \jelenax{still vary in width}, \jelenax{they will be more balanced than before}, hence the term ``adaptive split-balancing.'' Unlike centered and median forests, this time, the long and narrow leaf structure depends on the data. This distinction sets it apart from methods with data-independent splitting rules (e.g., \cite{mourtada2020minimax, o2024minimax, gao2022towards}), \jelenax{allowing} us to improve empirical performance, \jelenax{particularly} when covariates have distinct local effects on the outcome.

While Algorithm \ref{alg:balance_RF} selects splitting points data-dependently and can adaptively learn local smoothness levels for different directions, it still encounters challenges in the presence of certain sparse structures. Notably, even if some covariates are entirely independent of the outcome, Algorithm \ref{alg:balance_RF} may still make splits along such directions. These splits are redundant, as they do not contribute to reducing the approximation error but increase the stochastic error, \jelenax{due to the smaller sample sizes in the resulting child nodes}.

To address sparse situations, we introduce a tuning parameter, $\text{mtry}\in\{1,\dots,d\}$, representing the number of candidate directions for each split. Unlike existing methods, we select candidate directions in a balanced fashion instead of randomly. In each round, we initiate a collection of $d$ index sets, each containing $\text{mtry}$ distinct directions, ensuring {\it each direction appears in exactly $\text{mtry}$ sets}. At each split, we {\it randomly select one unused index set for the current round}. Alternatively, we can achieve the same result by shuffling the directions before each round and setting the index sets as $\{1,2,\dots,\text{mtry}\}$, $\{2,3,\dots,\text{mtry}+1\}$, etc., in a random order; see Figure \ref{fig:balance_GRF}. The selected index set provides the candidate splitting directions for the current node. Details are provided in Algorithm \ref{alg:balance_GRF}.

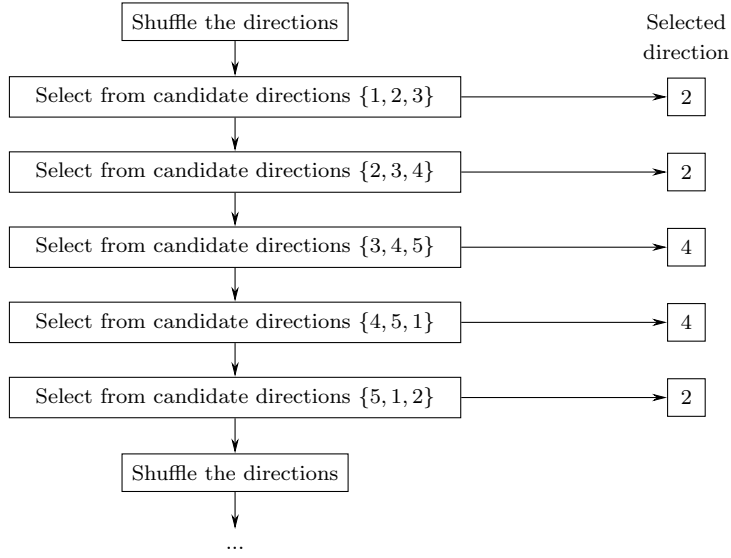
\begin{figure}[h!]
\vspace{1em}
	\centering
	\begin{tikzpicture}
		\path
		(0,0) node[rectangle,draw,minimum width=30mm,minimum height=5mm](A) {\scriptsize Shuffle the directions}
		(0,-1) node[rectangle,draw,minimum width=60mm,minimum height=5mm](A1) {\scriptsize Select from candidate directions $\{1,2,3\}$}
		(6,-1) node[rectangle,draw,minimum width=5mm,minimum height=5mm](A2) {\scriptsize $2$}
		(0,-2) node[rectangle,draw,minimum width=60mm,minimum height=5mm](B1) {\scriptsize Select from candidate directions $\{2,3,4\}$}
		(6,-2) node[rectangle,draw,minimum width=5mm,minimum height=5mm](B2) {\scriptsize $2$}
		(0,-3) node[rectangle,draw,minimum width=60mm,minimum height=5mm](C1) {\scriptsize Select from candidate directions $\{3,4,5\}$}
		(6,-3) node[rectangle,draw,minimum width=5mm,minimum height=5mm](C2) {\scriptsize $4$}
		(0,-4) node[rectangle,draw,minimum width=60mm,minimum height=5mm](D1) {\scriptsize Select from candidate directions $\{4,5,1\}$}
		(6,-4) node[rectangle,draw,minimum width=5mm,minimum height=5mm](D2) {\scriptsize $4$}
		(0,-5) node[rectangle,draw,minimum width=60mm,minimum height=5mm](E1) {\scriptsize Select from candidate directions $\{5,1,2\}$}
		(6,-5) node[rectangle,draw,minimum width=5mm,minimum height=5mm](E2) {\scriptsize $2$}
		(0,-6) node[rectangle,draw,minimum width=30mm,minimum height=5mm](F) {\scriptsize Shuffle the directions}
		(0,-7) node{\scriptsize $...$}
		(6,0) node {\scriptsize Selected }
		(6,-0.4) node {\scriptsize direction}
		;
		\draw[-{Stealth[length=2mm, width=1mm]}] (A) -- (A1);
		\draw[-{Stealth[length=2mm, width=1mm]}] (A1) -- (B1);
		\draw[-{Stealth[length=2mm, width=1mm]}] (B1) -- (C1);
		\draw[-{Stealth[length=2mm, width=1mm]}] (C1) -- (D1);
		\draw[-{Stealth[length=2mm, width=1mm]}] (D1) -- (E1);
		\draw[-{Stealth[length=2mm, width=1mm]}] (E1) -- (F);
		\draw[-{Stealth[length=2mm, width=1mm]}] (F) -- (0,-6.75);
		\draw[-{Stealth[length=2mm, width=1mm]}] (A1) -- (A2);
		\draw[-{Stealth[length=2mm, width=1mm]}] (B1) -- (B2);
		\draw[-{Stealth[length=2mm, width=1mm]}] (C1) -- (C2);
		\draw[-{Stealth[length=2mm, width=1mm]}] (D1) -- (D2);
		\draw[-{Stealth[length=2mm, width=1mm]}] (E1) -- (E2);
 \end{tikzpicture}
\caption{An illustration of splitting directions undergone by a leaf in Algorithm \ref{alg:balance_GRF}. }\label{fig:balance_GRF}
\end{figure}

When the true conditional mean function is sparse, setting a relatively large $\text{mtry}$ avoids splits on redundant directions. Conversely, when all directions contribute similarly, a smaller $\text{mtry}$ prevents inefficient splits due to sample randomness. However, existing methods select candidate sets randomly, causing some directions to be frequently considered as candidate directions. This randomness can reduce approximation power by neglecting important features. 
In contrast, our balanced approach ensures all features are equally considered \jelenax{for selection} at the {\it tree level},   making all the trees perform similarly and leading to a smaller estimation error for the forest.
 Due to the Jensen gap, the error of the averaged forest mainly stems from the poorer-performing trees, as the approximation power non-linearly depends on the number of splits in important features. The tuning parameters \(\alpha\) (location) and \(\text{mtry}\) (direction) control the ``greediness'' of each split during the tree-growing procedure. Greedy splits, characterized by a small \(\alpha\) and a large \(\text{mtry}\), lead to better adaptability but are sensitive to noise, making them sub-optimal under the Lipschitz or H\"older class. Conversely, non-greedy splits, which use a larger \(\alpha\) and a smaller \(\text{mtry}\), achieve minimax optimal results under smooth functional classes but are non-adaptive to complex (e.g., sparse) scenarios.

\subsection{Theoretical results}\label{sec:theory}

For simplicity, we consider uniformly distributed $\bX$ with support $[0,1]^d$, as seen in \cite{arlot2014analysis, genuer2012variance, biau2012analysis, klusowski2021sharp, cai2023extrapolated, duroux2018impact, scornet2015consistency, friedberg2020local, cattaneo2022pointwise, meinshausen2006quantile, lu2021unified, wager2018estimation}. We first highlight the advantages of the balanced approach.

\begin{lemma}\label{lem:balance diam}
For any $r>0 $ and $\alpha\in(0,0.5]$, the leaves constructed by Algorithm \ref{alg:balance_RF} satisfy 
$$\sup_{\bx\in[0,1]^d,\xi\in\Xi}\E_{\S_\Isc}[\mathrm{diam}^r(L(\bx,\xi))]< 2^r d^{\max\{r/2,1\}}\exp(r^2+r+1)(\lfloor wN\rfloor/(2k-1))^{-\frac{r\log (1-\alpha)}{d \log (\alpha)}}.$$
\end{lemma}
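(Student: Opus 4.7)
The plan is to control $\mathrm{diam}(L(\bx,\xi))$ by a coordinatewise analysis of the nested path of nodes containing $\bx$, namely $L_0=[0,1]^d\supset L_1\supset\cdots\supset L_{\mathcal M}=L(\bx,\xi)$, for arbitrary fixed $\bx$ and $\xi$. First, the $\alpha$-fraction rule \eqref{rule:alpha} applied iteratively guarantees $|L_{\mathcal M}\cap\S_\Isc|\geq\alpha^{\mathcal M}\lfloor wN\rfloor$, while the terminal condition forces $|L_{\mathcal M}\cap\S_\Isc|\leq 2k-1$; together these give
$$\mathcal M\;\geq\;\frac{\log(\lfloor wN\rfloor/(2k-1))}{\log(1/\alpha)}.$$
Moreover, the adaptive balancing rule (pick a direction split least often so far) implies, by a pigeonhole argument on the per-direction split-counts, that every coordinate $j\in\{1,\dots,d\}$ is split at least $c_j\geq\lfloor\mathcal M/d\rfloor\geq\mathcal M/d-1$ times along the path.

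The core technical step is to bound $\E_{\S_\Isc}[\ell_j^r(L_{\mathcal M})]$ for each coordinate, where $\ell_j$ denotes the $j$-th side length. I would write $\ell_j(L_{\mathcal M})=\prod_{t\in\mathcal T_j}p_t$ as a product of length-shrinkage ratios over the set $\mathcal T_j\subset\{1,\dots,\mathcal M\}$ of steps whose split is in direction $j$. The $\alpha$-fraction constraint forces the step-$t$ split location to lie between the $\lceil\alpha s_t\rceil$-th and $(s_t-\lceil\alpha s_t\rceil+1)$-th order statistics of the parent's coordinate-$j$ projections, where $s_t=|L_{t-1}\cap\S_\Isc|$. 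Because $\bX\sim\mathrm{Unif}([0,1]^d)$, conditional on a parent leaf the coordinate-$j$ projections are i.i.d.\ uniform on its range, so $p_t$ is stochastically dominated by a rescaled $\mathrm{Beta}(k_t,s_t-k_t+1)$ order statistic with $k_t\approx(1-\alpha)s_t$. Using the closed form $\E[X_{(k)}^r]=\Gamma(s+1)\Gamma(k+r)/(\Gamma(k)\Gamma(s+r+1))$ together with the elementary bound $\Gamma(k+r)/\Gamma(k)\leq(k+r)^r$, iterating via the tower property across $t\in\mathcal T_j$, and exploiting $s_t\geq 2k$ along the whole path together with the geometric decay of $s_t$ towards deeper levels, one controls the cumulative correction to yield $\E_{\S_\Isc}[\ell_j^r(L_{\mathcal M})]\leq e^{r^2+r+1}(1-\alpha)^{rc_j}$.

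Finally, aggregate across coordinates via
$$\mathrm{diam}^r(L_{\mathcal M})\;=\;\Bigl(\sum_{j=1}^d\ell_j^2(L_{\mathcal M})\Bigr)^{r/2}\;\leq\;d^{\max\{r/2,1\}-1}\sum_{j=1}^d\ell_j^r(L_{\mathcal M}),$$
which follows from subadditivity of $x\mapsto x^{r/2}$ when $r\leq 2$ and from the power-mean inequality when $r>2$. Taking expectations gives a factor $d^{\max\{r/2,1\}}$. Plugging in the lower bound $c_j\geq\log(\lfloor wN\rfloor/(2k-1))/(d\log(1/\alpha))-1$ and using $(1-\alpha)^{-r}\leq 2^r$ (valid since $\alpha\leq 1/2$) converts $(1-\alpha)^{rc_j}$ into $2^r(\lfloor wN\rfloor/(2k-1))^{-r\log(1-\alpha)/(d\log\alpha)}$, and the claimed inequality follows.

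The hardest step is the Beta-based length bound in the middle paragraph: the shrinkage ratios $p_t$ are dependent random variables whose conditional distributions are indexed by random parent sample sizes $s_t$, and obtaining a clean dimension-free prefactor $\exp(r^2+r+1)$ requires carefully handling the Gamma-function corrections for small $k_t$ and summing these corrections across up to $\log_{1/\alpha}(N/k)$ splits without eroding the leading $(1-\alpha)^{rc_j}$ rate.
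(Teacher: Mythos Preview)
Your proposal is correct and follows essentially the same route as the paper: the paper likewise bounds the total depth $\mathcal M$ via $(\alpha,k)$-regularity to obtain a deterministic per-coordinate split count $c_j\geq T\geq\log((2k-1)/\lfloor wN\rfloor)/(d\log\alpha)-1$, invokes the Beta representation $\mathrm{diam}_j(L)\overset{d}{=}\prod_{i}B_{i,j}$ with $B_{i,j}\sim\mathrm{Beta}(n_{k_{i,j}}+1,\,n_{k_{i,j}-1}-n_{k_{i,j}})$ (citing Lemma 6.1 of Duroux--Scornet), computes $\E[B_{i,j}^r]$ via Gamma ratios and Gautschi's inequality to extract the factor $(1-\alpha)^r$ with correction terms that telescope to at most $\exp(r^2+r+1)$, and finishes with exactly your power-mean/subadditivity aggregation over coordinates. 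The only cosmetic differences are that the paper states the Beta step as an exact distributional identity rather than a stochastic-domination bound, and handles the Gamma corrections with Gautschi's inequality rather than your $(k+r)^r$ bound.
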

 \jelenax{The right most term of the above expression dominates the rest and determines the expected size of the diameter}
  $(\lfloor wN\rfloor/(2k-1))^{-\frac{r\log (1-\alpha)}{d \log (\alpha)}}\asymp(w/2)^{-\frac{r\log (1-\alpha)}{d \log (\alpha)}}\cdot(N/k)^{-\frac{r\log (1-\alpha)}{d \log (\alpha)}}\asymp(k/N)^{\frac{r\log (1-\alpha)}{d \log (\alpha)}}$.  
  \jelenax{Recall that the sample-splitting fraction $\omega \in (0,1]$ remains constant and that the minimum node size, $k$ is such that $k \leq \lfloor \omega N \rfloor$.} According to Lemma \ref{lem:balance diam}, the proposed forests' approximation error
 \jelenax{is at the order of} \( (k/N)^{\frac{2\log (1-\alpha)}{d \log (\alpha)}}\). \jelenax{Our} balanced median forest, \jelenax{with} \(\alpha=0.5\) yields a rate of \((k/N)^{\frac{2}{d}}\), \jelenax{whereas, a standard median forest with random feature selection has a strictly larger  rate of \((k/N)^{2\log_2\left(\frac{2d}{2d-1}\right)}\)} \cite{klusowski2021sharp}, \jelenax{whenever} \(d>1\); for \(d=1\), the rates are the same (no need to choose a splitting direction). \jelenax{Centered forests achieve comparable upper bounds, which suggests the sub-optimality of auxiliary randomness.}

We further assume the following standard condition for the noise variable.
\begin{assumption}\label{cond:noise}
Assume that $\E[\varepsilon^2\mid\bX]\leq M$ almost surely with some constant $M>0$.
\end{assumption} 

The following theorem characterizes the IMSE of the proposed forest in Algorithm \ref{alg:balance_RF}.

\begin{theorem}\label{thm:balance_consistency}
	Let Assumptions \ref{cond:lip} and \ref{cond:noise} hold. Suppose that $w\in(0,1]$ and $\alpha\in(0,0.5]$ are both constants. Choose any $B\in\mathbb N$ and $k\leq \lfloor wN\rfloor$. Then, as $N\to\infty$, the adaptive split balancing forest proposed in Algorithm \ref{alg:balance_RF} satisfies
$\E_{\bx}[\mhat(\bx)-m(\bx)]^2=O_p(1/k+ (k/N)^{\frac{2\log(1-\alpha)}{d \log(\alpha)}}).$ Moreover, let $k\asymp N^{\frac{2\log (1-\alpha)}{d\log (\alpha)+2\log (1-\alpha)}}$, we have 
$$\E_{\bx}[\mhat(\bx)-m(\bx)]^2=O_p( N^{-\frac{2\log (1-\alpha)}{d\log (\alpha)+2\log (1-\alpha)}} ).$$
\end{theorem}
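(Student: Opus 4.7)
The plan is to apply the bias-variance decomposition given in the excerpt, $\E_{\bx}[\mhat(\bx) - m(\bx)]^2 \leq 2R_1 + 2R_2$, and control each piece separately. For the stochastic term $R_1$, I will exploit the honest split-sampling in Algorithm~\ref{alg:balance_RF}: the partition is a function of the responses in $\S_\Jsc^{(b)}$ (via the MSE criterion \eqref{rule:cyclic}) and of all covariates (since the $\alpha$-fraction and leaf-size constraints read off the $\bX_i$'s in $\S_\Isc^{(b)}$), but never uses $Y_i$ for $i \in \Isc^{(b)}$. Conditioning on the partition and on all covariates therefore preserves $\E[\varepsilon_i \mid \cdot] = 0$ and $\E[\varepsilon_i^2 \mid \cdot] \leq M$ for $i\in\Isc^{(b)}$ by Assumption~\ref{cond:noise}. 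Moving the inner $\E_\xi$ outside the square by Jensen, computing the conditional second moment, and using the uniform weight bound $\omega_i(\bx,\xi) \leq 1/k$ (every terminal leaf contains at least $k$ samples from $\Isc^{(b)}$) together with $\sum_i \omega_i = 1$ then gives $\E[R_1] \leq M/k$, and Markov's inequality yields $R_1 = O_p(1/k)$.

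For the approximation term $R_2$, the Lipschitz condition (Assumption~\ref{cond:lip}) gives $|m(\bX_i) - m(\bx)| \leq L_0\,\mathrm{diam}(L(\bx,\xi))$ whenever $\omega_i(\bx,\xi)>0$, and then $\sum_i \omega_i = 1$ together with one more Jensen step produces $R_2 \leq L_0^2 \E_{\bx}\E_\xi[\mathrm{diam}^2(L(\bx,\xi))]$. The essential input is Lemma~\ref{lem:balance diam} with $r=2$, which supplies the uniform bound $\E_{\S_\Isc}[\mathrm{diam}^2(L(\bx,\xi))] \lesssim (\lfloor wN\rfloor/(2k-1))^{-2\log(1-\alpha)/(d\log(\alpha))} \asymp (k/N)^{2\log(1-\alpha)/(d\log(\alpha))}$, so that $R_2 = O_p((k/N)^{2\log(1-\alpha)/(d\log(\alpha))})$. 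Combining the two bounds gives the first claim, and balancing the two rates (setting $1/k \asymp (k/N)^{2\log(1-\alpha)/(d\log(\alpha))}$ and solving for $k$) yields the prescribed $k \asymp N^{2\log(1-\alpha)/(d\log(\alpha)+2\log(1-\alpha))}$ and the stated final rate.

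The main obstacle I anticipate is setting up the conditioning structure cleanly enough that the honesty argument for $R_1$ goes through without circularity: the partition, the index set $\Isc^{(b)}$, and the weights all depend on the training data in intertwined ways (the MSE criterion uses $\S_\Jsc^{(b)}$ while the $\alpha$-fraction and minimum-leaf-size rules use $(\bX_i)_{i\in\Isc^{(b)}}$), so one must condition on the sigma-algebra generated by the partition, $\Isc^{(b)}$, $\S_\Jsc^{(b)}$, and all $\bX_i$ before invoking $\E[\varepsilon_i \mid \bX_i] = 0$. Once this bookkeeping is in place, both bounds reduce to straightforward calculations, and Lemma~\ref{lem:balance diam}, which is the real technical input, does the substantive work for the approximation side.
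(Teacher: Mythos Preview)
Your proposal is correct and follows essentially the same route as the paper: the same Jensen/bias--variance split into a stochastic term and an approximation term, the same honesty-based conditioning on $\{\bX_l\}_{l=1}^N$ and $\{Y_l\}_{l\in\Jsc}$ to exploit $\E[\varepsilon_i\mid\bX_i]=0$ together with the bound $\omega_i^2\le\omega_i/k$, and the same Lipschitz-plus-Lemma~\ref{lem:balance diam} argument (with $r=2$) followed by Markov and rate balancing. Your discussion of the conditioning sigma-algebra is exactly what the paper does, so there is no gap here.
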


\begin{figure}[htbp]
\centering
\includegraphics[width=1\textwidth]{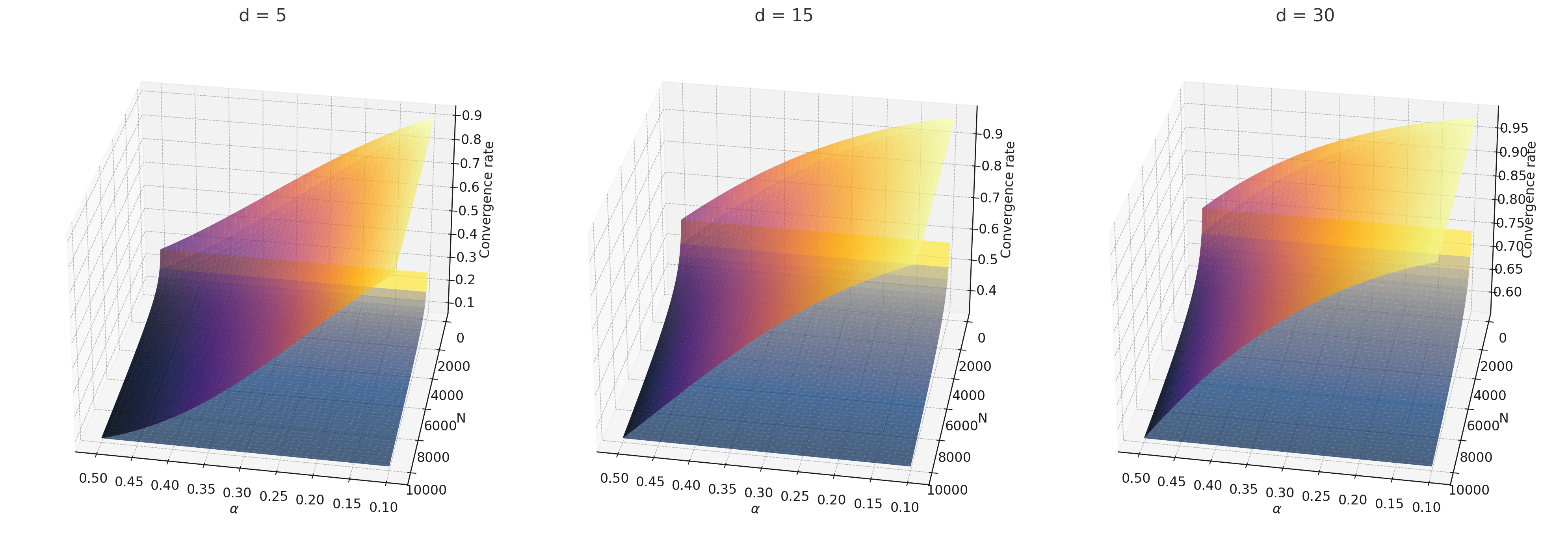}
\caption{IMSE convergence rate from Theorem \ref{thm:balance_consistency} sections for different dimension of features: $N^{-\frac{2\log (1-\alpha)}{d\log (\alpha)+2\log (1-\alpha)}} $ (top) and \(N^{-\frac{2}{d+2}}\) (bottom). Darker plot represents minimax optimal rate. The darker the cover of the overlap, the closer the rates. }
\label{fix:rates1}
\end{figure}

\jelenax{The issue of excess auxiliary randomness is inherent at the tree level, meaning that merely collecting more samples or further averaging does not address the optimality of the approximation error.  Our approach to resolving this involves a new splitting criterion that has yielded minimax optimal rates.}

\jelenax{ For instance, split-balancing median forest is the first minimax optimal median forest. When setting \(\alpha = 0.5\), split-balancing random forest becomes new, {\it balanced} median forest, with the optimal minimax rate of  \(N^{-\frac{2}{d+2}}\). } In contrast, existing median and center forests  \cite{klusowski2021sharp, duroux2018impact, biau2012analysis, arlot2014analysis}  do not achieve the minimax optimal rate; see Table \ref{table:rate} and Figure \ref{fig:rate}.
\jelenax{The conclusions of Theorem \ref{thm:balance_consistency} remain valid for  any constant \(\alpha\) in the range \((0, 0.5]\).} 
Greedy splits (small \(\alpha\)) enhance adaptability by leveraging information from both $\bX_i$ and $Y_i$, whereas non-greedy splits attain minimax optimality, necessitating a balance in practice. \jelenax{The rates are represented in Figure \ref{fix:rates1}.}

\section{Local Adaptive Split Balancing Forest}\label{LLCF_consistency}
In this section, we extend our focus to more general H{\"o}lder smooth functions and introduce \jelenax{local adaptive split-balancing} forests capable of exploiting higher-order smoothness levels.

\subsection{Local polynomial forests}

\begin{algorithm}[h!] \caption{Local adaptive split balancing forests}\label{alg:local RF}
\begin{algorithmic}[1]
\Require Observations $\S_N=(\bX_i,Y_i)_{i=1}^N$, with $B\geq1$, $\alpha\in(0,0.5]$, $w\in(0,1]$, $k\leq\lfloor wN\rfloor$, and $q\in\mathbb N$.
\State Calculate the polynomial basis $G(\bX_i)\in\R^{\dbar}$ for each $i\leq N$.
\For{$b=1,\dots,B$}
\State Divide $\S_N$ into disjoint $\S_\Isc^{(b)}$ and $\S_\Jsc^{(b)}$ with $|\Isc^{(b)}|=\lfloor wN\rfloor$ and $|\Jsc^{(b)}|=N-\lfloor wN\rfloor$.

\Repeat{ For each node $L\subseteq[0,1]^d$:}
 \State Select direction $j$ along which the node has been split the least number of times. 
\State Partition along $j$-th direction to minimize the MSE on $\S_\Jsc^{(b)}$ as in \eqref{rule:poly2} and \eqref{rule:alpha}.

\Until{each current node contains $k$ to $2k-1$ samples $\S_\Isc^{(b)}$.}
\State Estimate $\bbetahat(\bx,\xi_b)$ as defined in \eqref{def:gammahat} using observations $\S_\Isc^{(b)}$.
\State The $b$-th local adaptive split balancing tree: $T_\mathrm{L}(\bx,\xi_b):=G(\bx)^\top\bbetahat(\bx,\xi_b)$.

\EndFor\\
\Return The local adaptive split balancing forest $\mhat_\mathrm{L}(\bx):=B^{-1}\sum_{b=1}^BT_\mathrm{L}(\bx,\xi_b)$.
\end{algorithmic}
\end{algorithm}

To capture the higher-order smoothness of the conditional mean function $m(\cdot)$, we propose to fit a \jelenax{$q$-th} local polynomial regression within the leaves of the split-adaptive balanced forest of Section \ref{sec:cyclic}.
We first introduce the polynomial basis \jelenax{of} order $q\in\mathbb N$. For any $\bx\in[0,1]^d$ and $j\in\{0,1,\dots,q\}$, let $\bg_j(\bx):=(\bx^{\bgamma})_{\bgamma\in\mathcal A_j}\in\R^{d^j}$, where $\mathcal A_j:=\{\bgamma=(\bgamma_1,\dots,\bgamma_d)\in\mathbb N^d:|\bgamma|=j\}$.  For instance, $\bg_0(\bx)=1$, $\bg_1(\bx)=(\bx_1,\bx_2,\dots,\bx_d)^\top$, and $\bg_2(\bx)=(\bx_1^2,\bx_1\bx_2,\dots,\bx_d^2)^{\top}$. Denote $\bG(\bx):=(\bg_0(\bx),\bg_1(\bx)^\top,\dots,\bg_q(\bx)^\top)^\top\in\R^{\dbar}$ as the $q$-th order polynomial basis, where $\dbar:= \sum_{j=0}^q d^j$. 
For any $\xi\in\Xi$, define the weights $\omega_i(\bx,\xi)$ as in \eqref{weight}. Using the training samples indexed by $\Isc$, \jelenax{we} consider the weighted polynomial regression:
\begin{align}\label{def:gammahat}
\bbetahat(\bx,\xi):=\argminn_{\bbeta\in\R^{\dbar}}\sum_{i\in\Isc}\omega_i(\bx,\xi)(Y_i-\bG(\bX_i)^{\top}\bbeta)^2.
\end{align}
The $q$-th order local adaptive split balancing forest is proposed as 
$$\mhat_\mathrm{L}(\bx):=\E_{\xi}[\bG(\bx)^\top\bbetahat(\bx,\xi)].$$
Unlike Algorithm \ref{alg:balance_RF}, where local averages are used as tree predictions, our approach here involves conducting polynomial regressions within the terminal leaves. To optimize the behavior of the final polynomial regressions, we minimize the following during the leaf construction:
\begin{equation}\label{rule:poly2}
\sum_{i\in\Jsc^{(b)}}(\widehat{Y}_i-\widetilde{Y}_1)^2\mathbbm1\{\bX_i\in L_1\}+\sum_{i\in\Jsc^{(b)}}(\widehat{Y}_i-\widetilde{Y}_1)^2\mathbbm1\{\bX_i\in L_2\},
\end{equation}
where $\widehat{Y}_i:=Y_i-G(\bX_i)^\top\bbetahat$ with $\bbetahat$ denoting the least squares estimate within the parent node $L$, and $\widetilde{Y}_j$ is the average of $\widehat{Y}_i$ within the child node $L_j$ for each $j \in \{1,2\}$. As in Section \ref{sec:cyclic}, we also require that both child nodes contain at least an $\alpha$-fraction of samples from the parent node. 
Additional specifics are outlined in Algorithm \ref{alg:local RF}. Note that Algorithm \ref{alg:balance_RF} is a special case of Algorithm \ref{alg:local RF} with $q=0$.
To further improve the forests' empirical behavior in the presence of certain sparse structures, we also introduce a generalized sparse version of the local adaptive split balancing forest 
following; see Algorithm \ref{alg:local GRF} \jelenax{in} the Supplement for more details.

\subsection{Theoretical results with higher-order smoothness levels}

\begin{assumption}[H{\"o}lder smooth]\label{cond:holder}
Assume that $m(\cdot)\in\mathcal{H}^{q,\beta}$ with $q\in \mathbb{N}$ and $\beta\in(0,1]$. The H{\"o}lder class $\mathcal{H}^{q,\beta}$ contains all functions $f:[0,1]^d \to \R$ that are $q$ times continuously differentiable, with (a) $|D^{\bgamma} f(\bx)|\leq L_0$ for all $\bx\in[0,1]^d$ and multi-index $\bgamma$ satisfying $|\bgamma|\leq q$, and (b) $|D^{\bgamma} f(\bx)-D^{\bgamma} f(\bx')|\leq L_0\|\bx-\bx' \|^\beta$ for all $\bx,\bx'\in [0,1]^{d}$ and $\bgamma$ satisfying $|\bgamma|=q$, where $L_0>0$ is a constant.
\end{assumption}

The following theorem characterizes the convergence rate of the local adaptive split balancing forest proposed in Algorithm \ref{alg:local RF}.

\begin{theorem}\label{thm:local_consistency}
Let Assumptions \ref{cond:noise} and \ref{cond:holder} hold. Suppose that $w\in(0,1]$ and $\alpha\in(0,0.5]$ are both constants. Choose any $B\in\mathbb N$ and $k\leq \lfloor wN\rfloor$ satisfying $k\gg\log(N)$. Then, as $N\to \infty$, the local adaptive split balancing forest proposed in Algorithm \ref{alg:local RF} satisfies 
$$\E_{\bx}[\mhat_\mathrm{L}(\bx)-m(\bx)]^2=O_p\left(1/k+(k/N)^{\frac{2(q+\beta)\log (1-\alpha)}{d \log (\alpha)}}\right).$$
Moreover, let $k\asymp N^{\frac{2(q+\beta)\log (1-\alpha)}{d\log (\alpha)+2(q+\beta)\log (1-\alpha)}}$, we have
\begin{align}\label{local optimal rate}
\E_{\bx}\left[\mhat_\mathrm{L}(\bx)-m(\bx)\right]^2=O_p\left(N^{-\frac{2(q+\beta)\log (1-\alpha)}{d\log (\alpha)+2(q+\beta)\log (1-\alpha)}} \right).
\end{align}
\end{theorem}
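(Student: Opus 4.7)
The plan is to mirror the variance-bias decomposition used for Theorem \ref{thm:balance_consistency}, but now track the behavior of a $q$-th order local polynomial regression within each terminal leaf. Fix $\bx\in[0,1]^d$ and a tree realization $\xi$, and let $L=L(\bx,\xi)$ be its terminal leaf. Because $m\in\mathcal{H}^{q,\beta}$, the Taylor polynomial of $m$ at $\bx$ of degree at most $q$ can be written in the basis $\bG$ as $\bG(\cdot)^\top\bbeta^*(\bx)$ with $\bG(\bx)^\top\bbeta^*(\bx)=m(\bx)$ and Taylor remainder $R_i(\bx):=m(\bX_i)-\bG(\bX_i)^\top\bbeta^*(\bx)$ satisfying $|R_i(\bx)|\leq C\|\bX_i-\bx\|^{q+\beta}$ by Assumption \ref{cond:holder}. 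The tree prediction error becomes $T_\mathrm{L}(\bx,\xi)-m(\bx)=\bG(\bx)^\top(\bbetahat(\bx,\xi)-\bbeta^*(\bx))$, and the normal equations yield
\begin{align*}
\bbetahat(\bx,\xi)-\bbeta^*(\bx) = \bM_L^{-1}\!\!\!\sum_{i\in\Isc:\bX_i\in L}\!\!\!\bG(\bX_i)\,[R_i(\bx)+\varepsilon_i],\qquad \bM_L:=\!\!\!\sum_{i\in\Isc:\bX_i\in L}\!\!\!\bG(\bX_i)\bG(\bX_i)^\top.
\end{align*}

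Next I would split the right-hand side into an approximation term (the $R_i$ piece) and a stochastic term (the $\varepsilon_i$ piece). For both, the essential quantity to control is the conditioning of the local design $\bM_L$. I would rescale the monomial basis by $h:=\mathrm{diam}(L)$, defining $\widetilde{\bG}(\bx')$ with entries $(\bx'_j-c_j)^{\bgamma_j}/h^{|\bgamma|}$ centered at the leaf center $c$. In the rescaled basis, the population moment matrix over a leaf is bounded and positive definite. Combining honesty (the responses $(Y_i)_{i\in\Isc}$ do not enter the partition, so $(\varepsilon_i)_{i\in\Isc}$ remain independent mean-zero conditional on the partition and $\bX_\Isc$), a matrix Bernstein inequality, and a union bound over the $O(N/k)$ leaves of each tree shows that with probability tending to one all rescaled designs are uniformly invertible; this is precisely where the assumption $k\gg\log(N)$ is used.

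On this high-probability event, the pointwise approximation bias is bounded by $C'\,h^{q+\beta}$, since local polynomial regression reproduces polynomials of degree $\leq q$ exactly and the Taylor remainder on $L$ is at most $C h^{q+\beta}$. Squaring, integrating over $\bx$, averaging over $\xi$, and invoking Lemma \ref{lem:balance diam} with $r=2(q+\beta)$ gives an approximation contribution of order $(k/N)^{\frac{2(q+\beta)\log(1-\alpha)}{d\log(\alpha)}}$. The stochastic term reduces to a linear smoother of $(\varepsilon_i)_{i\in\Isc\cap L}$ with $\Theta(k)$ summands whose squared weights in the rescaled geometry sum to $O(1/k)$ (the standard local polynomial variance calculation), giving conditional variance $O(1/k)$; Jensen's inequality across $\xi$ preserves this rate for the forest. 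Adding the two contributions yields the first displayed bound, and the second follows from the usual bias-variance balance $1/k\asymp(k/N)^{\frac{2(q+\beta)\log(1-\alpha)}{d\log(\alpha)}}$.

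The main obstacle is the uniform control of $\bM_L$ across random, data-dependent leaves. Unlike the piecewise-constant case $q=0$ in Theorem \ref{thm:balance_consistency}, where only the leaf sample size $k$ enters, here one must rule out the event that some small or eccentric leaf yields an ill-conditioned rescaled design. The honest construction decouples the noises in $\Isc$ from the partition, after which the logarithmic sample-size requirement $k\gg\log(N)$ is exactly what matrix concentration combined with a union bound over the $O(N/k)$ leaves demands; this assumption cannot be dropped with the present argument.
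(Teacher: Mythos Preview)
Your high-level plan---Taylor-expand $m$ at $\bx$, split the tree error into a bias piece driven by the remainder $R_i$ and a variance piece driven by $\varepsilon_i$, then invoke Lemma \ref{lem:balance diam} with $r=2(q+\beta)$ for the former and an $O(1/k)$ bound for the latter---matches the paper's structure. The genuine gap is in the step you yourself flag as the main obstacle: uniform invertibility of the rescaled local design. You propose matrix Bernstein plus a union bound over the $O(N/k)$ realized leaves, but the leaves are \emph{data-dependent on $\bX_\Isc$}: Algorithm \ref{alg:local RF} enforces the $\alpha$-fraction constraint \eqref{rule:alpha} using the $\Isc$-sample, so the partition is a function of $(\bX_i)_{i\in\Isc}$. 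Honesty decouples only $(Y_i)_{i\in\Isc}$ (hence $(\varepsilon_i)_{i\in\Isc}$) from the partition; it does nothing for $(\bX_i)_{i\in\Isc}$. Consequently you cannot condition on the partition and treat the within-leaf $\bX_i$'s as independent summands, and the realized leaves are not a fixed collection to which a union bound applies. The paper resolves this differently (Lemma \ref{lem:sample eigen}): every data-dependent leaf $L$ is approximated from inside by a rectangle $\bar R$ drawn from a \emph{fixed} collection $\mathcal{R}_{\mathcal{D},\omega,\epsilon}$ of cardinality polynomial in $N$ (Lemmas \ref{lem:R cardinality}--\ref{lem:event A}, from \cite{wager2015adaptive}); concentration is established uniformly over this fixed collection---this is precisely where $k\gg\log N$ is used---and then transferred from $\bar R$ back to $L$.

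A second issue is your isotropic rescaling by the single scalar $h=\mathrm{diam}(L)$. For $\alpha<0.5$ the leaves can be arbitrarily eccentric, and the population Gram matrix after rescaling each monomial by $h^{|\bgamma|}$ is not uniformly well conditioned across leaf shapes. The paper rescales coordinate-wise by $\mathrm{diam}_j(L)$ via the diagonal matrix $\bD_L$ in \eqref{def:U_L}--\eqref{def:d_S}, so that the rescaled covariates lie in a unit-scale box regardless of eccentricity; the population covariance then coincides with $\Var(\ba^\top\widetilde{\bV})$ for a fixed polynomial basis in i.i.d.\ $\mathrm{Uniform}[0,1]$ variables and is bounded below by a constant $\Lambda_0>0$ independent of $\bx$, $\xi$, and $N$ (Step 4 in the proof of Lemma \ref{lem:sample eigen}).
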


When $\alpha=0.5$, the rate given by \eqref{local optimal rate} is $N^{-\frac{2(q+\beta)}{d+2(q+\beta)}}$, which is minimax optimal for the H\"{o}lder class $\mathcal{H}^{q,\beta}$; see, e.g., \cite{stone1982optimal}.  This accomplishment represents the first random forest with IMSE reaching minimax optimality when $q>1$; refer to Table \ref{table:rate} and Figure \ref{fig:rate}.

Theorem \ref{thm:local_consistency} \jelenax{is} built upon several key technical components. To mitigate the issue of multicollinearity during local polynomial regression within terminal leaves, we first demonstrate that \(\sup_{\bx\in[0,1]^d,\xi\in\Xi}\kappa_N(\bx)=O_p(1)\), where \(\kappa_N(\bx):=[1-\bd(\bx)^\top\bS^{-1}(\bx)\bd(\bx)]^{-1}\) assesses the degree of multicollinearity. Here, \(\bd(\bx):=\sum_{i\in\mathcal I}\omega_i(\bx,\xi)\bG_{-1}(\bX_i-\bx)\), \(\bS(\bx):=\sum_{i\in\mathcal I}\omega_i(\bx,\xi)\bG_{-1}(\bX_i-\bx)\bG_{-1}(\bX_i-\bx)^\top\), and \(\bG(\bx)=(1,\bG_{-1}(\bx)^\top)^\top\). Unlike \cite{friedberg2020local}, which imposes an upper bound on the random quantity \(\kappa_N\) by assumption, we rigorously prove that this quantity is uniformly bounded above with high probability; see the equivalent form in Lemma \ref{lem:sample eigen}. Additionally, by employing localized polynomial regression, we reduce the approximation error to \(O(\E_{\bx}[\mathrm{diam}^{2(q+\beta)}(L(\bx,\xi))])\) under the H\"older class \(\mathcal{H}^{q,\beta}\), as detailed in \eqref{consistency:R1} of the Supplement. Together with our proposed permutation-based splitting rule, previous results on leaf diameter (Lemma \ref{lem:balance diam}) ensure a sufficiently small rate.
  
\jelenax{\cite{mourtada2020minimax} achieved minimax optimal IMSE for \(q=1\) and \(\beta \in (0, 1/2]\) by leveraging forest structure and concentration across trees. However, for \(\beta \in (1/2, 1]\), their optimality is limited to interior points where \(\E_{\xi}[\E_{\bx'}[\bx' \mid \bx' \in L(\bx,\xi)]] \approx \bx\). Our approach broadens the range of H\"older smooth functions addressed, using polynomial approximation and split-balancing to eliminate the approximation error {\it at the tree level}.}

Tessellation forest \jelenax{of} \cite{o2024minimax}  addressed this boundary issue, achieving minimax optimal IMSE for \(q=1\) and any \(\beta \in (0,1]\). 
 Theorem \ref{thm:local_consistency} \jelenax{further highlights} that the \emph{local \jelenax{split-balancing}} median forest (with $\alpha=0.5$) \jelenax{is} minimax optimal not only \jelenax{for} $q=1$ but also \jelenax{for} any arbitrary $q>1$. This \jelenax{signifies a major breakthrough} in \jelenax{attaining} minimax optimality \jelenax{for} higher\jelenax{-order} smoothness levels. 

 \jelenax{D}ebiased Mondrian forests \jelenax{by} \cite{cattaneo2023inference} achieve minimax optimality for any \(q \in \mathbb{N}\) and \(\beta \in (0,1]\), in \jelenax{terms of} point-wise MSE at interior point\jelenax{s}. However, their results do not yield an optimal rate for the IMSE, as their \jelenax{debiasing technique does not extend} for boundary points. \jelenax{Additionally}, \cite{cai2023extrapolated} achieved a nearly optimal rates \jelenax{for any} 
 \(q \in \mathbb{N}\) and \(\beta \in (0,1]\) \jelenax{ but their findings are limited to} in-sample excess risk.
 
 \jelenax{It's important to note that while all previous works grow trees completely independently of the samples, our approach employs supervised splitting rules with $\alpha<0.5$ after tuning, significantly enhancing forest performance.} Only \cite{bloniarz2016supervised, friedberg2020local} considered data-dependent splitting rules and studied local linear forests under the special case $q=1$. However, \cite{bloniarz2016supervised} only demonstrated the consistency of their method, without providing any explicit rate of convergence. \cite{friedberg2020local} provided asymptotic normal results at a given $\bx\in[0,1]^d$. However, their established upper bound for the asymptotic variance is no faster than $N^{-(1+\frac{d\log(0.2)}{1.3\log(0.8)})^{-1}}\approx N^{-(1+5.55d)^{-1}}$, which is slow as the splitting directions are chosen randomly.

\subsection{Uniform \jelenax{consistency rate} }\label{sec:uniform}

In the following, we extend our analysis to include uniform-type results for the estimation error of the forests. These are useful for our particular applications to causal inference and more broadly are of independent interest. As Algorithm \ref{alg:balance_RF} is a specific instance of Algorithm \ref{alg:local RF}, we focus on presenting results for the latter.

To begin, we establish a uniform bound on the diameter of the leaves as follows.

\begin{lemma}\label{lem:balance diam_point-wise}
Suppose that $r\geq 1$, $w\in(0,1]$, and $\alpha\in(0,0.5]$ are constants. Choose any $k\leq \lfloor wN\rfloor$ satisfying $k\gg\log^3(N)$. Then, 
$$\sup_{\bx\in[0,1]^d,\xi\in\Xi}\mathrm{diam}^r(L(\bx,\xi))\leq C(N/k)^{-\frac{r\log (1-\alpha)}{d \log (\alpha)}}$$
with probability at least $1-\log(\lfloor wN\rfloor/k)/(\sqrt n\log\left((1-\alpha)^{-1}\right))$ and some constant $C>0$.
\end{lemma}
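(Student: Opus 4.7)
The plan is to lift the expectation bound in Lemma \ref{lem:balance diam} to a uniform-in-$(\bx,\xi)$ high-probability bound by combining the rigid deterministic structure of balanced splitting with a Bernstein-type VC concentration inequality for axis-aligned rectangles.

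First, fix arbitrary $\bx\in[0,1]^d$, $\xi\in\Xi$, and consider the chain of ancestors $L_0=[0,1]^d\supset L_1\supset\cdots\supset L_M=L(\bx,\xi)$ of the leaf containing $\bx$. Write $\nbar=\lfloor wN\rfloor$, denote by $j_i$ the splitting direction at depth $i$, by $n_i=\#\{l\in\Isc:\bX_l\in L_i\}$ the sample count, by $a_i:=n_{i+1}/n_i$ the sample-fraction ratio, and by $c_j(\bx,\xi):=\#\{i:j_i=j\}$ the per-direction split count. Three deterministic constraints are in force: the $\alpha$-fraction rule gives $a_i\in[\alpha,1-\alpha]$; the termination rule gives $\prod_{i=0}^{M-1}a_i=n_M/\nbar\in[k/\nbar,(2k-1)/\nbar]$; and the balanced-splitting rule (the algorithm always splits in a least-split direction) forces $\max_j c_j-\min_j c_j\leq 1$, hence $c_j(\bx,\xi)\geq\lfloor M/d\rfloor$. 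Combining the first two gives the path-length lower bound $M\geq -\log(\nbar/(2k-1))/\log\alpha$.

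Next, because $\bX$ is uniform on $[0,1]^d$, the conditional law of $\bX$ given $\bX\in L_i$ is uniform on $L_i$, so $a_i$ is an empirical estimate of the Lebesgue side-length ratio $h_{j_i}(L_{i+1})/h_{j_i}(L_i)$. Since axis-aligned rectangles form a VC class of dimension $O(d)$, I would apply a Bernstein-type uniform relative deviation inequality at each level to obtain, on an event $\mathcal E_i$ of probability at least $1-n^{-1/2}$,
\[
\bigg|\frac{\widehat P_\nbar(L_{i+1})}{\lambda(L_{i+1})}-1\bigg|\leq C\sqrt{\frac{\log N}{\nbar\lambda(L_{i+1})}}\leq C\sqrt{\frac{\log N}{n_{i+1}}},
\]
uniformly over every rectangle reachable at this level. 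A union bound over $i=0,\ldots,M_{\max}-1$ with $M_{\max}=\lceil\log(\nbar/k)/\log((1-\alpha)^{-1})\rceil$ yields total failure probability at most $M_{\max}/\sqrt n=\log(\nbar/k)/(\sqrt n\log((1-\alpha)^{-1}))$, matching the probability in the statement. On this good event, $h_{j_i}(L_{i+1})/h_{j_i}(L_i)\leq a_i\bigl(1+C\sqrt{\log N/n_{i+1}}\bigr)\leq (1-\alpha)\bigl(1+C\sqrt{\log N/n_{i+1}}\bigr)$.

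Using the path monotonicity $n_{i+1}\geq k(1-\alpha)^{-(M-i-1)}$, the geometric sum
\[
\sum_{i=0}^{M-1}\sqrt{\log N/n_{i+1}}\leq \sqrt{\log N/k}\sum_{l=0}^{\infty}(1-\alpha)^{l/2}=O_\alpha\bigl(\sqrt{\log N/k}\bigr)=o(1)
\]
under $k\gg\log^3 N$ (the cubic log power gives slack for the constants coming from enforcing the per-level VC bound at failure probability $n^{-1/2}$ and for the multiplicative accumulation across all levels). Consequently, on the good event,
\[
h_j(L(\bx,\xi))=\prod_{i:j_i=j}\frac{h_j(L_{i+1})}{h_j(L_i)}\leq (1-\alpha)^{c_j(\bx,\xi)}\exp\Bigl(C\sum_{i}\sqrt{\log N/n_{i+1}}\Bigr)\leq 2(1-\alpha)^{c_j(\bx,\xi)}.
\]
Plugging in $c_j(\bx,\xi)\geq\lfloor M/d\rfloor\geq -\log(\nbar/(2k-1))/(d\log\alpha)-1$, using $\mathrm{diam}(L)\leq\sqrt d\,\max_j h_j(L)$, and raising to the $r$th power produces
\[
\mathrm{diam}^r(L(\bx,\xi))\leq C\,d^{r/2}(1-\alpha)^{-r\log(\nbar/k)/(d\log\alpha)}=C(N/k)^{-r\log(1-\alpha)/(d\log\alpha)},
\]
as claimed. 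I expect the main obstacle to be the second step: establishing a relative (Bernstein-type) VC inequality that holds with probability $1-n^{-1/2}$ uniformly over every rectangle at every one of the $M_{\max}\asymp\log(N/k)$ tree levels, and then controlling the multiplicative accumulation of these empirical fluctuations so that the leading exponent $-\log(1-\alpha)/(d\log\alpha)$ is preserved; it is the interplay between these two requirements that pins down the condition $k\gg\log^3 N$.
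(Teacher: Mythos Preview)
Your proposal is correct and follows essentially the same architecture as the paper's proof: deterministic per-direction split-count lower bound from the balanced rule, per-level concentration of empirical rectangle measures with a union bound over the $O(\log(N/k))$ levels (yielding exactly the stated failure probability), and multiplicative control of the side-length ratios. The paper implements the concentration step via the explicit rectangle-covering machinery of \cite{wager2015adaptive} (Lemmas~\ref{lem:R cardinality}--\ref{lem:event A} in the Supplement) rather than a generic VC relative-deviation inequality, but this is the same idea in different clothing.

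One useful point of comparison: your geometric-sum bound $\sum_i\sqrt{\log N/n_{i+1}}=O_\alpha(\sqrt{\log N/k})$ is sharper than the paper's, which crudely bounds each factor by $1+C'\sqrt{\log n/k}$ using $n_{k_{i,j}-1}\geq k$ and then takes the $T$-fold product $(1+C'\sqrt{\log n/k})^T$ with $T\asymp\log N$. It is \emph{this} product, not the concentration constants, that forces $k\gg\log^3 N$ in the paper (one needs $T\sqrt{\log n/k}=o(1)$); your geometric sum would in principle only require $k\gg\log N$. So your parenthetical attribution of the cubic power to ``slack for the VC constants'' is a misdiagnosis---but it does not affect the validity of your argument, since you are working under the stronger hypothesis anyway.
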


The result in Lemma \ref{lem:balance diam_point-wise} holds for a single tree and also for an ensemble forest. In fact, obtaining uniform results for a single tree is relatively simple -- it suffices to control the diameter of each terminal leaf with high probability and take the uniform bound over all the leaves, as the number of terminal leaves is at most $N/k$. However, such an approach is invalid for forests -- the number of all terminal leaves in a forest grows with the number of trees $B$. Hence, such a method can be used only when $B$ is relatively small; however, in practice, we would like to set $B$ as large as possible unless constrained by computational limits. To obtain a uniform result for forests, we approximate all the possible leaves through a collection of rectangles.

Subsequently, we present a uniform upper bound for the estimation error of the forests.

\begin{theorem}\label{thm:local_consistency_uniform}
Let Assumption \ref{cond:holder} hold. Suppose that $|Y|\leq M$. Let $M>0$, $w\in(0,1]$, and $\alpha\in(0,0.5]$ be constants. Choose any $B\in\mathbb N$ and $k\leq \lfloor wN\rfloor$ satisfying $k\gg\log^3(N)$. Then, as $N\to \infty$, 
$$\sup_{\bx\in[0,1]^d}|\mhat_\mathrm{L}(\bx)-m(\bx)|=O_p\left(\sqrt{\log(N)/k}+ (k/N)^{\frac{(q+\beta)\log (1-\alpha)}{d\log (\alpha)}}\right).$$
Moreover, let $k\asymp N^{\frac{2(q+\beta)\log (1-\alpha)}{d\log (\alpha)+2(q+\beta)\log (1-\alpha)}}\left(\log(N)\right)^{\frac{d\log (\alpha)}{d\log (\alpha)+2(q+\beta)\log (1-\alpha)}}$, we have 
$$\sup_{\bx\in[0,1]^d}|\mhat_\mathrm{L}(\bx)-m(\bx)|=O_p\left((\log(N)/N)^{\frac{(q+\beta)\log (1-\alpha)}{d\log (\alpha)+2(q+\beta)\log (1-\alpha)}}\right).$$
\end{theorem}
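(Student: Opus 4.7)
The plan is to decompose the uniform error into a stochastic (variance) term and an approximation (bias) term and to control each uniformly over $\bx\in[0,1]^d$ using Lemma~\ref{lem:balance diam_point-wise} together with a covering/chaining argument that avoids blowing up with the number of trees $B$. Concretely, for each tree index $b$ and each $\bx$, write the local polynomial fit inside $L(\bx,\xi_b)$ as $\bG(\bx)^\top\bbetahat(\bx,\xi_b)=\bG(\bx)^\top\bbeta^*(\bx,\xi_b)+\bG(\bx)^\top(\bbetahat-\bbeta^*)$, where $\bbeta^*(\bx,\xi_b)$ is the weighted least-squares projection of $m$ in the same leaf. Applying Jensen's inequality over $\xi$ yields
\begin{equation*}
|\mhat_\mathrm{L}(\bx)-m(\bx)|\leq \E_\xi|\bG(\bx)^\top(\bbetahat(\bx,\xi)-\bbeta^*(\bx,\xi))|+\E_\xi|\bG(\bx)^\top\bbeta^*(\bx,\xi)-m(\bx)|,
\end{equation*}
so it suffices to bound each term uniformly in $\bx$.

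For the approximation term, I will use Assumption~\ref{cond:holder} via a $q$-th order Taylor expansion of $m$ around $\bx$ inside $L(\bx,\xi)$ to get a leaf-wise bias of order $\mathrm{diam}^{q+\beta}(L(\bx,\xi))$; the invertibility of the local design Gram matrix (uniform upper bound on the multicollinearity quantity $\kappa_N(\bx)$ discussed after Theorem~\ref{thm:local_consistency}, proved elsewhere in the paper via Lemma~\ref{lem:sample eigen}) lets one pass this bias to $\bG(\bx)^\top\bbeta^*(\bx,\xi)$ with only a constant inflation. Combining with Lemma~\ref{lem:balance diam_point-wise} applied with $r=q+\beta$ gives $\sup_{\bx,\xi}|\bG(\bx)^\top\bbeta^*-m(\bx)|=O_p((k/N)^{\frac{(q+\beta)\log(1-\alpha)}{d\log(\alpha)}})$.

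For the stochastic term, use $|Y|\le M$ and the closed-form weighted-least-squares residual to write $\bG(\bx)^\top(\bbetahat-\bbeta^*)$ as a linear combination of at most $2k-1$ bounded centered variables with weights summing in absolute value to $O(1)$ (again using the uniform bound on $\kappa_N$ and equivalence of $\bG(\bx)$-norm to the Euclidean norm on $[0,1]^d$). Hoeffding's inequality then gives a $\sqrt{\log(1/\delta)/k}$ tail bound pointwise. The key obstacle, highlighted in the paper's own discussion preceding Lemma~\ref{lem:balance diam_point-wise}, is turning a pointwise-in-$\bx$-and-tree bound into a uniform-over-$\bx\in[0,1]^d$-and-$b\leq B$ bound without dependence on $B$. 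I will handle this by building a deterministic family $\mathcal{R}$ of axis-aligned rectangles with integer side-lengths on a grid of mesh $N^{-c}$ (for a large constant $c$); the cardinality $|\mathcal{R}|$ is polynomial in $N$, and by the balanced split construction every leaf $L(\bx,\xi)$ is sandwiched between two elements of $\mathcal{R}$ that differ in volume and diameter only by $(1+o(1))$ factors. A union bound over $\mathcal{R}$ combined with an $\epsilon$-net on $[0,1]^d$ of cardinality $O(N^{cd})$ (and Lipschitz control of $\bG(\bx)^\top\bbetahat$ between net points using the uniform leaf-diameter bound) yields the claimed $\sqrt{\log(N)/k}$ stochastic rate uniformly in $\bx$.

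The hardest step is the covering argument: one must show that the empirical sums $\sum_{i\in\mathcal I}\idf\{\bX_i\in R\}$ and $\sum_{i\in\mathcal I}\idf\{\bX_i\in R\}\bG(\bX_i)\bG(\bX_i)^\top$ over the sandwiching rectangles $R\in\mathcal{R}$ concentrate uniformly at the right rate, so that the effective leaf count and the local Gram matrix are well-behaved simultaneously for every leaf a tree could produce under Algorithm~\ref{alg:local RF}. This is a standard VC-type argument for axis-aligned rectangles (VC dimension $2d$) combined with Bernstein, but the non-trivial part is verifying that the balanced splitting rule, together with the $\alpha$-fraction constraint, confines the random leaves to this deterministic family up to negligible fluctuations, independently of $B$. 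Once both terms are controlled, optimizing $k\asymp N^{\frac{2(q+\beta)\log(1-\alpha)}{d\log(\alpha)+2(q+\beta)\log(1-\alpha)}}(\log N)^{\frac{d\log(\alpha)}{d\log(\alpha)+2(q+\beta)\log(1-\alpha)}}$ equates the two rates and yields the stated $(\log(N)/N)^{\frac{(q+\beta)\log(1-\alpha)}{d\log(\alpha)+2(q+\beta)\log(1-\alpha)}}$ bound.
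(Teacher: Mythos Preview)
Your plan matches the paper's proof in all essential respects: the bias term is handled via Taylor expansion, the uniform eigenvalue bound of Lemma~\ref{lem:sample eigen}, and Lemma~\ref{lem:balance diam_point-wise} with $r=q+\beta$; the stochastic term is handled by Hoeffding plus a union bound over a polynomial-size family of approximating rectangles. The one place your outline diverges is the additional $\epsilon$-net on $[0,1]^d$ together with ``Lipschitz control of $\bG(\bx)^\top\bbetahat$ between net points.'' This step is both unnecessary and, as written, would fail: $\bbetahat(\bx,\xi)$ is piecewise constant in $\bx$ (it jumps whenever $\bx$ crosses a leaf boundary), so there is no Lipschitz interpolation between nearby net points that lie in different leaves, and no mesh size guarantees two nearby points share a leaf. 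The paper bypasses this entirely by never discretizing $\bx$. Instead, for every leaf $L(\bx,\xi)$ it selects an \emph{inner} rectangle $\bar R\in\mathcal R_{\mathcal D,\omega,\epsilon}$ from the multiscale family of Lemma~\ref{lem:R cardinality} (not a uniform grid), splits the stochastic sum into a leaf-minus-rectangle piece, which is $O_p(\sqrt{\log N/k})$ because $\#L-\#\bar R=o(\#L)$ uniformly by Lemma~\ref{lem:R lowerbound}, and a pure-rectangle piece $\frac{1}{\#\bar R}\sum_{i:\bX_i\in\bar R}(\cdot)\varepsilon_i$, to which Hoeffding and a union bound over the fixed family $\mathcal R_{\mathcal D,\omega,\epsilon}$ apply directly. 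If you replace your $\epsilon$-net step by this leaf-to-inner-rectangle reduction, your argument becomes the paper's.
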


Comparing with the results in Theorem \ref{thm:local_consistency}, the rates above consist of additional logarithm terms. This is due to the cost of seeking uniform bounds. When $\alpha=0.5$, an optimally tuned $k$ leads to the rate $(\log(N)/N)^{\frac{q+\beta}{d+2(q+\beta)}}$, which is minimax optimal for sup-norms; see, e.g., \cite{stone1982optimal}. To the best of our knowledge, we are the first to establish minimax optimal uniform bounds for forests over the H{\"o}lder class $\mathcal H^{q,\beta}$ for any $q\in\mathbb N$.

\subsection{Application: causal inference for ATE}\label{sec:ATE}

In this section, we apply the proposed forests to estimate the average treatment effect (ATE) in the context of causal inference. We consider i.i.d. samples $(\bW_i)_{i=1}^N := (Y_i, \bX_i, A_i)_{i=1}^N$, and denote $\bW=(Y,\bX,A)$ as its independent copy. Here, $Y \in \mathbb{R}$ denotes the outcome of interest, $A \in \{0,1\}$ is a binary treatment variable, and $\bX \in \mathbb{R}^d$ represents a vector of covariates uniformly distributed in $[0,1]^d$. We operate within the potential outcome framework, assuming the existence of potential outcomes $Y(1)$ and $Y(0)$, where $Y(a)$ represents the outcome that would be observed if an individual receives treatment $a\in\{0,1\}$. The ATE is defined as $\theta := \mathbb{E}[Y(1) - Y(0)]$, representing the average effect of the treatment $A$ on the outcome $Y$. To identify causal effects, we make the standard assumptions seen in \cite{rosenbaum1983central, crump2009dealing, imbens2015causal}.

\begin{assumption}\label{identification}
(a) Unconfoundedness: $\{Y(0),Y(1)\} \independent A \mid \bX$. (b) Consistency: $Y = Y(A)$. (c) Overlap: $\P(c_0<\pi^*(\bX)<1-c_0)=1$, where $c_0\in (0,1/2)$ is a constant and the propensity score (PS) function is defined as $\pi^*(\bx):=\P(A=1\mid\bX=\bx)$ for any $\bx\in[0,1]^d$. 
\end{assumption}

Define the true outcome regression function $\mu_a^*(\bx):=\E[Y(a)\mid \bX=\bx]$ for $a\in\{0,1\}$ and consider the doubly robust score function: for any $\eta=(\mu_1, \mu_0, \pi)$,
\begin{equation*}
 \psi(\bW;\eta):= \mu_1(\bX)-\mu_0(\bX)+\frac{A(Y-\mu_1(\bX))}{\pi(\bX)}-\frac{(1-A)(Y-\mu_0(\bX))}{1-\pi(\bX)}.
\end{equation*}

As the ATE parameter can be represented as $\theta=\E[\psi(\bW;\eta^*)]$, it can be estimated as the empirical average of the score functions as long as we plug in appropriate estimates of the nuisance functions $\eta^*=(\mu_1^*, \mu_0^*, \pi^*)$. In the following, we introduce the forests-based ATE estimator using the double machine-learning framework of \cite{chernozhukov2017double}. 

For any fixed integer $K \geq 2$, split the samples into $K$ equal-sized parts, indexed by $(\Isc_k)_{k=1}^K$. For the sake of simplicity, we assume $n := \#\Isc_k = N/K \in \mathbb{N}$. For each $k \leq K$, denote $\Isc_{-k} = \Isc \setminus \Isc_{k}$. Under Assumption \ref{identification}, we can identify the outcome regression function as $\mu_a^*(\bx) = \E(Y \mid \bX = \bx, A = a)$ for each $a \in \{0,1\}$. Hence, we construct $\muhat_a^{-k}(\cdot)$ using Algorithm \ref{alg:local RF}, based on samples $(Y_i,\bX_i)_{i \in \{i \in \Isc_{-k} : A_i = a\}}$. Additionally, we also construct $\pihat^{-k}(\cdot)$ using Algorithm \ref{alg:local RF}, based on samples $(A_i, \bX_i)_{i \in \Isc_{-k}}$. For the sake of simplicity, we denote $\mu_2(\cdot) := \pi(\cdot)$. The number of trees $B$ and the orders of polynomial forests are chosen in advance, where we use $q_j$ to denote the polynomial orders considered in the estimation of $\mu_j(\cdot)$ for each $j \in \{0,1,2\}$. Further denote $h_j := (\alpha_j, w_j, k_j)$ as the hyperparameters for estimating $\mu_j(\cdot)$. To appropriately select $h_j$, we further split the samples indexed by $\Isc_{-k}$ into training and validation sets. After obtaining the nuisance estimates $\widehat\eta^{-k}:=(\muhat_1^{-k},\muhat_0^{-k},\pihat^{-k})$ for each $k\leq K$, we define the ATE estimator as $\thetahat:=N^{-1}\sum_{k=1}^K\sum_{i\in\Isc_k}\psi(\bW_i;\widehat\eta^{-k}).$

While the double machine-learning framework offers a flexible and robust approach for estimating causal parameters like the ATE, the validity of statistical inference based on these methods requires careful consideration, as asymptotic normality depends on sufficiently fast convergence rates in nuisance estimation. For instance, even when minimax optimal nuisance estimation is achieved within the Lipschitz class, inference can still be inaccurate under worst-case scenarios when $d>1$. To address this, we focus on the widely used random forest method and develop techniques that enhance convergence rates. By integrating the split-balancing technique with local polynomial regression, we enhance the method's convergence rate, thereby providing solid theoretical guarantees for forest-based ATE estimation and delivering more reliable inference in practice. Additional discussion on the technical challenges can be found in Remark \ref{remark:ATE}.

Now, we introduce theoretical properties of the ATE estimator.

\begin{theorem}\label{thm:ATE}
Let Assumption \ref{identification} hold, $|Y|\leq M$, and $\E[\mathbbm{1}_{\{A=a\}}(Y(a)-\mu_a^*)]^2\geq C_0$ for each $a\in\{0,1\}$, with some positive constants $M$ and $C_0$. Suppose that $\mu_0^*\in \mathcal{H}^{q_0,\beta_0}$, $\mu_1^*\in \mathcal{H}^{q_1,\beta_1}$, and $\pi^*\in \mathcal{H}^{q_2,\beta_2}$, where $q_j\in \mathbb{N}$ and $\beta_j\in(0,1]$ for each $j\in\{0,1,2\}$. Let $w_j\in(0,1]$ and $\alpha_j\in(0,0.5]$ be constants. Choose any $B\geq1$ and $k_j\asymp N^{\frac{2(q_j+\beta_j)\log (1-\alpha_j)}{d\log (\alpha_j)+2(q_j+\beta_j)\log (1-\alpha_j)}}.$ Moreover, let 
$$d^2\log (\alpha_a)\log (\alpha_2)<4(q_a+\beta_a)(q_2+\beta_2)\log (1-\alpha_a)\log (1-\alpha_2)$$ for each $a\in\{0,1\}$. Then, as $N\to\infty$, 
$$\sigma^{-1}\sqrt{N}(\thetahat-\theta)\leadsto N(0,1)$$ and $\sigmahat^{-1}\sqrt{N}(\thetahat-\theta)\leadsto N(0,1)$, where $\widehat{\sigma}^2:= {N}^{-1}\sum_{k=1}^K \sum_{i\in\mathcal I_k}[\psi(W_i;\widehat\eta^{-k})-\widehat{\theta}]^2$. 
\end{theorem}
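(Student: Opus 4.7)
I will apply the cross-fitted double/debiased machine-learning template of \cite{chernozhukov2017double} to the AIPW score $\psi$. Two properties drive the argument: the Neyman orthogonality of $\psi$ at $\eta^*=(\mu_1^*,\mu_0^*,\pi^*)$, which forces the second-order remainder produced by plugging in $\widehat\eta^{-k}$ to be bounded by a \emph{product} of nuisance errors, and cross-fitting, which decouples $\widehat\eta^{-k}$ from the fold-$k$ data so that the stochastic-equicontinuity term is controlled by a simple conditional second moment. The uniform rate of Theorem~\ref{thm:local_consistency_uniform}, applied in turn to each of $\widehat\mu_1^{-k}$, $\widehat\mu_0^{-k}$ and $\widehat\pi^{-k}$, supplies precisely the inputs these two steps need.

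\textbf{Decomposition and remainder control.} Write
\[
\sqrt N(\thetahat-\theta)=\tfrac{1}{\sqrt N}\sum_{i=1}^N[\psi(\bW_i;\eta^*)-\theta]+\sqrt N\,R_N,
\]
with $R_N=N^{-1}\sum_k\sum_{i\in\Isc_k}[\psi(\bW_i;\widehat\eta^{-k})-\psi(\bW_i;\eta^*)]$. The leading sum converges weakly to $N(0,\sigma^2)$ by the Lindeberg CLT with $\sigma^2=\Var(\psi(\bW;\eta^*))>0$ (positivity uses $\E[\mathbbm{1}_{\{A=a\}}(Y(a)-\mu_a^*)]^2\ge C_0$ together with overlap). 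Centering each fold's summand at $\E[\cdot\mid\Isc_{-k}]$ splits $R_N=R_{N,1}+R_{N,2}$. Cross-fitting and $|Y|\le M$ give $\E[(R_{N,1}^{(k)})^2\mid\Isc_{-k}]\le Cn^{-1}\bigl(\|\widehat\mu_1^{-k}-\mu_1^*\|_\infty^2+\|\widehat\mu_0^{-k}-\mu_0^*\|_\infty^2+\|\widehat\pi^{-k}-\pi^*\|_\infty^2\bigr)$ on the event where $\widehat\pi^{-k}\in[c_0/2,1-c_0/2]$, a set of probability tending to one by the uniform consistency of $\widehat\pi^{-k}$, so $\sqrt N\,R_{N,1}=o_p(1)$. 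For $R_{N,2}$, expanding the AIPW score exactly yields
\[
R_{N,2}^{(k)}=\E\!\left[\frac{(\pi^*-\widehat\pi^{-k})(\widehat\mu_1^{-k}-\mu_1^*)}{\widehat\pi^{-k}}-\frac{(\widehat\pi^{-k}-\pi^*)(\widehat\mu_0^{-k}-\mu_0^*)}{1-\widehat\pi^{-k}}\;\middle|\;\Isc_{-k}\right],
\]
so $|R_{N,2}^{(k)}|\le C\sum_{a\in\{0,1\}}\|\widehat\mu_a^{-k}-\mu_a^*\|_\infty\,\|\widehat\pi^{-k}-\pi^*\|_\infty$. Theorem~\ref{thm:local_consistency_uniform} with the prescribed $k_j$ bounds this product by $O_p((\log N/N)^{s_a+s_2})$, where $s_j:=(q_j+\beta_j)\log(1-\alpha_j)/[d\log(\alpha_j)+2(q_j+\beta_j)\log(1-\alpha_j)]$. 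Clearing denominators shows
\[
s_a+s_2>\tfrac12\iff d^2\log(\alpha_a)\log(\alpha_2)<4(q_a+\beta_a)(q_2+\beta_2)\log(1-\alpha_a)\log(1-\alpha_2),
\]
which is exactly the stated hypothesis; therefore $\sqrt N\,R_{N,2}=o_p(1)$, and combining with the CLT, $\sigma^{-1}\sqrt N(\thetahat-\theta)\leadsto N(0,1)$.

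\textbf{Variance consistency and main obstacle.} For the studentized statement, expand $\widehat\sigma^2$ around $N^{-1}\sum_i[\psi(\bW_i;\eta^*)-\theta]^2$: the oracle average converges to $\sigma^2$ by the WLLN, and the cross terms vanish because $\|\widehat\eta^{-k}-\eta^*\|_\infty\convP 0$, $\thetahat\convP\theta$, and $\psi$ has a uniformly bounded influence on the good event for $\widehat\pi^{-k}$; Slutsky then delivers $\widehat\sigma^{-1}\sqrt N(\thetahat-\theta)\leadsto N(0,1)$. The main technical difficulty is the algebraic equivalence above: verifying that the parameter restriction in the statement is \emph{exactly} $s_a+s_2>1/2$ (a short but non-obvious manipulation that reorganizes the rational functions into the symmetric cross-product form), and maintaining, throughout every remainder term, the uniform interior bounds on $\widehat\pi^{-k}$ that keep the AIPW denominators controlled. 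Once these two points are in place, the remaining ingredients follow routinely from Theorem~\ref{thm:local_consistency_uniform} and the overlap hypothesis.
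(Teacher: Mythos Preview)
Your argument is correct and follows the same double-machine-learning template as the paper; the paper simply cites Theorem~2.1 of \cite{chernozhukov2017double} and checks its assumptions, whereas you re-derive the orthogonality/cross-fitting decomposition by hand. One substantive difference: you invoke the \emph{uniform} rates of Theorem~\ref{thm:local_consistency_uniform} (sup-norm) to control both the stochastic-equicontinuity term $R_{N,1}$ and the bias $R_{N,2}$, while the paper uses the IMSE rates of Theorem~\ref{thm:local_consistency} for the product-rate condition and Lemma~\ref{lem:pihat} (which is itself a consequence of Theorem~\ref{thm:local_consistency_uniform}) only for the propensity overlap. Both routes yield the same algebraic condition $s_a+s_2>1/2$; the paper's $L^2$ route is slightly cleaner because the prescribed $k_j$ is tuned for IMSE (so the rate is exactly $N^{-s_j}$ with no log factor), whereas with your sup-norm route the rate under that same $k_j$ carries a harmless $\sqrt{\log N}$ factor---your stated bound $O_p((\log N/N)^{s_a+s_2})$ is not quite the form that emerges, but the correct form $O_p((\log N)^{1/2}N^{-s_j})$ still gives $\sqrt N R_{N,2}=o_p(1)$ under the hypothesis, so the conclusion is unaffected.
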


\begin{remark}[Technical challenges of forest-based ATE estimation]\label{remark:ATE}
It is worth emphasizing that the following aspects are the main challenges in our analysis:

(a) Establish convergence rates for the integrated mean squared error (IMSE) of the nuisance estimates. As the ATE is a parameter defined through integration over the entire population, we require nuisance convergence results in the sense of IMSE; point-wise mean squared error results are insufficient. This distinguishes our work from \cite{wager2018estimation,athey2019generalized}, which focused on the estimation and inference for the conditional average treatment effect (CATE).

(b) Develop sufficiently fast convergence rates through higher-order smoothness. The asymptotic normality of the double machine-learning method \cite{chernozhukov2017double} requires a product-rate condition for the nuisance estimation errors. If we only utilize the Lipschitz continuity of the nuisance functions, root-$N$ inference is ensured only when $d=1$. In other words, we need to establish methods that can exploit the higher-order smoothness of nuisance functions as long as $d>1$. As shown in Theorem \ref{thm:ATE}, the higher the smoothness levels are, the larger dimension $d$ we allow for.

(c) Construct stable propensity score (PS) estimates. As demonstrated in Lemma \ref{lem:pihat} of the Supplement, as long as we ensure a sufficiently large minimum leaf size $k_2\gg\log^3(N)$ for the forests used in PS estimation, we can guarantee that each terminal leaf contains a non-negligible fraction of samples from both treatment groups, provided the overlap condition holds for the true PS function as in Assumption \ref{identification}. Consequently, we can stabilize the PS estimates, avoiding values close to zero.
\end{remark}

\section{Numerical Experiments}

In this section, we assess the numerical performance of the proposed methods in non-parametric regression through both simulation studies and real-data analysis. Additional results for the ATE estimation problem are provided in Section \ref{sec:sim-ATE} of the Supplement.

\subsection{Simulations for the conditional mean estimation}\label{sec:sim}

We first focus on the estimation of conditional mean function $m(x)=\E[Y\mid\bX=\bx]$. Generate i.i.d. covariates $\bX_i\sim\mathrm{Uniform}[0,1]^d$ and noise $\varepsilon_i\sim N(0,1)$ for each $i\leq N$. 

Consider the following models: (a) $Y_i=10\sin(\pi \bX_{i1}\bX_{i2})+20(\bX_{i3}-5)^2+10\bX_{i4}+5\bX_{i5}+\varepsilon_i$, (b) $Y_i=20\exp((\sum_{j=1}^s \bX_{ij}-0.5s)/\sqrt s) +\varepsilon_i$. In Setting (a), we utilize the well-known Friedman function proposed by \cite{friedman1991multivariate}, which serves as a commonly used benchmark for assessing non-parametric regression methods \cite{zhang2012bias,hothorn2021predictive,lu2021unified}. We set the covariates' dimension to $d=5$ and consider sample sizes $N\in\{500,1000\}$. In Setting (b), we investigate the performance of the forests under various sparsity levels, keeping $d=10$, $N=1000$, and choosing $s\in\{2,6,10\}$.

We implement the proposed adaptive split balancing forest (ASBF, Algorithm \ref{alg:balance_RF}), local linear adaptive split balancing forest (LL-ASBF, Algorithm \ref{alg:local RF} with $q=1$), and local quadratic adaptive split balancing forest (LQ-ASBF, Algorithm \ref{alg:local RF} with $q=2$). In Setting (b) where various sparsity levels are considered, we further evaluate the numerical performance of the sparse adaptive split balancing forest (S-ASBF, Algorithm \ref{alg:balance_GRF}), which is more suitable for scenarios with sparse structures. We choose $B=200$ and utilize $80\%$ of samples for training purposes, reserving the remaining $20\%$ for validation to determine the optimal tuning parameters $(\alpha, k)$, as well as $\mbox{mtry}$ for the sparse versions. For the sake of simplicity, we fix the honest fraction $w=0.5$ and do not perform additional subsampling.

\begin{figure*}[h!]
	\centering
\captionsetup[subfloat]{labelformat=empty}
\subfloat[(a) Sample size $N=500$]{\includegraphics[scale=0.57]{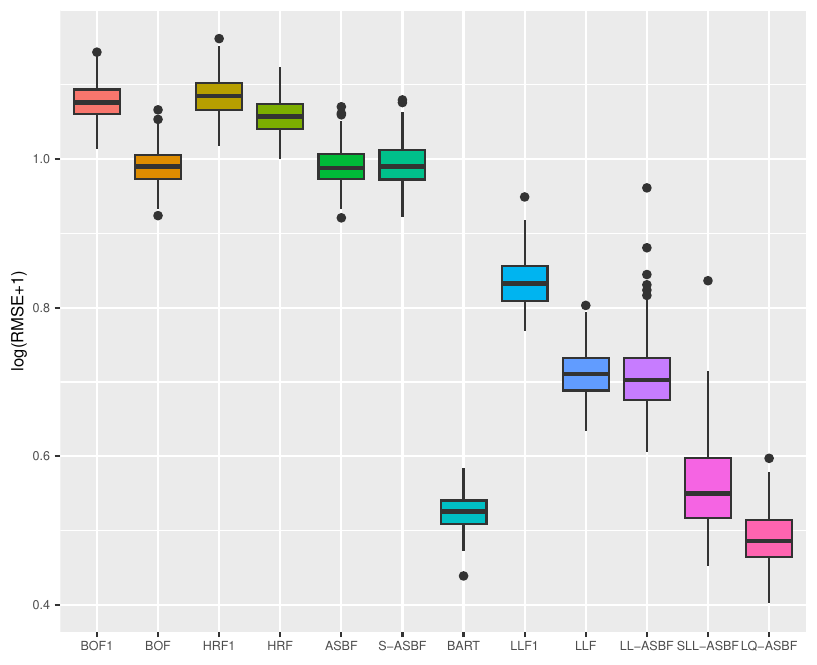}}
\subfloat[(b) Sample size $N=1000$]{
\includegraphics[scale=0.5]{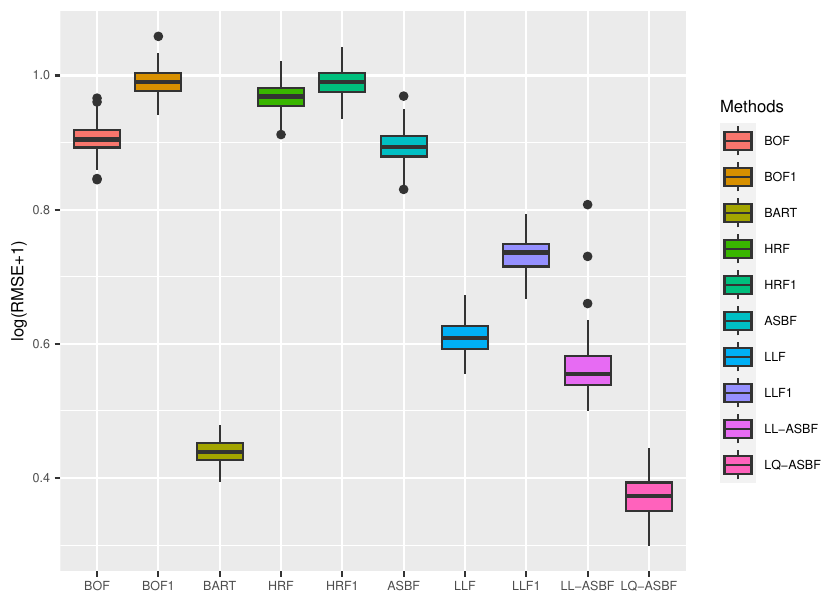}}
\caption{Boxplots of $\log(\text{RMSE}+1)$ under Setting (a) with a varying sample size.}\label{fig:a}
\end{figure*}

We also consider Breiman's original forest (BOF), honest random forest (HRF), local linear forest (LLF), and Bayesian additive regression trees (BART). BOF is implemented using the R package \texttt{ranger} \cite{wright2015ranger}, HRF and LLF are implemented using the R package \texttt{grf} \cite{tibshirani2023package}, and BART is implemented by the \texttt{BART} package \cite{sparapani2021nonparametric}. HRF and LLF methods involve the tuning parameter $\mathrm{mtry}$, denoting the number of directions tried for each split. For comparison purposes, we also consider modified versions with fixed $\mathrm{mtry}=1$. This corresponds to the case where splitting directions are randomly chosen and is the only case that has been thoroughly studied theoretically \cite{wager2018estimation,friedberg2020local}. We denote the modified versions of HRF and LLF as HRF1 and LLF1, respectively. The only difference between HRF1 and the proposed ASBF is that ASBF considers a balanced splitting approach for the selection of splitting directions, instead of a fully random way; a parallel difference exists between LLF1 and LL-ASBF. Additionally, we also introduce a modified version of BOF with the splitting direction decided through random selection, denoted as BOF1.

We evaluate the root mean square error (RMSE) within 1000 test points and repeat the procedure 200 times. Figures \ref{fig:a} and \ref{fig:b} depict boxplots comparing the log-transformed RMSE of all the considered methods across various settings introduced above.

\begin{figure*}[h!]
	\centering
\captionsetup[subfloat]{labelformat=empty}
\subfloat[(a) Sparsity level $s=2$]{\includegraphics[scale=0.55]{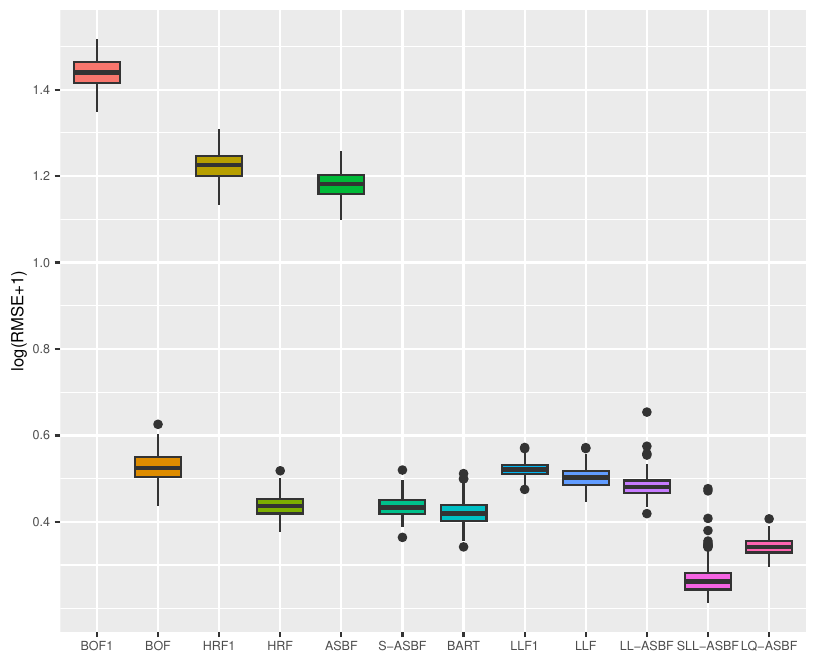}}
\hspace{-0.02\linewidth}
\subfloat[(b) Sparsity level $s=6$]{
\includegraphics[scale=0.55]{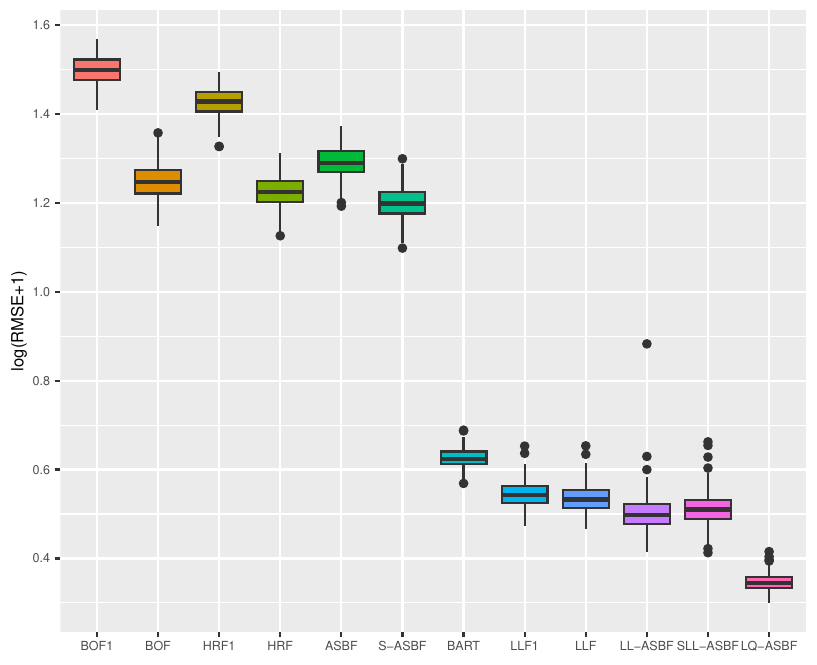}}
\hspace{-0.02\linewidth}
\subfloat[(c) Sparsity level $s=10$]{
\includegraphics[scale=0.5]{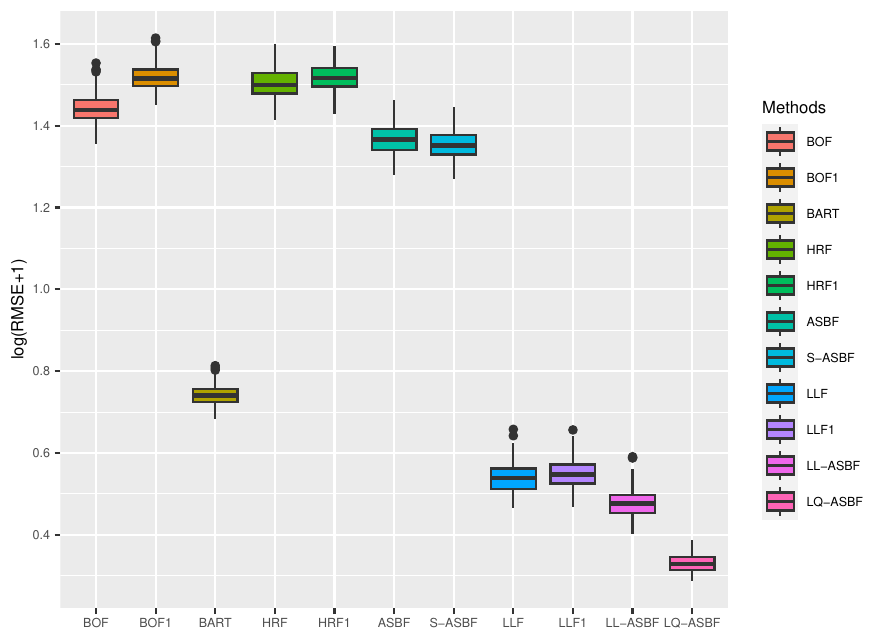}}
\caption{Boxplots of $\log(\text{RMSE}+1)$ under Setting (b) with a varying sparsity level.}\label{fig:b}
\end{figure*}

As shown in Figures \ref{fig:a} and \ref{fig:b}, LQ-ASBF consistently exhibits the best performance across all the considered settings. When we focus on the choice of $\mbox{mtry}=1$, the proposed ASBF method consistently outperforms the other local averaging methods BOF1 and HRF1, highlighting the distinct advantages offered by our balanced method in contrast to random feature selection. In addition, when the true model is dense, as shown in Figures \ref{fig:a} and \ref{fig:b}(c), the ASBF method (with a fixed $\mbox{mtry}=1$) outperforms the general BOF and HRF methods, even when their mtry parameters are appropriately tuned. The only exception is when $N=500$ under Setting (a), where ASBF and BOF show similar performance. In sparse scenarios, as demonstrated in Figures \ref{fig:b}(a)-(b), the proposed generalized sparse version S-ASBF, with an appropriately tuned mtry, clearly outperforms ASBF, especially when the sparsity level is small. Overall, the S-ASBF method consistently leads to a smaller RMSE than BOF and HRF methods in Figure \ref{fig:b} for all considered sparsity levels. The only exception occurs when $s=2$, where S-ASBF and HRF exhibit similar behaviors. This similarity arises because, in scenarios with a small true sparsity level, the optimal mtry parameter is close to the dimension $d$; otherwise, it is likely that all candidate directions are redundant for certain splits. Meanwhile, when $\mbox{mtry}=d$, there is no difference between the balanced and random approaches, as we always need to consider all directions as candidate directions for each split. Lastly, for forest-based local linear methods, we observe that the proposed balanced method LL-ASBF consistently outperforms both LLF1 and LLF under all considered scenarios. 

\subsection{Application to wine quality and abalone datasets}

We further assess the performance of the considered methods in Section \ref{sec:sim} using the wine quality and abalone datasets, both available from the UCI repository \cite{asuncion2007uci}.

The wine quality dataset comprises red and white variants of the Portuguese ``Vinho Verde'' wine, with 4898 observations for white and 1599 observations for red. The quality variable serves as the response, measured on a scale from 0 (indicating the worst quality) to 10 (representing the highest quality). Additionally, the dataset includes 11 continuous features. For detailed information about the data, refer to \cite{cortez2009modeling}.

The abalone dataset consists of 1 categorical feature and 7 continuous features, along with the age of abalones determined by cutting through the shell cone and counting the number of rings. This age is treated as the response variable. The categorical feature pertains to sex, classifying the entire dataset into three categories: male (1528 observations), female (1307 observations), and infant (1342 observations). For more details, refer to \cite{nash1994population}.

\begin{table}[h!]
	\centering
	\caption{Root mean square error across methods for wine quality and age of abalone} \label{table:real}
	\tabcolsep=0.14cm
	\scalebox{0.75}{
	\begin{tabular}{ccccccc|cccccc}
		\toprule
		Method&BOF1&BOF&HRF1&HRF&ASBF&S-ASBF&BART&LLF1&LLF&LL-ASBF&SLL-ASBF&LQ-ASBF\\
		\hline
		Wine (overall)&0.830&0.826&0.833&0.828&\bf0.808&\bf0.804&0.824&0.773&0.767&\bf0.728&\bf0.725&\bf{0.715}\\
		\hdashline
		Red wine&0.809&0.796&0.819&0.809&\bf0.794&\bf0.791&0.799&0.733&0.735&\bf0.729&\bf0.719&\bf{0.719}\\
		White wine&0.837&0.835&0.837&0.834&\bf0.812&\bf0.809&0.832&0.786&0.778&\bf0.728&\bf0.727&\bf{0.714}\\
		\hline
		Abalone (overall)&2.619&2.629&2.608&2.600&\bf2.551&\bf2.550&2.611&2.557& 2.556&\bf2.539&\bf2.537&\bf{2.497}\\
		\hdashline			
		Male abalone&2.685&2.697&2.687&2.682&\bf2.679&\bf2.677&2.717&2.635&2.633&\bf2.622&\bf2.622&\bf{2.570}\\
		Female abalone&3.063&3.066&3.019&3.005&\bf2.990&\bf2.989&3.004&2.955&2.957&\bf2.936&\bf2.931&\bf{2.889}\\
		Infant abalone&2.006&2.025&2.025&2.018&\bf1.839&\bf1.838&2.016&1.993&1.988&\bf1.962&\bf1.961&\bf{1.944}\\
		\bottomrule
	\end{tabular}}
\end{table}

Based on the categorical features, we initially divide the wine quality dataset into two groups (red and white) and the abalone dataset into three groups (male, female, and infant). Random forests are then constructed based on samples within each of the sub-groups. We standardize the continuous features using min-max scaling, ensuring that all features fall within the range $[0,1]$. Each group of the data is randomly partitioned into three parts. With a total group size of $N$, $\lceil 3N/5 \rceil$ observations are used for training, $\lceil N/5 \rceil$ observations for validation to determine optimal tuning parameters, and the prediction performance of the considered methods is reported based on the remaining testing observations. The tree size and subsampling ratio for honesty are chosen as $B=200$ and $w=0.5$ in advance.

\begin{figure*}[h!]
	\centering
\captionsetup[subfloat]{labelformat=empty}
\subfloat[Wine quality]{\includegraphics[height=0.3\linewidth,width=0.4\linewidth]{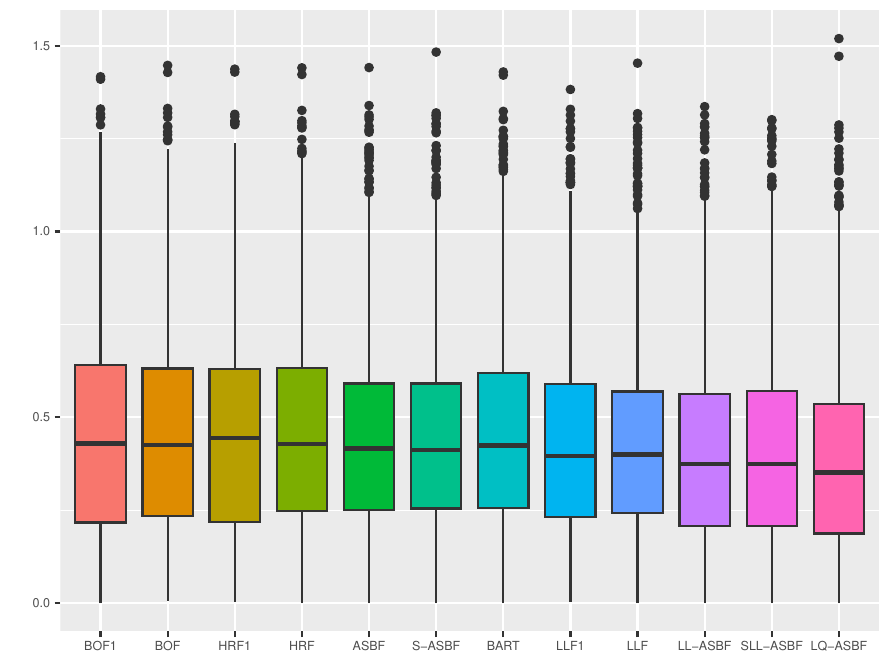}}
\subfloat[Age of abalone]{\includegraphics[height=0.3\linewidth,width=0.45\linewidth]{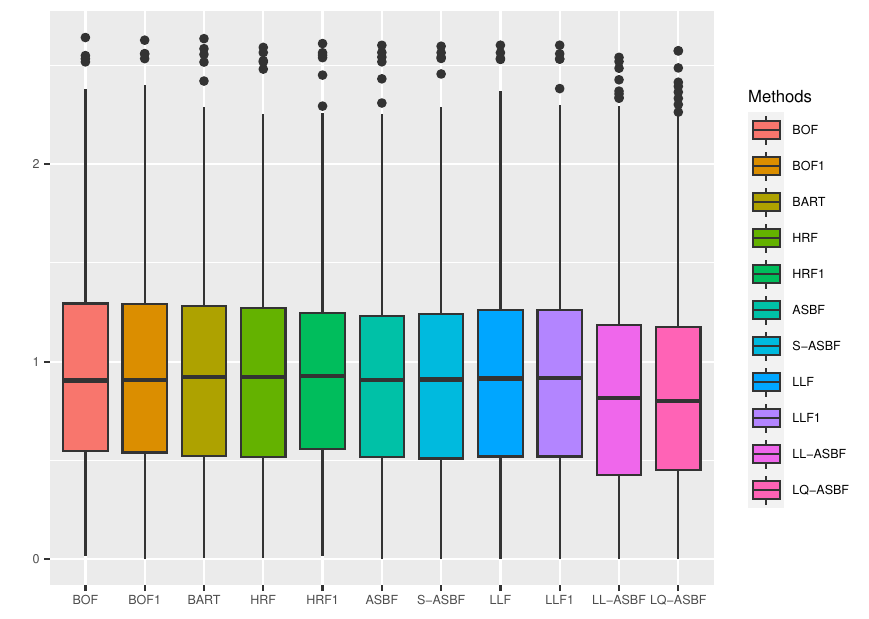}}
\caption{Boxplots of the $\log(\text{absolute errors}+1)$ for wine quality and age of abalone.}\label{fig:real}
\end{figure*}

Table \ref{table:real} reports the prediction performance of the considered random forest methods within each of the sub-groups. The proposed ASBF method and the sparse version S-ASBF outperform existing local averaging methods (including BOF, BOF1, HRF, and HRF), as well as BART, across all the sub-groups. The LL-ASBF method further outperforms existing local linear methods (LLF and LLF1), while the LQ-ASBF provides the most accurate prediction overall. Boxplots of the log-transformed absolute errors are also included in Figure \ref{fig:real}, illustrating the overall performance for the wine quality dataset (including white and red) and the age of abalone dataset (including male, female, and infant).

\section{Discussion}\label{sec:dis}

Since the introduction of random forest methods by \cite{breiman2001random}, the infusion of randomness has played a pivotal role in mitigating overfitting and reducing the variance associated with individual greedy trees. However, this work raises pertinent concerns and queries regarding the over-reliance on such auxiliary randomness. Even for a simple median forest, opting for completely random splitting directions does not yield optimal results. Conversely, when we choose directions in a less random, or more balanced manner, we can achieve minimax results for smooth functions. Notably, as auxiliary randomness lacks information about the conditional distribution $\P_{Y\mid\bX}$ of interest, overemphasizing its role in constructing regression methods does not necessarily improve results; rather, it can compromise the approximation power of tree models. Our theoretical and numerical findings suggest that, especially for low-dimensional problems, adopting a more balanced approach in constructing trees and forests leads to more efficient outcomes. While our numerical results also indicate the efficacy of the proposed balanced method in complex scenarios, such as those involving sparse structures, further in-depth investigation is needed to understand its performance comprehensively in intricate situations.

\appendix

\bibliographystyle{abbrv}
\bibliography{ref}


\renewcommand{\thetheorem}{S.\arabic{theorem}}
\renewcommand{\thelemma}{S.\arabic{lemma}}


\clearpage\newpage 
\begin{center}
\textbf{\uppercase{Supplementary Materials for ``Adaptive Split Balancing for Optimal Random Forest''}}
\end{center}

\par\medskip

\paragraph*{Notation} 
We denote rectangles $L\in [0,1]^d$ by $R=\bigotimes_{j=1}^{d}[a_j,b_j]$, where $0\leq a_j<b_j \leq 1$ for all $j=1,\dots,d$, writing the Lebesgue measure of $L$ as $\lambda(L)=\prod_{j=1}^{d}(b_j-a_j)$.
The indicator function of a subset $A$ of a set $X$ is a function $\mathbbm{1}_A$ defined as $\mathbbm{1}_A=1$ if $x\in A$, and $\mathbbm{1}_A=0$ if $x\notin A$.
For any rectangle $L\in[0,1]^d$, we denote $\mu(L):=\E[\mathbbm{1}_{\{ \bX \in L\}}]$ as the expected fraction of training examples falling within $L$. Denote $\# L:=\sum_{i\in\mathcal I}\mathbbm{1}_{\{ \bX_i\in L\}}$ as the number of training samples $\bX_i$ falling within $L$. For any $n\times n$ matrix $\bA$, let $\Lambda_{\min}(\bA)$ and $\Lambda_{\max}(\bA)$ denote the smallest and largest eigenvalues of the matrix $\bA$, respectively. A $d$-dimensional vector of all ones is denoted with $\mathbf{1}_{d}$. A tree grown by recursive partitioning is called $(\alpha,k)$-regular for some $\alpha\in(0,0.5]$ and $k \in \mathbb{N}$ if the following conditions to hold for the $\Isc$ sample: (a) each child node contains at least an $\alpha$-fraction of observations within the parent node, and (b) the number of observations within terminal leaves is between $k$ and $2k-1$.

\section{The sparse local adaptive split balancing forests}\label{sec:SLASBF}

In the following, we provide a generalized sparse version of the local adaptive split balancing forests proposed in Algorithm \ref{alg:local RF}.

\begin{algorithm} [h!] \caption{Sparse local adaptive split balancing forests}\label{alg:local GRF}
\begin{algorithmic}[1]
\Require Observations $\S_N=(\bX_i,Y_i)_{i=1}^N$, with parameters $B\geq1$, $\alpha\in(0,0.5]$, $w\in(0,1]$, $k\leq\lfloor wN\rfloor$, $\mbox{mtry}\in\{1,\dots,d\}$, and $q\in\mathbb N$.
\State Calculate the polynomial basis $G(\bX_i)\in\R^{\dbar}$ for each $i\leq N$.
\For{$b=1,\dots,B$}
\State Divide $\S_N$ into disjoint $\S_\Isc^{(b)}$ and $\S_\Jsc^{(b)}$ with $|\Isc^{(b)}|=\lfloor wN\rfloor$ and $|\Jsc^{(b)}|=N-\lfloor wN\rfloor$.
\State Create a collection of index sets, $\mathcal Q([0,1]^d)=\{Q_1, \dots, Q_d\}$, with each $Q_j \subset \{1, \dots, d\}$ containing \textit{mtry} directions while ensuring that each direction $1, \dots, d$ appears in exactly \textit{mtry} of the sets $Q_1, \dots, Q_d$.\Repeat { For each current node $L\subseteq[0,1]^d$:}
\State Randomly select a set $Q\in\mathcal Q(L)$.
\State Partition along $j \in Q$-th direction to minimize 
\begin{align*}
\sum_{i\in\Jsc^{(b)}}(\widehat{Y}_i-\widetilde{Y}_1)^2\mathbbm1\{\bX_i\in L_1\}+\sum_{i\in\Jsc^{(b)}}(\widehat{Y}_i-\widetilde{Y}_1)^2\mathbbm1\{\bX_i\in L_2\},\\
\mbox{ensuring } \#\{i\in\Isc^{(b)}:\bX_i\in L_l\}\geq\alpha\#\{i\in\Isc^{(b)}:\bX_i\in L\},\; l=1,2.
\end{align*}
\If{$\#\mathcal{Q}(L) > 1$}
 \State $\mathcal{Q}(L_1) = \mathcal{Q}(L_2) = \mathcal{Q}(L) \setminus \{Q\}$.
\Else
 \State Randomly reinitialize $\mathcal{Q}(L_1)$ and $\mathcal{Q}(L_2)$ as in Steps 4.
\EndIf
\Until{each current node contains $k$ to $2k-1$ samples $\S_\Isc^{(b)}$.}
\State Estimate $\bbetahat(\bx,\xi_b)$ as defined in \eqref{def:gammahat} using observations $\S_\Isc^{(b)}$.
\State The $b$-th sparse local adaptive split balancing tree: $T_\mathrm{L}(\bx,\xi_b):=G(\bx)^\top\bbetahat(\bx,\xi_b)$.
\EndFor\\
\Return The sparse local adaptive split balancing forest $\mhat_\mathrm{L}(\bx):=B^{-1}\sum_{b=1}^BT_\mathrm{L}(\bx,\xi_b)$.
\end{algorithmic}
\end{algorithm}

The generalized version considers an extra tuning parameter mtry, which has been also introduced in Algorithm \ref{alg:balance_GRF}, and performs local polynomial regressions within the terminal leaves as in Algorithm \ref{alg:local RF}. It is worth noting that Algorithms \ref{alg:balance_RF}-\ref{alg:local RF} are all special cases of the most general version Algorithm \ref{alg:local GRF}.

\section{Simulations for the ATE estimation}\label{sec:sim-ATE}

In this section, we evaluate the behavior of the forest-based ATE estimator proposed in Section \ref{sec:ATE} through simulation studies.

We focus on the estimation of $\theta=\E[Y(1)-Y(0)]$ and describe the considered data generating processes below. Generate i.i.d. covariates $\bX_i\sim\mathrm{Uniform}[0,1]^d$ and and noise $\varepsilon_i\sim N(0,1)$ for each $i\leq N$. Let $A_{i}\mid \bX_i \sim \mathrm{Bernoulli}(\pi^*(\bX_i))$ for each $i\leq N$. The outcome variables are generated as $Y_i=A_iY_i(1)+(1-A_i)Y_i(0)$. Consider the following models for the propensity score and outcomes:

\begin{itemize}
	\item[(a)] Consider $\pi^*(\bX_i)=((\sum_{j=1}^d\bX_{ij})/d+1.1)/((\sum_{j=1}^d\bX_{ij})/d+2)$, $Y_i(1)=(\sum_{j=1}^{d-1}\bX_{ij}\bX_{i(j+1)}+\bX_{id}\bX_{i1})/d+\varepsilon_i$, and $Y_i(0)=-(\sum_{j=1}^{d-1}\bX_{ij}\bX_{i(j+1)}+\bX_{id}\bX_{i1})/d+\varepsilon_i$.
	\item[(b)] Consider $\pi^*(\bX_i)=(\sum_{j=1}^d\bX_{ij})/(\sum_{j=1}^d\bX_{ij}+d)$, $Y_i(1)=2(\sum_{j=1}^{d-1}\bX_{ij}\bX_{i(j+1)}+\bX_{id}\bX_{i1})+\varepsilon_i$, and $Y_i(0)=-2(\sum_{j=1}^{d-1}\bX_{ij}\bX_{i(j+1)}+\bX_{id}\bX_{i1})+\varepsilon_i$.
\end{itemize}

\begin{table}[h!]
	\centering
	\caption{Simulations for the forest-based ATE estimation. Bias: empirical bias; RMSE: root mean square error; Length: average length of the $95\%$ confidence intervals; Coverage: average coverage of the $95\%$ confidence intervals. All the reported values (except Coverage) are based on robust (median) estimates.} \label{table:ATE}
	\begin{tabular}{lccccccccc}
		\toprule
		Method&Bias&RMSE&Length&Coverage&&Bias&RMSE&Length&Coverage\\
		\hline
		\multicolumn{1}{c}{} & \multicolumn{4}{c}{ \cellcolor{gray!50} Setting (a): $N=1000,d=5$}&\multicolumn{1}{c}{}& \multicolumn{4}{c}{ \cellcolor{gray!50} Setting (b): $N=500,d=5$}\\
		\cline{2-5}
		BOF1&0.020&0.060&0.298&0.950&&-0.018&0.108&0.698&0.955\\
		BOF&0.020&0.062&0.303&0.970&&-0.012&0.106&0.714&0.945\\
		HRF1&0.022&0.052&0.270&0.960&&0.020&0.100&0.631&0.955\\
		HRF&0.018&0.053&0.270&0.950&&0.015&0.100&0.634&0.955\\
		ASBF&\bf{0.016}&\bf{0.051}&0.263&0.955&&\bf{0.011}&\bf{0.098}&0.627&0.950\\
		\hdashline
		BART&0.022&0.055&0.269&0.960&&-0.015&0.104&0.632&0.940 \\
		LLF1&0.016&0.052&0.271&0.960&&-0.011&0.099&0.619&0.935\\
		LLF&0.015&0.051&0.271&0.950&&-0.016&0.102&0.622&0.930\\
		LL-ASBF&\bf{0.013}&\bf{0.049}&0.267&0.960&&\bf{-0.010}&\bf{0.097}&0.616&0.945\\
		LQ-ASBF&\bf{0.009}&\bf{0.047}&0.261&0.950&&\bf{-0.005}&\bf{0.092}&0.613&0.940\\
		\bottomrule
	\end{tabular}
\end{table}

In settings (a) and (b), we designate sample sizes as 1000 and 500, respectively, with covariate dimensions fixed at $d=5$. Each setting is replicated 200 times. The results, presented in Table \ref{table:ATE}, show that in both settings, all considered methods exhibit coverages close to the desired $95\%$. In terms of estimation, the ATE estimator based on our proposed ASBF method outperforms other local averaging methods (BOF, BOF1, HRF, HRF1), as well as BART. Across both settings, we observe smaller biases (in absolute values) and RMSEs, highlighting the superior importance of the balanced technique. Furthermore, the proposed LL-ASBF consistently outperforms existing local linear methods LLF and LLF1. Notably, LQ-ASBF exhibits the best performance among the considered settings.

\section{Auxiliary Lemmas}\label{sec:lemma}

\begin{lemma}[Theorem 7 of \cite{wager2015adaptive}]\label{lem:R cardinality}
Let $\mathcal{D}=\{1,2,\dots,d\}$ and $\omega, \epsilon\in (0,1)$. Then, there exists a set of rectangles $\mathcal{R}_{\mathcal{D},\omega,\epsilon}$ such that the following properties hold. Any rectangle $L$ of volume $\lambda(L) \geq \omega$ can be well approximated by elements in $\mathcal{R}_{\mathcal{D},\omega,\epsilon}$ from both above and below in terms of Lebesgue measure. Specifically, there exist rectangles $R_{-},R_{+}\in\mathcal{R}_{\mathcal{D},\omega,\epsilon}$ such that
\begin{align*}
R_{-}\subseteq L \subseteq R_{+}\;\;\text{and}\;\; \exp\{-\epsilon\}\lambda(R_{+})\leq \lambda(L) \leq \exp\{\epsilon\}\lambda(R_{-}).
\end{align*}
Moreover, the set $\mathcal{R}_{\mathcal{D},\omega,\epsilon}$ has cardinality bounded by
\begin{align*}
\# \mathcal{R}_{\mathcal{D},\omega,\epsilon}=\frac{1}{\omega}\left( \frac{8d^2}{\epsilon^2}\left(1+\log_{2}\left \lfloor \frac{1}{\omega} \right \rfloor \right)\right)^d\cdot(1+O(\epsilon)).
\end{align*}
\end{lemma}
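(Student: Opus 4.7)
The plan is to construct the set $\mathcal{R}_{\mathcal{D},\omega,\epsilon}$ explicitly by discretizing, for each of the $d$ coordinate directions, both the length and the starting position of each interval factor of the rectangle, and then counting. The key observation driving the construction is that a rectangle $L=\bigotimes_{j=1}^{d}[a_j,b_j]\subseteq[0,1]^d$ with $\lambda(L)=\prod_j l_j\geq\omega$, where $l_j:=b_j-a_j\in(0,1]$, has bounded total dyadic depth: defining integer scales $k_j\geq 0$ by $2^{-k_j-1}<l_j\leq 2^{-k_j}$, one has $\sum_{j}k_j\leq\log_2(1/\omega)$. This coupling is what prevents a naive $(1/\omega)^d$ blow-up and delivers the single factor $1/\omega$ outside the $d$-th power.

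The construction proceeds as follows. For each coordinate $j$ and each admissible dyadic level $k_j\in\{0,1,\dots,\lfloor\log_2(1/\omega)\rfloor\}$, I would place a geometric subgrid on the length range $(2^{-k_j-1},2^{-k_j}]$ of the form $(1-\epsilon/(2d))^m\cdot 2^{-k_j}$ for $m=0,1,\dots,\lceil 2d\log 2/\epsilon\rceil$, yielding $O(d/\epsilon)$ candidate lengths at that level. For the starting position, I would place a uniform grid on $[0,1]$ with spacing $\epsilon\cdot 2^{-k_j}/(4d)$, giving at most $O(d\,2^{k_j}/\epsilon)$ position choices. The set $\mathcal{R}_{\mathcal{D},\omega,\epsilon}$ is then the family of all rectangles whose $j$-th factor interval is built from some such triple $(k_j,m_j,a_j^\ast)$ per coordinate, subject to the joint constraint $\sum_j k_j\leq\log_2(1/\omega)$.

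To produce $R_+$ containing $L$, in each coordinate I round $a_j$ down to the nearest position gridpoint at the dyadic scale $k_j$ of $l_j$, and round $l_j$ up to the nearest multiplicative gridpoint at that scale. The position rounding inflates the interval length additively by at most $\epsilon\,2^{-k_j}/(4d)\leq\epsilon l_j/(2d)$, and the multiplicative rounding inflates it by a further factor $\leq(1-\epsilon/(2d))^{-1}=1+O(\epsilon/d)$, giving a per-coordinate relative length inflation of at most $1+\epsilon/d$. Taking the product over $d$ coordinates yields $\lambda(R_+)\leq(1+\epsilon/d)^d\lambda(L)\leq e^{\epsilon}\lambda(L)$. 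The construction of $R_-\subseteq L$ is symmetric, rounding $a_j$ up and $l_j$ down within its level.

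For the cardinality, the number of rectangles is bounded by
\[
\sum_{(k_1,\dots,k_d):\;\sum_j k_j\leq\lfloor\log_2(1/\omega)\rfloor}\prod_{j=1}^{d}\left[\frac{C\,d}{\epsilon}\cdot\frac{C\,d\,2^{k_j}}{\epsilon}\right]=\left(\frac{C^2 d^2}{\epsilon^2}\right)^{d}\sum_{\vec k}2^{\sum_j k_j},
\]
and bounding $\sum_{\vec k:\,\sum k_j\leq K}2^{\sum k_j}\leq\sum_{s=0}^{K}\binom{s+d-1}{d-1}2^s\leq 2^K\binom{K+d}{d}$ with $K=\lfloor\log_2(1/\omega)\rfloor$ and $2^K\leq 1/\omega$ recovers the claimed form $\omega^{-1}(8d^2/\epsilon^2\cdot(1+\log_2\lfloor 1/\omega\rfloor))^d\cdot(1+O(\epsilon))$ after absorbing absolute constants. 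The main obstacle I anticipate is tracking the constants through the two-stage (dyadic plus multiplicative) length discretization so that the total per-coordinate inflation is genuinely at most $1+\epsilon/d$ rather than a slightly larger constant which would spoil the $e^{\epsilon}$ bound on the volume ratio; the combinatorial sum is otherwise routine.
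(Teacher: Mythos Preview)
The paper does not prove this lemma; it is quoted verbatim as Theorem~7 of Wager and Walther (2015) and used as a black box, so there is no in-paper argument to compare against. Your sketch is the standard construction and, as far as one can tell, the same one used in the cited source: dyadic level plus a fine multiplicative sub-grid for the side lengths, a uniform position grid at each scale, and the crucial observation that $\prod_j l_j\ge\omega$ forces $\sum_j k_j\le\log_2(1/\omega)$, which is exactly what collapses the naive $(1/\omega)^d$ to a single $1/\omega$. Your cardinality count via $\sum_{\vec k}2^{\sum k_j}\le (K+1)^d 2^K$ with $K=\lfloor\log_2(1/\omega)\rfloor$ reproduces the stated form with the right constant $8$.

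One small point to tighten: for the inner approximation $R_-$, rounding $a_j$ up and $l_j$ down \emph{independently} does not automatically yield $[a_j',a_j'+l_j']\subseteq[a_j,b_j]$, since $a_j'+l_j'$ can overshoot $b_j$ by up to one position-grid step. The fix is immediate---either discretize both endpoints directly, or after moving $a_j\mapsto a_j'$ round the length down to the largest grid value not exceeding $b_j-a_j'$---and the extra loss is still $O(\epsilon/d)$ per coordinate, so the $e^{-\epsilon}$ volume bound survives. This is the only place where the ``tracking the constants'' concern you flag actually bites.
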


\begin{lemma}[Theorem 10 of \cite{wager2015adaptive}]\label{lem:R lowerbound}
Suppose that $w\in(0,1]$, and $\alpha\in(0,0.5]$ are constants. Choose any $k\leq n = \lfloor wN\rfloor$ satisfying $k\gg\log(N)$. 
Let $\mathcal{L}$ be the collection of all possible leaves of partitions satisfying $(\alpha,k)$-regular. Let $\mathcal{R}_{\mathcal{D},\omega,\epsilon}$ be as defined in Lemma \ref{lem:R cardinality}, with $\omega$ and $\epsilon$ choosing as
\begin{align}\label{par: R}
\omega=\frac{k}{2n}\;\;\text{and}\;\;\epsilon=\frac{1}{\sqrt k}.
\end{align}
Then, there exists an $n_0 \in\mathbb{N}$ such that, for every $n \geq n_0$, the following statement holds with probability at at least $1-n^{-1/2}$: for each leaf $L\in\mathcal{L}$, we can select a rectangle
$\Rbar\in \mathcal{R}_{\mathcal{D},\omega,\epsilon}$ such that $\Rbar\subseteq L$, $\lambda(L) \leq \exp\{\epsilon\}\lambda(\Rbar)$, and
\begin{align*}
\# L-\# \Rbar\leq 3\epsilon \# L +2\sqrt{3\log(\#\mathcal{R}_{\mathcal{D},\omega,\epsilon})\# L}+O\left(\log(\#\mathcal{R}_{\mathcal{D},\omega,\epsilon})\right).
\end{align*}
\end{lemma}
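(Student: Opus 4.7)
The plan is to follow the sandwiching strategy of Wager 2015: approximate every data-dependent $(\alpha,k)$-regular leaf $L$ from above and below by two deterministic rectangles in $\mathcal{R}_{\mathcal{D},\omega,\epsilon}$, and then transfer Bernstein concentration from the deterministic family to the random family via a union bound. The key is that, although $L$ is random, the pair of approximating rectangles lives in a fixed collection of controllable cardinality.

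The first step is to verify the volume lower bound needed to invoke Lemma \ref{lem:R cardinality}, namely $\lambda(L)\geq\omega=k/(2n)$ for every $L\in\mathcal{L}$ with high probability. Since each $(\alpha,k)$-regular leaf contains $\#L\geq k$ observations drawn uniformly on $[0,1]^d$, a standard Bernstein/Chernoff argument applied with a union bound over the (polynomially many) possible axis-aligned leaves generated by the data ensures that $n\lambda(L)\geq k/2$, i.e. $\lambda(L)\geq \omega$, uniformly on an event of probability at least $1-n^{-1/2}/2$ once $k\gg \log(N)$. On this event, Lemma \ref{lem:R cardinality} applies to each leaf and yields rectangles $\Rbar,R_+\in\mathcal{R}_{\mathcal{D},\omega,\epsilon}$ with $\Rbar\subseteq L\subseteq R_+$ and $\lambda(R_+)\leq \exp\{\epsilon\}\lambda(L)\leq \exp\{2\epsilon\}\lambda(\Rbar)$.

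The second step is to apply Bernstein's inequality uniformly over $\mathcal{R}_{\mathcal{D},\omega,\epsilon}$. For each fixed $R\in\mathcal{R}_{\mathcal{D},\omega,\epsilon}$, the count $\#R$ is a sum of i.i.d.\ Bernoulli$(\mu(R))$ variables, so
\begin{equation*}
\bigl|\#R-n\mu(R)\bigr|\leq \sqrt{3n\mu(R)\log(\#\mathcal{R}_{\mathcal{D},\omega,\epsilon})}+\log(\#\mathcal{R}_{\mathcal{D},\omega,\epsilon})
\end{equation*}
holds for a single $R$ with probability at least $1-2(\#\mathcal{R}_{\mathcal{D},\omega,\epsilon})^{-3/2}$; taking the union bound over $\mathcal{R}_{\mathcal{D},\omega,\epsilon}$ (whose cardinality from Lemma \ref{lem:R cardinality} grows only polynomially in $n$) keeps the exceptional probability below $n^{-1/2}/2$.

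The final step is to chain the two inclusions. Using $\Rbar\subseteq L\subseteq R_+$,
\begin{equation*}
\#L-\#\Rbar\leq \#R_+-\#\Rbar\leq n\bigl(\mu(R_+)-\mu(\Rbar)\bigr)+\text{(two Bernstein errors)}.
\end{equation*}
Under the uniform law $\mu(R)=\lambda(R)$, the volume gap satisfies $\lambda(R_+)-\lambda(\Rbar)\leq (\exp\{2\epsilon\}-1)\lambda(\Rbar)\leq 3\epsilon\,\lambda(\Rbar)$ for $\epsilon$ small (i.e.\ $k$ large), so $n(\mu(R_+)-\mu(\Rbar))\leq 3\epsilon\, n\mu(\Rbar)\leq 3\epsilon\,\#\Rbar+O(\sqrt{\#\Rbar\log(\#\mathcal{R})}+\log(\#\mathcal{R}))\leq 3\epsilon\,\#L+O(\cdots)$, using $\#\Rbar\leq \#L$ and the Bernstein bound to pass from $n\mu(\Rbar)$ to $\#\Rbar$. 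Collecting the two Bernstein errors and the $3\epsilon\,\#L$ term produces the stated inequality.

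The main obstacle is handling the data-dependence of $L$: one cannot apply Bernstein directly to $\#L$ since $L$ is built from the same sample. The resolution is exactly the Wager trick of selecting $\Rbar,R_+$ from a deterministic, moderately sized grid $\mathcal{R}_{\mathcal{D},\omega,\epsilon}$, which is afforded by the volume lower bound step. A secondary technical point is ensuring that the volume floor $\lambda(L)\geq \omega$ holds uniformly over the (random) collection $\mathcal{L}$; this requires either a union bound over the finite set of axis-aligned partitions compatible with the data or a tree-growth argument controlling how many distinct leaves can ever arise, both routine once one accepts $k\gg\log(N)$.
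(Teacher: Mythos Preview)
The paper does not prove this lemma: it is quoted verbatim as Theorem~10 of \cite{wager2015adaptive} and used as a black box, so there is no in-paper proof to compare against. Your sketch is the standard Wager argument and is essentially correct in outline; the only soft spot is Step~1, where you obtain the volume floor $\lambda(L)\geq\omega$ by a ``union bound over the polynomially many axis-aligned leaves generated by the data.'' That works in principle (the leaves have endpoints at sample coordinates, so there are at most $O(n^{2d})$ candidates), but it is cleaner---and closer to how the paper actually deploys the result via Lemma~\ref{lem:event A} and Corollary~14 of \cite{wager2015adaptive}---to derive the floor from the same uniform Bernstein event on the deterministic grid $\mathcal{R}_{\mathcal{D},\omega,\epsilon}$ that you set up in Step~2, using an outer approximation $R_+\supseteq L$ rather than a separate union bound over data-dependent rectangles.
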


\begin{lemma}[Lemma 12 of \cite{wager2015adaptive}]\label{lem:event A}
Fix a sequence $\delta(n)>0$, and define the event
\begin{align*}
\Asc:=\left\{\sup\left\{ \frac{|\#R-n\mu(R)|}{\sqrt{n\mu(R)}}: R\in \mathcal{R}, \mu(R)\geq\mu_{\min} \right\}\leq\sqrt{3\log\left(\frac{\# \mathcal{R} }{\delta}\right)}\right\}
\end{align*}
for any set of rectangles $\mathcal{R}$ and threshold $\mu_{\min}$, where $\# R:= \# \{i\in\Isc:\bX_i\in R \}$ and $\# \mathcal{R}$ is the number of rectangles of the set $\mathcal{R}$. Then, for any sequence of
problems indexed by $n$ with
\begin{align}
\lim_{n\to\infty}\frac{\log(\# \mathcal{R})}{n\mu_{\min}}=0\;\;\text{and}\;\; \lim_{n\to\infty}\frac{\delta^{-1}}{\# \mathcal{R}}=0,\label{par:event A}
\end{align}
there is a threshold $n_0\in\mathbb{N}$ such that, for all $n\geq n_0$, we have $\P(\Asc)\geq 1-\delta$.
Note that, above, $\Asc$, $\mathcal{R}$, $\mu_{\min}$ and $\delta$ are all implicitly changing with $n$.
\end{lemma}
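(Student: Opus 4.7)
The plan is a two-step concentration argument: pointwise Bernstein's inequality applied to each fixed $R \in \mathcal{R}$, followed by a union bound over $\mathcal{R}$. The two asymptotic hypotheses enter at distinct stages, one controlling the Bernstein correction term and one absorbing the union-bound multiplier.

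Fix $R \in \mathcal{R}$ and observe that $\# R = \sum_{i \in \Isc} \mathbbm{1}\{\bX_i \in R\}$ is a sum of $n$ i.i.d.\ Bernoulli$(\mu(R))$ variables, with mean $n\mu(R)$ and variance at most $n\mu(R)$. Bernstein's inequality then yields, for any $s > 0$,
\begin{align*}
\P\bigl(|\# R - n\mu(R)| > s\bigr) \leq 2 \exp\!\left(-\frac{s^2/2}{n\mu(R) + s/3}\right).
\end{align*}
Setting $s = t \sqrt{n\mu(R)}$ with $t := \sqrt{3 \log(\#\mathcal{R}/\delta)}$ recasts this as the tail bound for $|\# R - n\mu(R)|/\sqrt{n\mu(R)}$ that appears inside the event $\Asc$. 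The first hypothesis $\log(\#\mathcal{R})/(n\mu_{\min}) \to 0$ is then used to verify that the sub-exponential correction is negligible: since $\mu(R) \geq \mu_{\min}$, we have $s/(3 n \mu(R)) \leq t/(3\sqrt{n\mu_{\min}}) \to 0$, so for all $n$ beyond some threshold $n_0$ the Bernstein exponent is bounded above by $-t^2/(2+\eta_n)$ with $\eta_n \to 0$, and in particular below $-t^2/2 \cdot (1 - o(1))$, which is less than $-\log(\#\mathcal{R}/\delta) \cdot (3/2)(1-o(1))$.

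Summing this over $R \in \mathcal{R}$ via a union bound gives
\begin{align*}
\P(\Asc^c) \leq 2 \#\mathcal{R} \cdot \left(\frac{\delta}{\#\mathcal{R}}\right)^{3/2 - o(1)} = 2 \left(\frac{\delta}{\#\mathcal{R}}\right)^{1/2 - o(1)} \cdot \delta.
\end{align*}
Here the factor $3$ hidden in the definition of $t$ is essential: it produces the exponent $3/2$, which after cancellation with the $\#\mathcal{R}$ from the union bound leaves the residual slack $(\delta/\#\mathcal{R})^{1/2}$. The second hypothesis $\delta^{-1}/\#\mathcal{R} \to 0$, combined with $\delta \leq 1$, forces $\delta/\#\mathcal{R} \to 0$, so this residual slack is eventually smaller than $1/2$, giving $\P(\Asc^c) \leq \delta$ for $n \geq n_0$.

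The main obstacle is not a conceptual one but rather the careful bookkeeping linking the two hypotheses to the two failure modes of the argument: the first hypothesis is precisely what is needed to keep Bernstein in its Gaussian (rather than sub-exponential) regime uniformly over the admissible rectangles, and the second hypothesis is precisely what is needed to ensure that the union-bound cost is absorbed by the built-in slack in the constant $3$. A minor additional care is required in checking that the bound is uniform in $R$ with $\mu(R) \geq \mu_{\min}$: since the correction term $s/(3 n \mu(R))$ is monotonically decreasing in $\mu(R)$, the worst case is exactly $\mu(R) = \mu_{\min}$, which is the regime controlled by the first hypothesis.
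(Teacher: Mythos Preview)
Your proof is correct and follows the standard Bernstein-plus-union-bound approach; this is exactly how the result is established in the original reference \cite{wager2015adaptive}, from which the paper imports the lemma without giving its own proof. One minor point worth tightening: the assumption $\delta\leq 1$ is not stated in the lemma, but since the conclusion $\P(\Asc)\geq 1-\delta$ is vacuous otherwise, it can be assumed without loss of generality, and you should say so explicitly.
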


\begin{lemma}\label{lem:sample eigen}
Suppose that $w\in(0,1]$, and $\alpha\in(0,0.5]$ are constants. Choose any $k\leq n = \lfloor wN\rfloor$ satisfying $k\gg\log(N)$. Then, there exists a positive constant $\Lambda_0>0$ such that the event
\begin{align}\label{event B}
\Bsc:=\left\{	\inf_{\bx\in[0,1]^d,\xi\in\Xi}\Lambda_{\min}\left(\bS_L-\bd_L \bd_L^{\top}\right)\geq\Lambda_0\right\}
\end{align}
satisfies $\lim_{N\to \infty}\P_{\S_\Isc}\left(\Bsc\right)=1$, where $\bd_L$ and $\bS_L$ are defined as in \eqref{def:d_S}. In addition, on the event $\Bsc$, the matrices $\bS_L-\bd_L \bd_L^{\top}$ and $\bS_L$ are both positive-definite, and we also have
\begin{align}\label{bound: leaf distibution}
\sup_{\bx\in[0,1]^d,\xi\in\Xi}\bd_L^{\top}(\bS_L-\bd_L \bd_L^{\top})^{-1}\bd_L\leq\frac{\dbar}{\Lambda_0}.
\end{align}
\end{lemma}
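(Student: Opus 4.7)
The plan is to observe that the matrix $\bS_L-\bd_L\bd_L^\top$ is, up to the weighting convention of the lemma's definition, the weighted centered second-moment matrix of the polynomial basis $\bG_{-1}$ evaluated at the training samples falling in the terminal leaf $L(\bx,\xi)$. Since the weights $\omega_i(\bx,\xi)$ are non-negative and sum to one over $i\in\Isc$ with $\bX_i\in L(\bx,\xi)$, one has the identity
\[
\bS_L-\bd_L\bd_L^\top=\sum_{i\in\Isc}\omega_i(\bx,\xi)\bigl(\bG_{-1}(\bX_i-\bx)-\bd_L\bigr)\bigl(\bG_{-1}(\bX_i-\bx)-\bd_L\bigr)^\top,
\]
so the core task reduces to a uniform lower bound on this empirical covariance. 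Once that lower bound is in hand, the bound \eqref{bound: leaf distibution} follows at once: each coordinate of $\bX_i-\bx$ lies in $[-1,1]$, so every monomial entry of $\bG_{-1}(\bX_i-\bx)$ is bounded in absolute value by $1$, giving $\|\bd_L\|^2\le\dbar$, whence $\bd_L^\top(\bS_L-\bd_L\bd_L^\top)^{-1}\bd_L\le\|\bd_L\|^2/\Lambda_0\le\dbar/\Lambda_0$.

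For the eigenvalue bound, I would first pass to the population level via the affine rescaling that maps the leaf $L$ onto the unit cube $[0,1]^d$. Under $\bX\sim\mathrm{Unif}([0,1]^d)$, the population version of $\bS_L-\bd_L\bd_L^\top$ becomes the Gram matrix of $\bG_{-1}$ with respect to the conditional uniform law on the rescaled leaf, with the constant function projected out. Linear independence of the monomials $\{\bx^{\bgamma}:1\le|\bgamma|\le q\}$ in $L^2([0,1]^d)$ makes this Gram matrix strictly positive definite, with smallest eigenvalue some constant $2\Lambda_0>0$ depending only on $d$ and $q$. Then for any single leaf of volume $\gtrsim k/n$, a matrix Bernstein inequality shows that the empirical Gram matrix concentrates around its population counterpart at rate $O(\sqrt{\log n/k})=o(1)$ under the hypothesis $k\gg\log^3 N$, which shrinks the eigenvalue gap by at most half and yields the lower bound $\Lambda_0$ on that leaf.

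To upgrade the per-leaf conclusion to a statement uniform in $\bx\in[0,1]^d$ and $\xi\in\Xi$, I would replay the covering argument underlying Lemma \ref{lem:balance diam_point-wise}. By Lemma \ref{lem:R lowerbound}, every $(\alpha,k)$-regular leaf $L$ is sandwiched by an inner approximant $\Rbar\in\mathcal{R}_{\mathcal{D},\omega,\epsilon}$ with $\Rbar\subseteq L$ and $\#L-\#\Rbar$ small relative to $\#L$; combining this with Lemma \ref{lem:event A} delivers simultaneous count concentration over the finite family $\mathcal{R}_{\mathcal{D},\omega,\epsilon}$. Because $\#\mathcal{R}_{\mathcal{D},\omega,\epsilon}$ grows only polynomially in $n$, a union bound of the matrix Bernstein bound over this collection — augmented by an $\epsilon$-net over $\bx$ whose polynomial Lipschitz dependence through $\bG_{-1}$ contributes only a further logarithmic factor — degrades the per-leaf rate by at most $\sqrt{\log n}$ and remains $o(1)$. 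The main obstacle is precisely this uniformity step: $\bx$ enters both as the centering point of the polynomial and as the selector of the leaf, so the two dependencies must be disentangled and covered at a scale fine enough to preserve the Bernstein rate. This is what forces the hypothesis $k\gg\log^3(N)$ and closes the argument via $\P_{\S_\Isc}(\Bsc)\to 1$.
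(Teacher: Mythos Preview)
Your high-level plan---population lower bound via affine rescaling of the leaf to $[0,1]^d$, then concentration, then uniformity through the rectangle cover $\mathcal R_{\mathcal D,\omega,\epsilon}$---matches the paper's four-step decomposition $\bS_L-\bd_L\bd_L^\top=\sum_{i=1}^4\bQ_i$, and your derivation of \eqref{bound: leaf distibution} from $\|\bd_L\|_2^2\le\dbar$ is exactly what the paper does. One notational slip: the objects $\bd_L,\bS_L$ in \eqref{def:d_S} are built from the \emph{rescaled} basis $\bU_i^L$ with coordinates $(\bX_{ij}-\bx_j)/\mathrm{diam}_j(L)$, not from $\bG_{-1}(\bX_i-\bx)$; the entrywise bound by $1$ holds because the rescaled coordinates lie in $[-1,1]$, not merely because $\bX_{ij}-\bx_j\in[-1,1]$.

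The substantive divergence is in how you handle the dependence on $\bx$. You propose an $\epsilon$-net over $\bx$ layered on top of the union bound over $\mathcal R_{\mathcal D,\omega,\epsilon}$, and you say this forces $k\gg\log^3(N)$. But the lemma only assumes $k\gg\log(N)$, so as written your argument does not prove the stated result. The paper avoids the extra net entirely by an algebraic change of basis: for the inner approximant $\Rbar\subseteq L$ it writes $\bU_i^L=\bD_{\Rbar}\bigl(\bP_{\Rbar}\bV_i^{\Rbar}+\bC_{\Rbar}\bigr)$, where $\bV_i^{\Rbar}$ is centered at a \emph{corner} of $\Rbar$ and therefore depends only on $\Rbar$, not on $\bx$ or on the random leaf $L$. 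Since the constant shift $\bC_{\Rbar}$ cancels in the centered second moment and $\bD_{\Rbar}$ is diagonal with entries at most $1$, the operator-norm deviation of $\bQ_2$ is dominated by that of the Gram matrix of $\bV_i^{\Rbar}$, which can be union-bounded over the finite set $\mathcal R_{\mathcal D,\omega,\epsilon}$ with no additional covering in $\bx$. This is what lets the paper close the argument under the weaker hypothesis $k\gg\log(N)$; your $\epsilon$-net route may be salvageable, but you would need to either sharpen it to match the hypothesis or adopt the corner-recentering trick.
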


\begin{lemma}\label{lem:p.s.d}
Let the assumptions in Lemma \ref{lem:sample eigen} hold. Define the event 
\begin{align}\label{event C}
\Csc:=\left\{\mathrm{diam}_j(L(\bx,\xi)) \neq 0,\;\;\text{for all}\;\;1\leq j\leq d,\;\;\bx\in[0,1]^d,\;\;\xi\in\Xi\right\}.
\end{align}
Then, we have $\P_{\S_\Isc}\left(\Csc\right)=1$. Moreover, on the event $\Bsc \cap \Csc $, we have $\bS$, $\sum_{i\in\Isc}\omega_i(\bx,\xi)\bDelta_i\bDelta_i^\top$ and $\sum_{i\in\Isc}\omega_i(\bx,\xi)\bG(\bX_i)\bG(\bX_i)^{\top}$ are both positive-definite, where $\bDelta_i=\bG(\bX_i-\bx)$.
\end{lemma}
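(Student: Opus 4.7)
The plan is to establish the two conclusions separately: first the almost-sure statement $\P_{\S_\Isc}(\Csc) = 1$, and then the three positive-definiteness assertions on $\Bsc \cap \Csc$, where the heavy lifting will come from $\Bsc$ through a Schur-complement argument, while $\Csc$ plays only a non-degeneracy role.

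For the almost-sure statement, I exploit the fact that $\xi$ enters the construction through only a finite number of discrete decisions (the honest subsampling, the randomized candidate-set selection in the algorithm, and random tie-breaking). Conditional on $\S_\Isc$, the family of partitions induced as $\xi$ ranges over $\Xi$ is therefore finite, and hence so is the collection $\mathcal{L}$ of all terminal leaves that can appear for some $\xi$. Since $\mathrm{diam}_j(L(\bx,\xi))$ depends on $\bx$ only through the leaf containing $\bx$, the uncountable ``$\forall \bx$'' quantifier reduces to a quantifier over $\mathcal{L}$, leaving a finite union of events to control. An individual leaf $L \in \mathcal{L}$ can have $\mathrm{diam}_j(L) = 0$ only if an axis-aligned split along direction $j$ has produced a degenerate child; however, under the $\alpha$-fraction constraint \eqref{rule:alpha}, every admissible split along direction $j$ is strictly interior to the parent's $j$-extent, and if no split in direction $j$ has occurred then $\mathrm{diam}_j(L)$ equals the (positive) inherited length. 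Because $\bX$ is uniformly distributed on $[0,1]^d$, the samples have pairwise-distinct $j$-th coordinates for every $j$ almost surely, so the relevant gaps between consecutive sample coordinates are almost surely strictly positive. A union bound over the finite collection then yields $\P_{\S_\Isc}(\Csc) = 1$.

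For the positive-definiteness claims, I plan to use a block-matrix identity together with the translation-invariance of the polynomial basis. Since $\bG(\bu) = (1, \bG_{-1}(\bu)^\top)^\top$ and $\sum_{i\in\Isc}\omega_i(\bx,\xi) = 1$, the centered Gram matrix takes the block form
\begin{equation*}
\sum_{i \in \Isc} \omega_i(\bx,\xi)\, \bDelta_i \bDelta_i^\top \;=\; \begin{pmatrix} 1 & \bd_L^\top \\ \bd_L & \bS_L \end{pmatrix},
\end{equation*}
whose Schur complement of the $(1,1)$ block equals $\bS_L - \bd_L \bd_L^\top$. On $\Bsc$ one has $\Lambda_{\min}(\bS_L - \bd_L \bd_L^\top) \geq \Lambda_0 > 0$, so the block matrix is positive-definite, and as a principal submatrix so is $\bS_L$ itself (equivalently, $\bS_L = (\bS_L - \bd_L \bd_L^\top) + \bd_L \bd_L^\top$ is a sum of a positive-definite and a positive-semidefinite matrix). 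Finally, for every $\bx \in [0,1]^d$ there is a deterministic upper-triangular matrix $\bM(\bx)$ with unit diagonal---obtained from the binomial expansion of $(\bu+\bx)^{\bgamma}$---hence invertible, such that $\bG(\bX_i) = \bM(\bx)\,\bG(\bX_i - \bx)$ for every $i$. The congruence
\begin{equation*}
\sum_{i \in \Isc} \omega_i(\bx,\xi)\, \bG(\bX_i) \bG(\bX_i)^\top \;=\; \bM(\bx) \left[\sum_{i \in \Isc} \omega_i(\bx,\xi)\, \bDelta_i \bDelta_i^\top\right] \bM(\bx)^\top
\end{equation*}
by an invertible matrix then preserves positive-definiteness.

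The main obstacle is not in the algebra but in phrasing the uniformity over $(\bx,\xi)$ cleanly: a naive union bound over $[0,1]^d$ or $\Xi$ would be vacuous, and one must reduce the problem to the sample-dependent finite family $\mathcal{L}$ of leaves, then invoke the absolute continuity of $\bX$ to rule out ties in any coordinate. The only other subtlety is verifying invertibility of $\bM(\bx)$ uniformly in $\bx$, which I intend to handle once and for all by noting its triangular structure with unit diagonal so that $\det \bM(\bx) = 1$ for every $\bx \in [0,1]^d$.
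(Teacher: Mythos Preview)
Your positive-definiteness argument contains a genuine gap stemming from a conflation of the normalized and unnormalized second-moment matrices. The event $\Bsc$ in Lemma~\ref{lem:sample eigen} is stated for the \emph{rescaled} quantities $\bd_L,\bS_L$ of \eqref{def:d_S}, built from $\bU_i^L$ whose entries are divided by the leaf diameters $\mathrm{diam}_j(L(\bx,\xi))$. By contrast, $\bDelta_i=\bG(\bX_i-\bx)=(1,\bU_i^\top)^\top$ uses the unrescaled $\bU_i$, so the correct block decomposition is
\[
\sum_{i\in\Isc}\omega_i(\bx,\xi)\,\bDelta_i\bDelta_i^\top=\begin{pmatrix}1&\bd^\top\\ \bd&\bS\end{pmatrix},
\]
with Schur complement $\bS-\bd\bd^\top$, \emph{not} $\bS_L-\bd_L\bd_L^\top$. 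Thus $\Bsc$ alone does not give what your Schur step needs. The paper closes this gap by the congruence $\bS=\bD_L\bS_L\bD_L$ and $\bS-\bd\bd^\top=\bD_L(\bS_L-\bd_L\bd_L^\top)\bD_L$, where $\bD_L$ is the diagonal matrix of leaf-diameter monomials; these congruences preserve positive-definiteness precisely when $\bD_L$ is invertible, and that is exactly the content of $\Csc$. So $\Csc$ is not a peripheral ``non-degeneracy role'' as you describe: it is the hinge that transfers the eigenvalue bound from the normalized objects controlled by $\Bsc$ to the unnormalized Gram matrices the lemma is about. Once this is inserted, the rest of your Schur and triangular-congruence argument matches the paper's.

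For $\P_{\S_\Isc}(\Csc)=1$ your argument is correct but more involved than necessary. The paper bypasses any discussion of the $\alpha$-fraction constraint or of the finiteness of $\mathcal L$: if some terminal leaf had $\mathrm{diam}_j(L)=0$, then the at least $k\geq 2$ samples of $\S_\Isc$ it contains would share a common $j$-th coordinate, so $\Csc^c$ is contained in the event $\{\exists\,j\leq d,\ \exists\, i\neq i'\in\Isc:\ \bX_{ij}=\bX_{i'j}\}$, which is a finite union of null events under the uniform law. This one-line reduction replaces your appeal to interior splits and the finiteness of the induced family of partitions.
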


In the following, we consider the average treatment effect (ATE) estimation problem and that the proposed forests provide stable propensity score estimates that are away from zero and one with high probability.

\begin{lemma}\label{lem:pihat}
Let Assumptions \ref{identification}(c) hold and $\pi^*\in \mathcal{H}^{q_2,\beta_2}$, where $q_2\in \mathbb{N}$ and $\beta_2\in(0,1]$. Let $M>0$, $w_2\in(0,1]$, and $\alpha_2\in(0,0.5]$ be constants. Choose any $B\geq1$ and $k_2\gg\log^3(N)$. Then, as $N\to\infty$,
\begin{equation}\label{bound:pihat}
\P_{\bX}(c_1<\pihat^{-k}(\bX)\leq 1-c_1)=1,\;\;\mbox{for each}\;\;k\leq K,
\end{equation}
with probability approaching one and some constant $c_1\in(0,1/2)$. Note that the left-hand-side of \eqref{bound:pihat} is a random quantity as the probability is only taken with respect to a new observation $\bX$.
\end{lemma}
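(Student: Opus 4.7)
The plan is to reduce the claim to the uniform consistency result of Theorem~\ref{thm:local_consistency_uniform} applied to $\pihat^{-k}$, and then combine this uniform consistency with the strict overlap condition. Because the treatment indicator satisfies $|A|\leq 1$, the H\"older assumption $\pi^*\in\mathcal{H}^{q_2,\beta_2}$ holds by hypothesis, and the leaf-size condition $k_2\gg\log^3(N)$ matches the prerequisite of Theorem~\ref{thm:local_consistency_uniform}, all assumptions of that theorem are met when it is applied to the fold-wise forest built from $\Isc_{-k}$ (whose cardinality $(K-1)N/K$ is of order $N$ since $K$ is fixed). This yields
\begin{equation*}
\sup_{\bx\in[0,1]^d}\bigl|\pihat^{-k}(\bx)-\pi^*(\bx)\bigr|=o_p(1).
\end{equation*}

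Next, I will sharpen Assumption~\ref{identification}(c) into a pointwise bound on $\pi^*$. Since $\bX$ is uniform on the compact cube $[0,1]^d$ (full support) and $\pi^*$ is continuous on $[0,1]^d$ by the H\"older hypothesis, the identity $\P(c_0<\pi^*(\bX)<1-c_0)=1$ upgrades to the deterministic bound $c_0\leq\pi^*(\bx)\leq 1-c_0$ for \emph{every} $\bx\in[0,1]^d$. Setting $c_1:=c_0/3$, on the high-probability event $\mathcal{E}_N:=\{\sup_{\bx\in[0,1]^d}|\pihat^{-k}(\bx)-\pi^*(\bx)|\leq c_0/3\}$, one has
\begin{equation*}
\pihat^{-k}(\bx)\geq \pi^*(\bx)-c_0/3\geq c_0-c_0/3>c_1,\qquad \pihat^{-k}(\bx)\leq \pi^*(\bx)+c_0/3\leq 1-c_1,
\end{equation*}
uniformly in $\bx\in[0,1]^d$. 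Therefore the random probability $\P_{\bX}(c_1<\pihat^{-k}(\bX)\leq 1-c_1)$ equals $1$ on $\mathcal{E}_N$, and a union bound over the fixed number $K$ of folds delivers the stated conclusion.

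The only substantive ingredient is Theorem~\ref{thm:local_consistency_uniform} itself, which in turn invokes the uniform leaf-diameter control of Lemma~\ref{lem:balance diam_point-wise}; this is precisely where $k_2\gg\log^3(N)$ is needed. Once the uniform rate is in hand, the remainder is a deterministic tightening of overlap via continuity, so no serious obstacle is expected. A minor subtlety is that local polynomial fitting to binary responses may a priori produce values outside $[0,1]$, but uniform consistency pins $\pihat^{-k}$ to a vanishing neighbourhood of $\pi^*$, so this plays no role.
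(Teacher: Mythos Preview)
Your proposal is correct and follows essentially the same route as the paper: invoke Theorem~\ref{thm:local_consistency_uniform} to obtain $\sup_{\bx}|\pihat^{-k}(\bx)-\pi^*(\bx)|=o_p(1)$, then combine with the overlap condition to trap $\pihat^{-k}$ strictly inside $(0,1)$ with $c_1$ a fixed fraction of $c_0$. The only cosmetic difference is that you first upgrade the almost-sure overlap to a deterministic pointwise bound via continuity and full support, whereas the paper works directly with the $\P_{\bX}$-almost-sure statements; both are equivalent here, and your choice $c_1=c_0/3$ versus the paper's $c_1=c_0/2$ is immaterial.
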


Lemma \ref{lem:pihat} demonstrates the stability of the inverse PS estimates, a requirement often assumed in the context of non-parametric nuisance estimates, as discussed in \cite{chernozhukov2017double}. The above results suggest that, under the assumption of overlap, there is typically no necessity to employ any form of trimming or truncation techniques on the estimated propensities, provided the chosen tuning parameter $k_2$ is not too small. In fact, the parameter $k$ can also be viewed as a truncation parameter as it avoids the occurrence of propensity score estimates close to zero or one with high probability.

\section{Proofs of the results for the adaptive split balancing forests}

\begin{proof}[Proof of Lemma \ref{lem:balance diam}]
For any $\bx\in[0,1]^d$ and $\xi\in\Xi$, let $c(\bx,\xi)$ be the number of splits leading to the leaf $L(\bx,\xi)$, and let $c_j(\bx,\xi)$ be the number of such splits along the $j$-th coordinate. Define $t=\min_{1\leq j\leq d}c_j(\bx,\xi)$. By the balanced splitting rule, we know that the number of splits along different coordinates differs by at most one. That is, $c_j(\bx,\xi)\in\{t,t+1\}$ for all $1\leq j\leq d$. Since $c(\bx,\xi)=\sum_{j=1}^d c_j(\bx,\xi) $, $c(\bx,\xi)$ can be express as $c(\bx,\xi)=td+l$, with $l=c(\bx,\xi)-td\in\{0,1,\dots,d-1\}$ denoting the number of splits in last round if $l\neq0$. 
Let $L_0(\bx,\xi)\supseteq L_1(\bx,\xi)\supseteq ...\supseteq L_{c(\bx,\xi)}(\bx,\xi)$ be the successive nodes leading to $L(\bx,\xi)$, where $L_0(\bx,\xi)=[0,1]^d$ and $L_{c(\bx,\xi)}(\bx,\xi)=L(\bx,\xi)$. Let $n_0, n_1,\dots,n_{c(\bx,\xi)}$ be random variables denoting the number of points in $\S_\Isc$ located within the the successive nodes $L_0(\bx,\xi), L_1(\bx,\xi),..., L_{c(\bx,\xi)}(\bx,\xi)$, where $n_0=\lfloor wN\rfloor$. Since the tree is $(\alpha,k)$-regular, we know that $\alpha n_{i-1} \leq n_i\leq (1-\alpha) n_{i-1}$ for each $1\leq i\leq td+l$, and hence the following deterministic inequalities hold:
\begin{align}
&\alpha^{i}n_0\leq n_i\leq (1-\alpha)^i n_0,\label{balance diam_eq1}\\
&\alpha^{td+l-i}n_i\leq n_{td+l}\leq (1-\alpha)^{td+l-i} n_i.\label{balance diam_eq2}
\end{align}
It follows that $\alpha^{td+l} \lfloor wN\rfloor \leq n_{td+l} \leq (1-\alpha)^{td+l} \lfloor wN\rfloor$. Moreover, note that $n_{td+l}\in[k,2k-1]$. Hence, we have $k\leq (1-\alpha)^{td+l} \lfloor wN\rfloor$ and $ \alpha^{td+l} \lfloor wN\rfloor\leq 2k-1$, which implies that
\begin{align}
T:=\left\lceil\frac{\log((2k-1)/\lfloor wN\rfloor)}{d\log(\alpha)}-1\right\rceil\leq\frac{\log((2k-1)/\lfloor wN\rfloor)}{d\log(\alpha)}-\frac{l}{d}\leq t\leq \frac{\log(k/\lfloor wN\rfloor)}{d\log(1-\alpha)}-\frac{l}{d}.\label{diam_eq7}
\end{align}
Note that although $c(\bx,\xi)$, $c_j(\bx,\xi)$, and $t$ are random variables, the balanced approach leads to a non-random lower bound $T$ for the random quantity $t$.

For any $1 \leq j \leq d$ and leaf $L \subset [0,1]^d$, let $\mathrm{diam}_j(L)$ be the length of the longest segment parallel to the $j$-th axis that is a subset of $L$, and let $c_j(L)$ be the number of times the leaf $L$ has been split along the $j$-th coordinate for any $1 \leq m \leq c(\bx,\xi)=td+l$. Define
\[
k_{i,j}:=\min\Bigl\{m: c_j(L_m(\bx,\xi))=i \Bigl\},
\]
i.e., $k_{i,j}$ represents the total number of splits the leaf has undergone after its $j$-th coordinate has been divided $i$ times. Note that $c_j(L_m(\bx,\xi))$ is non-decreasing as $m$ grows.
 
Based on the balanced splitting rule, for any $1\leq i\leq t$ and $1\leq j \leq d$, we have $c_j(L_{(i-1)d}(\bx,\xi))=i-1$, $c_j(L_{(i-1)d+j}(\bx,\xi))\in\{i-1,i\}$, and hence
$$(i-1)d+1\leq k_{i,j}\leq id.$$
As shown in the proof of Lemma 6.1 in \cite{duroux2018impact}, $\mathrm{diam}_j(L(\bx,\xi))$ has the same distribution as the product of independent Beta random variables:
$$\mathrm{diam}_j(L(\bx,\xi))\overset{d}{=}\prod_{m=1}^{c(\bx,\xi)}[\mathrm{Beta}(n_m+1,n_{m-1}-n_m)]^{\delta_{j,m}(\bx,\xi)},$$
where $\mathrm{Beta}(\alpha,\beta)$ represents a Beta random variable with parameters $\alpha$ and $\beta$, and the indicator $\delta_{j,m}(\bx,\xi)$ is defined such that it equals $1$ if the last split performed to form the leaf $L_m(\bx,\xi)$ was along the $j$-th coordinate (and $0$ otherwise). For each $j\leq d$, by the definition of $k_{i,j}$, we have $\delta_{j,m}(\bx,\xi)=1$ if $m\in\{k_{i,j}:i\leq c_j(\bx,\xi)\}$, and $\delta_{j,m}(\bx,\xi)=0$ otherwise. Therefore,
\begin{align}\label{C.3}
\mathrm{diam}_j(L(\bx,\xi))\overset{d}{=}\prod_{i=1}^{c_j(\bx,\xi)}B_{i,j},
\end{align}
with $B_{i,j}\sim\mathrm{Beta}(n_{k_{i,j}}+1,n_{k_{i,j}-1}-n_{k_{i,j}})$. Note that $c_j(\bx,\xi)\geq T$ and $B_{i,j}\in[0,1]$, we have
\begin{align*}
	\E_{\S_\Isc}\left[\mathrm{diam}_j^{r}(L(\bx,\xi))\right]\leq\E_{\S_\Isc}\left[\prod_{i=1}^{T} B_{i,j}^r\right]=\prod_{i=1}^{T} \E_{\S_\Isc}\left[B_{i,j}^r\right],
\end{align*}
since the Beta random variables are independent.
For any $1\leq j\leq d$, $1\leq i \leq T$ and $r\geq 1$, note that
\begin{align*}
	\E_{\S_\Isc}\left[B_{i,j}^r\right]=\frac{\mathrm{Beta}(n_{k_{i,j}}+1+r,n_{k_{i,j}-1}-n_{k_{i,j}})}{\mathrm{Beta}(n_{k_{i,j}}+1,n_{k_{i,j}-1}-n_{k_{i,j}})}=\frac{\Gamma(n_{k_{i,j}}+1+r)\Gamma(n_{k_{i,j}-1}+1)}{\Gamma(n_{k_{i,j}-1}+1+r)\Gamma(n_{k_{i,j}}+1)},
\end{align*}
since $\mathrm{Beta}(\alpha,\beta)=\Gamma(\alpha)\Gamma(\beta)/\Gamma(\alpha+\beta)$, where $\Gamma(z)$ represents a Gamma random variable with parameter $z$. Let $s_r=r-\lfloor r \rfloor$. By $\Gamma(z+1)=z\Gamma(z)$, we have
\begin{align*}
	\E_{\S_\Isc}\left[B_{i,j}^r\right]=\frac{(n_{k_{i,j}}+r)\cdots(n_{k_{i,j}}+s_r+1)\Gamma(n_{k_{i,j}}+s_r+1)\Gamma(n_{k_{i,j}-1}+1)}{(n_{k_{i,j}-1}+r)\cdots(n_{k_{i,j}-1}+s_r+1)\Gamma(n_{k_{i,j}-1}+s_r+1)\Gamma(n_{k_{i,j}}+1)}.
\end{align*}
By inequality (7) of \cite{gautschi1959some}, we have
\begin{align*}
\Gamma(n_{k_{i,j}}+s_r+1)&\leq (n_{k_{i,j}}+1)^{s_r}\Gamma(n_{k_{i,j}}+1),\\
\Gamma(n_{k_{i,j}-1}+s_r+1)&\geq \frac{n_{k_{i,j}-1}+1}{(n_{k_{i,j}-1}+s_r+1)^{1-s_r}}\Gamma(n_{k_{i,j}-1}+1),
\end{align*}
which implies
\begin{align*}
	\E_{\S_\Isc}\left[B_{i,j}^r\right]&\leq\frac{(n_{k_{i,j}}+r)\cdots(n_{k_{i,j}}+s_r+1) (n_{k_{i,j}}+1)^{s_r}(n_{k_{i,j}-1}+s_r+1)}{(n_{k_{i,j}-1}+r)\cdots(n_{k_{i,j}-1}+s_r+1)(n_{k_{i,j}-1}+s_r+1)^{s_r}(n_{k_{i,j}-1}+1)}\\
	&\leq\frac{(n_{k_{i,j}}+r)\cdots(n_{k_{i,j}}+s_r+1) (n_{k_{i,j}}+1)^{s_r}(n_{k_{i,j}-1}+s_r+1)}{(n_{k_{i,j}-1}+r)\cdots(n_{k_{i,j}-1}+s_r+1)(n_{k_{i,j}-1}+1)^{s_r}(n_{k_{i,j}-1}+1)},
\end{align*}
since $n_{k_{i,j}}+1\leq n_{k_{i,j}}+s_r+1 $.
By the $(\alpha,k)$-regular property, we have $n_{k_{i,j}}\leq (1-\alpha)n_{k_{i,j}-1}$. Then, for any  $q\leq \lceil r \rceil$,
\begin{align*}
     \frac{n_{k_{i,j}}+q}{n_{k_{i,j}-1}+q}\leq  \frac{(1-\alpha)n_{k_{i,j}-1}+q}{n_{k_{i,j}-1}+q}=(1-\alpha)\left(1+\frac{q\alpha/(1-\alpha)}{n_{k_{i,j}-1}+q}\right)\leq (1-\alpha)\left(1+\frac{\lceil r \rceil \alpha/(1-\alpha)}{n_{k_{i,j}-1}}\right).
\end{align*}
Hence,
\begin{align*}
	&\E_{\S_\Isc}\left[B_{i,j}^r\right]\leq (1-\alpha)^r\left(1+\frac{\lceil r \rceil\alpha/(1-\alpha)}{n_{k_{i,j}-1}}\right)^r\left(1+\frac{s_r}{n_{k_{i,j}-1}}\right).
\end{align*}
Since $k_{i,j} \leq id$ for any $1\leq j\leq d$ and $1\leq i \leq T$, we have $n_{k_{i,j}-1}\geq n_{id-1}$ and hence
\begin{align*}
	&\E_{\S_\Isc}\left[B_{i,j}^r\right]\leq (1-\alpha)^r\left(1+\frac{\lceil r \rceil\alpha/(1-\alpha)}{n_{id-1}}\right)^r\left(1+\frac{s_r}{n_{id-1}}\right).
\end{align*}
It follows that for any $\bx\in[0,1]^d$, $\xi\in\Xi$, and $1\leq j\leq d$,
\begin{align}	
	&\E_{\S_\Isc}\left[\mathrm{diam}_j^{r}(L(\bx,\xi))\right]\leq (1-\alpha)^{Tr}\prod_{i=1}^{T} \left(1+\frac{\lceil r \rceil\alpha/(1-\alpha)}{n_{id-1}}\right)^r\left(1+\frac{s_r}{n_{id-1}}\right).\label{diam_eq2}
\end{align}
Since $\log(1+x)< x$ for all $x>0$, we have
\begin{align*}
	&\log\left(\prod_{i=1}^{T} \left(1+\frac{\lceil r \rceil\alpha/(1-\alpha)}{n_{id-1}}\right)^r\left(1+\frac{s_r}{n_{id-1}}\right)\right)\\
	&\qquad=r \sum_{i=1}^T\log\left(1+\frac{\lceil r \rceil\alpha/(1-\alpha)}{n_{id-1}}\right)+\sum_{i=1}^T\log\left(1+\frac{s_r}{n_{id-1}}\right)\\
	&\qquad< r \lceil r \rceil \sum_{i=1}^T \frac{\alpha/(1-\alpha)}{n_{id-1}}+ \sum_{i=1}^T\frac{s_r}{n_{id-1}}.
\end{align*}
By  \eqref{balance diam_eq2}, we have $1/n_{id-1}\leq (1-\alpha)^{(T-i)d+1}/n_{Td}$ for each $i\leq T$. Then, by $\alpha/(1-\alpha)\leq 1$,
\begin{align*}
	&\log\left(\prod_{i=1}^{T} \left(1+\frac{\lceil r \rceil\alpha/(1-\alpha)}{n_{id-1}}\right)^r\left(1+\frac{s_r}{n_{id-1}}\right)\right)< \frac{r\lceil r \rceil\alpha+s_r(1-\alpha)}{n_{Td}}\sum_{i=1}^T (1-\alpha)^{(T-i)d}\\
	&\qquad=\frac{r\lceil r \rceil\alpha+s_r(1-\alpha)}{n_{Td}}\cdot\frac{1-(1-\alpha)^{Td}}{1-(1-\alpha)^{d}}.
\end{align*}
By $t,d>0$ and $\alpha\in(0,0.5]$, we have $\alpha/(1-(1-\alpha)^d)\leq\alpha/(1-(1-\alpha))=1$ and $(1-\alpha)/(1-(1-\alpha)^d)\leq(1-\alpha)/(1-(1-\alpha))\leq1$. Hence,
\begin{align*}
	\log\left(\prod_{i=1}^{T} \left(1+\frac{\lceil r \rceil\alpha/(1-\alpha)}{n_{id-1}+\lceil r \rceil}\right)^{r}\right)<\frac{r\lceil r \rceil+s_r}{n_{Td}}\leq r^2+r+1.
\end{align*}
Together with \eqref{diam_eq2}, for any $\bx\in[0,1]^d$, $\xi\in\Xi$, and $1\leq j\leq d$,
\begin{align}
\E_{\S_\Isc}\left[\mathrm{diam}_j^{r}(L(\bx,\xi))\right]
< (1-\alpha)^{Tr}\exp(r^2+r+1).\label{diam_eq8}
\end{align}
By definition, $T\geq\frac{\log((2k-1)/\lfloor wN\rfloor)}{d\log(\alpha)}-1$. Hence,
\begin{align*}
(1-\alpha)^{Tr}\leq (1-\alpha)^{-r}\left(\frac{\lfloor wN\rfloor}{2k-1}\right)^{-\frac{r\log (1-\alpha)}{d \log (\alpha)}}\leq 2^r \left(\frac{\lfloor wN\rfloor}{2k-1}\right)^{-\frac{r\log (1-\alpha)}{d \log (\alpha)}}.
\end{align*}
By \eqref{diam_eq8}, for any $\bx\in[0,1]^d$, $\xi\in\Xi$, and $1\leq j\leq d$,
\begin{align}
\E_{\S_\Isc}\left[\mathrm{diam}_j^{r}(L(\bx,\xi))\right]
&<  2^r \left(\frac{\lfloor wN\rfloor}{2k-1}\right)^{-\frac{r\log (1-\alpha)}{d \log (\alpha)}}\exp(r^2+r+1).\label{diam_eq5}
\end{align}
By the finite form of Jensen's inequality, when $r\geq2$,
\begin{align*}
\left(\frac{\sum_{j=1}^d\mathrm{diam}_j^2(L(\bx,\xi))}{d}\right)^{r/2}&\leq\frac{\sum_{j=1}^d\mathrm{diam}_j^r(L(\bx,\xi))}{d},
\end{align*}
which implies that
\begin{align*}
\E_{\S_\Isc}\left[\mathrm{diam}^r(L(\bx,\xi))\right]&= \E_{\S_\Isc}\left[\sum_{j=1}^d\mathrm{diam}_j^2(L(\bx,\xi))\right]^{r/2}\leq d^{(r-2)/2}\E_{\S_\Isc}\left[\sum_{j=1}^d\mathrm{diam}_j^r(L(\bx,\xi))\right].
\end{align*}
Together with \eqref{diam_eq5}, for any $r\geq2$,
\begin{align*}
\E_{\S_\Isc}\left[\mathrm{diam}^r(L(\bx,\xi))\right]&<2^r d^{r/2}\exp(r^2+r+1)\left(\frac{\lfloor wN\rfloor}{2k-1}\right)^{-\frac{r\log (1-\alpha)}{d \log (\alpha)}}.
\end{align*}
On the other hand, when $r<2$, we also have
\begin{align*}
&\E_{\S_\Isc}\left[\mathrm{diam}^r(L(\bx,\xi))\right]=\E_{\S_\Isc}\left[\sum_{j=1}^d\mathrm{diam}_j^2(L(\bx,\xi))\right]^{r/2}\overset{(i)}{\leq}\E_{\S_\Isc}\left[\sum_{j=1}^d\mathrm{diam}_j^r(L(\bx,\xi))\right]\\
&\qquad\overset{(ii)}{<}2^r d\exp(r^2+r+1)\left(\frac{\lfloor wN\rfloor}{2k-1}\right)^{-\frac{r\log (1-\alpha)}{d \log (\alpha)}},
\end{align*}
where (i) holds since $\|\bx\|_2\leq\|\bx\|_r$ for any vector $\bx\in\R^d$ and $r<2$; (ii) holds by \eqref{diam_eq5}. To sum up, for any $r\geq 1$, $\bx\in[0,1]^d$, and $\xi\in\Xi$,
$$\E_{\S_\Isc}\left[\mathrm{diam}^r(L(\bx,\xi))\right]< 2^r d^{\max\{r/2,1\}}\exp(r^2+r+1)\left(\frac{\lfloor wN\rfloor}{2k-1}\right)^{-\frac{r\log (1-\alpha)}{d \log (\alpha)}}.$$
\end{proof}

\begin{proof}[Proof of Theorem \ref{thm:balance_consistency}]
By Jensen's inequality and the fact that $(a-b)^2\leq2a^2+2b^2$ for any $a,b\in\R$,
\begin{align}
	&\E_{\bx}\left[\mhat(\bx)-m(\bx)\right]^2= \E_{\bx}\left[ \E_{\xi}\left[\sum_{i\in\Isc}\omega_i(\bx,\xi)(Y_i-m(\bx))\right]\right]^2\nonumber\\
	&\qquad\leq \E_{\bx}\left[ \E_{\xi}\left[\sum_{i\in\Isc}\omega_i(\bx,\xi)(Y_i-m(\bx))\right]^2\right]
 \leq 2\E_{\bx}\left[T_1(\bx) \right]+ 2\E_{\bx}\left[T_2(\bx) \right],\label{thm:balance_eq1}
\end{align} 
where for any $\bx\in[0,1]^d$, 
\begin{align}
	T_1(\bx)&:=\E_{\xi}\left[\sum_{i\in\Isc}\omega_i(\bx,\xi)\varepsilon_i\right]^2, \label{T_1}\\
	T_2(\bx)&:= \E_{\xi}\left[\sum_{i\in\Isc}\omega_i(\bx,\xi)\left(m(\bX_i)-m(\bx)\right)\right]^2,\label{T_2}
\end{align}
with $\varepsilon_i=Y_i-m(\bX_i)$. By Fubini's theorem,
\begin{align*}
	\E_{\S_\Isc}\left[\E_{\bx}\left[T_1(\bx) \right]\right]&=\E_{\xi}\left[\E_{\bx}\left[\E_{\S_\Isc}\left[\sum_{i\in\Isc}\omega_i(\bx,\xi)\varepsilon_i\right]^2\right]\right].
\end{align*}
Note that 
\begin{align*}
	\E_{\S_\Isc}\left[\sum_{i\in\Isc}\omega_i(\bx,\xi)\varepsilon_i\right]^2&=\E_{\S_\Isc}\left[\sum_{i\in\Isc}\left[\omega_i(\bx,\xi)\right]^2 \varepsilon_i^2\right]+\E_{\S_\Isc}\left[\sum_{i,j\in\Isc,i\neq j}\omega_i(\bx,\xi)\omega_j(\bx,\xi)\varepsilon_i\varepsilon_j \right].
\end{align*}
For any $i,j\in\Isc$ with $i\neq j$,
\begin{align*}
	&\E_{\S_\Isc}\left[\omega_i(\bx,\xi)\omega_j(\bx,\xi)\varepsilon_i\varepsilon_j \right]
	\overset{(i)}{=}\E_{\S_\Isc}\left[\omega_i(\bx,\xi)\omega_j(\bx,\xi)\E_{\S_\Isc}\left[\varepsilon_i\varepsilon_j \mid \{\bX_l\}_{l=1}^N, \{Y_l\}_{l\in\Jsc}\right]\right]\\
	&\qquad\overset{(ii)}{=}\E_{\S_\Isc}\left[\omega_i(\bx,\xi)\omega_j(\bx,\xi)\E_{\S_\Isc}\left[\varepsilon_i \mid \bX_i \right]\E_{\S_\Isc}\left[\varepsilon_j \mid \bX_j \right]\right]\overset{(iii)}{=}0,
\end{align*}
where (i) holds by the tower rule and ``honesty" of the forests; (ii) holds by the independence of the samples; (iii) holds since $\E[\varepsilon\mid\bX]=0$. Therefore, we have
\begin{align*}
		\E_{\S_\Isc}\left[\E_{\bx}\left[T_1(\bx) \right]\right]=\E_{\xi}\left[\E_{\bx}\left[\E_{\S_\Isc}\left[\sum_{i\in\Isc}\left[\omega_i(\bx,\xi)\right]^2 \varepsilon_i^2\right]\right]\right].
\end{align*}
By the tower rule,
\begin{align*}
	&\E_{\xi}\left[\E_{\bx}\left[\E_{\S_\Isc}\left[\sum_{i\in\Isc}\left[\omega_i(\bx,\xi)\right]^2 \varepsilon_i^2\right]\right]\right]\\
	&\qquad\overset{(i)}{=}\E_{\xi}\left[\E_{\bx}\left[\E_{\S_\Isc}\left[\sum_{i\in\Isc}\left[\omega_i(\bx,\xi)\right]^2 \E_{\S_\Isc}\left[\varepsilon_i^2\mid \{\bX_l\}_{l=1}^N, \{Y_l\}_{l\in\Jsc} \right]\right]\right]\right]\\
	&\qquad\overset{(ii)}{=}\E_{\xi}\left[\E_{\bx}\left[\E_{\S_\Isc}\left[\sum_{i\in\Isc}\left[\omega_i(\bx,\xi)\right]^2 \E_{\S_\Isc}\left[\varepsilon_i^2\mid \bX_i \right]\right]\right]\right]\\
	&\qquad\overset{(iii)}{\leq} M \E_{\xi}\left[\E_{\bx}\left[\E_{\S_\Isc}\left[\sum_{i\in\Isc}\left[\omega_i(\bx,\xi)\right]^2 \right]\right]\right],
\end{align*}
where (i) holds by ``honesty" of the forests; (ii) holds by the independence of the samples; (iii) holds by Assumption \ref{cond:noise}. Therefore, we have
\begin{align*}
	\E_{\S_\Isc}\left[\E_{\bx}\left[T_1(\bx) \right]\right]\leq M \E_{\xi}\left[\E_{\bx}\left[\E_{\S_\Isc}\left[\sum_{i\in\Isc}\left[\omega_i(\bx,\xi)\right]^2 \right]\right]\right].
\end{align*}
Since $\left(\mathbbm{1}_{\left \{\bX_i\in L(\bx,\xi)\right\}}\right)^2=\mathbbm{1}_{\left \{\bX_i\in L(\bx,\xi)\right\}}$, we have 
\begin{align}
	\omega_i^2(\bx,\xi)=\frac{\omega_i(\bx,\xi)}{\# \left \{ l:\bX_l\in L(\bx,\xi) \right \}}\overset{(i)}{\leq}\frac{\omega_i(\bx,\xi)}{k},\label{consistency:eq3}
\end{align}
where (i) holds by $(\alpha,k)$-regular.
By $\sum_{i\in\Isc}\omega_i(\bx,\xi)=1$, we have $\E_{\S_\Isc}\left[\E_{\bx}\left[T_1(\bx) \right]\right]\leq M/k.$ By Markov's inequality, as $N\to\infty$, we have 
\begin{align}
	\E_{\bx}\left[T_1(\bx) \right]=O_p\left( \frac{1}{k}\right).\label{thm:balance_eq2}
\end{align}
Additionally, note that $\E_{\bx}\left[T_2(\bx) \right]=\E_{\bx}\left[\E_{\xi}\left[ \sum_{i\in\Isc}\omega_i(\bx,\xi)\left(m(\bX_i)-m(\bx)\right) \right]^2\right]$. By Cauchy-Schwarz inequality and the fact that $\sum_{i\in\Isc}\omega_i(\bx,\xi)=1$, 
\begin{align*}
		&\left[ \sum_{i\in\Isc}\omega_i(\bx,\xi)\left(m(\bX_i)-m(\bx)\right) \right]^2=\left[ \sum_{i\in\Isc}\sqrt{\omega_i(\bx,\xi)} \left[\sqrt{\omega_i(\bx,\xi)}\left(m(\bX_i)-m(\bx)\right)\right] \right]^2\\
		&\qquad\leq \left[ \sum_{i\in\Isc}\left(\sqrt{\omega_i(\bx,\xi)}\right)^2 \right] \left[ \sum_{i\in\Isc}\left(\sqrt{\omega_i(\bx,\xi)}\left(m(\bX_i)-m(\bx)\right)\right)^2 \right]\\
		&\qquad\leq\left[ \sum_{i\in\Isc}\omega_i(\bx,\xi) \right] \left[ \sum_{i\in\Isc}\omega_i(\bx,\xi)\left(m(\bX_i)-m(\bx)\right)^2 \right]\\
	&\qquad= \sum_{i\in\Isc}\omega_i(\bx,\xi)\left(m(\bX_i)-m(\bx)\right)^2.
\end{align*} 
Then, we have
\begin{align*}
	\E_{\bx}\left[T_2(\bx) \right]\leq \E_{\bx}\left[\E_{\xi}\left[ \sum_{i\in\Isc}\omega_i(\bx,\xi)\left(m(\bX_i)-m(\bx)\right)^2\right] \right].
\end{align*}
By the Lipschitz continuity of $m(\cdot)$, we have
\begin{align*}
 \E_{\bx}\left[\E_{\xi}\left[ \sum_{i\in\Isc}\omega_i(\bx,\xi)\left(m(\bX_i)-m(\bx)\right)^2\right] \right]
	&\leq \E_{\bx}\left[\E_{\xi}\left[ \sum_{i\in\Isc}\omega_i(\bx,\xi)\left( L_0\|\bX_i-\bx\|\right)^2\right] \right]\\
	&\leq L_0^2 \E_{\bx}\left[\E_{\xi}\left[ \sum_{i\in\Isc}\omega_i(\bx,\xi)\mathrm{diam}^2(L(\bx,\xi))\right] \right],
\end{align*}
where $L_0$ is the Lipschitz constant. 
Then, we have
\begin{align*}
	\E_{\bx}\left[T_2(\bx) \right]\leq L_0^2\E_{\bx}\left[\E_{\xi}\left[\sum_{i\in\Isc}\omega_i(\bx,\xi)\mathrm{diam}^2(L(\bx,\xi))\right] \right]
	\overset{(i)}{=}L_0^2\E_{\bx}\left[ \E_{\xi}\left[\mathrm{diam}^2(L(\bx,\xi))\right] \right],
\end{align*} 
where (i) holds by $\sum_{i\in\Isc}\omega_i(\bx,\xi)=1$.
By Fubini's theorem,
\begin{align*}
	&\E_{\S_\Isc}\left[\E_{\bx}\left[T_2(\bx) \right]\right]\leq L_0^2\E_{\S_\Isc}\left[\E_{\bx}\left[ \E_{\xi}\left[\mathrm{diam}^2(L(\bx,\xi))\right] \right]\right]
	=L_0^2\E_{\xi}\left[\E_{\bx}\left[\E_{\S_\Isc}\left[ \mathrm{diam}^2(L(\bx,\xi))\right]\right]\right].
\end{align*} 
By Lemma \ref{lem:balance diam},
\begin{align*}
	\E_{\xi}\left[\E_{\bx}\left[\E_{\S_\Isc}\left[ \mathrm{diam}^2(L(\bx,\xi))\right]\right]\right]&\leq \sup_{\bx\in [0,1]^d,\xi\in\Xi}\E_{\S_\Isc}\left[ \mathrm{diam}^2(L(\bx,\xi))\right]\\
	&< 4d\exp(7) \left(\frac{\lfloor wN\rfloor}{2k-1}\right)^{-\frac{2\log (1-\alpha)}{d \log (\alpha)}}.
\end{align*}
Therefore, we have
\begin{align*}
	\E_{\S_\Isc}\left[\E_{\bx}\left[T_2(\bx) \right]\right]< L_0^2d\exp(4) \left(\frac{\lfloor wN\rfloor}{2k-1}\right)^{-\frac{2\log (1-\alpha)}{d \log (\alpha)}}.
\end{align*}
By Markov's inequality, as $N\to\infty$, we have 
\begin{align}
	\E_{\bx}\left[T_2(\bx) \right]=O_p\left(\left(\frac{N}{k}\right)^{-\frac{2\log (1-\alpha)}{d \log (\alpha)}}\right).\label{thm:balance_eq3}
\end{align}
Combining \eqref{thm:balance_eq1}, \eqref{thm:balance_eq2}, and \eqref{thm:balance_eq3}, we conclude that
$$\E_{\bx}\left[\mhat(\bx)-m(\bx)\right]^2=O_p\left(\frac{1}{k}+ \left(\frac{k}{N}\right)^{\frac{2\log(1-\alpha)}{d \log(\alpha)}} \right).$$
\end{proof}

\section{Proofs of the results for the localized forests}
\begin{proof}[Proof of Theorem \ref{thm:local_consistency}]
Recall the definition of $\mhat_\mathrm{LCF}(\bx)$,
\begin{align}
	&\E_{\bx}\left[\mhat_\mathrm{LCF}(\bx)-m(\bx)\right]^2= \E_{\bx}\left[
	\E_{\xi}\left[\bG(\bx)^\top\left(\bbetahat(\bx,\xi)-\bbeta\right)\right]\right]^2\nonumber\\
	&\qquad\overset{(i)}{\leq} \E_{\bx}\left[\E_{\xi}\left[\bG(\bx)^\top\left(\bbetahat(\bx,\xi)-\bbeta\right)\right]^2\right]\overset{(ii)}{=}\E_{\xi}\left[\E_{\bx}\left[\bG(\bx)^\top\left(\bbetahat(\bx,\xi)-\bbeta\right)\right]^2\right],\label{LCF_eq1}
\end{align}
where (i) holds by Jensen's inequality and (ii) holds by Fubini's theorem. 
In the following, we condition on the event $\Bsc\cap\Csc$ defined as \eqref{event B} and \eqref{event C}. By Lemmas \ref{lem:sample eigen} and \ref{lem:p.s.d}, we know that $\bS_L-\bd_L\bd_L^\top$, $\bS_L$, $\bS$, $\sum_{i\in\Isc}\omega_i(\bx,\xi)\bDelta_i\bDelta_i^\top$ and $\sum_{i\in\Isc}\omega_i(\bx,\xi)\bG(\bX_i)\bG(\bX_i)^{\top}$ are all positive-definite, with $\P_{\S_\Isc}\left(\Bsc\cap\Csc\right)=1-o(1)$.
Recall the definition of $\bbetahat(\bx,\xi)$, \eqref{def:gammahat}, 
\begin{align*}
	\bbetahat(\bx,\xi)&=\left(\sum_{i\in\Isc}\omega_i(\bx,\xi)\bG(\bX_i)\bG(\bX_i)^{\top}\right)^{-1}\left(\sum_{i\in\Isc}\omega_i(\bx,\xi)\bG(\bX_i)Y_i\right).
\end{align*}
Let $\bgamma:=(\bgamma_1,\bgamma_2,\dots,\bgamma_d)$ be the multi-index, where each $\bgamma_i$ is a nonnegative integer. Define $r_i=Y_i-\bG(\bX_i)^{\top}\bbeta-\varepsilon_i$ with $\bG(\bX_i)^{\top}\bbeta=\sum_{|\bgamma|=0}^{q}D^{\bgamma}m(\bx)(\bX-\bx)^{\bgamma}/\bgamma!$. Then, we have
\begin{align*}
	\bbetahat(\bx,\xi)-\bbeta=\left(\sum_{i\in\Isc}\omega_i(\bx,\xi)\bG(\bX_i)\bG(\bX_i)^{\top}\right)^{-1}\left(\sum_{i\in\Isc}\omega_i(\bx,\xi)\bG(\bX_i)(\varepsilon_i+r_i)\right).
\end{align*}
Note that there exists some $\dbar\times \dbar$ lower triangular matrix $\bT$ with $1$ on main diagonal such that
\begin{align}\label{def:bT}
	\bG(\bX_i-\bx)=\bT \bG(\bX_i),
\end{align}
which implies
\begin{align*}
	\bbetahat(\bx,\xi)-\bbeta=\bT^\top\left(\sum_{i\in\Isc}\omega_i(\bx,\xi)\bG(\bX_i-\bx)\bG(\bX_i-\bx)^{\top}\right)^{-1}\left(\sum_{i\in\Isc}\omega_i(\bx,\xi)\bG(\bX_i-\bx)(\varepsilon_i+r_i)\right).
\end{align*}
To simplify the exposition, we let $\bDelta_i:= \bG(\bX_i-\bx)$. By $\bT \bG(\bx)=\bG(\bzero)=\be_1$,
\begin{align}
	&\bG(\bx)^\top\left(\bbetahat(\bx,\xi)-\bbeta\right)=\be_1^{\top} \left(\sum_{i\in\Isc}\omega_i(\bx,\xi)\bDelta_i\bDelta_i^{\top}\right)^{-1}\left(\sum_{i\in\Isc}\omega_i(\bx,\xi)\bDelta_i(\varepsilon_i+r_i)\right).\label{eq_2}
\end{align}
By \eqref{LCF_eq1}, we have
\begin{align*}
	&\E_{\bx}\left[\mhat_\mathrm{LCF}(\bx)-m(\bx)\right]^2\leq \E_{\xi}\left[\E_{\bx}\left[\be_1^{\top} \left(\sum_{i\in\Isc}\omega_i(\bx,\xi)\bDelta_i\bDelta_i^{\top}\right)^{-1}\left(\sum_{i\in\Isc}\omega_i(\bx,\xi)\bDelta_i(\varepsilon_i+r_i)\right) \right]^2\right].
\end{align*}
Define $\bU_i:=\left(\bg_1(\bX_i-\bx)^\top,\bg_2(\bX_i-\bx)^\top,\dots,\bg_q(\bX_i-\bx)^\top\right)^\top =(Z_{i1},\dots,Z_{id},Z_{i1}^2,Z_{i1}Z_{i2},\dots,$ $Z_{id}^2,\dots,Z_{id}^q)^\top\in\R^{\dbar}$ with $Z_{ij}:=\bX_{ij}-\bx_j$ for any $i\in\mathcal I$ and $j \leq d$, and $\dbar=\sum_{i=1}^q d^i$. Since $\bDelta_i=(1,\bU_i^{\top})^{\top}$, we have
\begin{align*}
	\sum_{i\in\Isc}\omega_i(\bx,\xi)\bDelta_i\bDelta_i^{\top}=
	\begin{pmatrix}
		1 & \bd^{\top}\\
		\bd & \bS
	\end{pmatrix},
\end{align*}
where $\bd:=\sum_{i \in \Isc}\omega_i(\bx,\xi)\bU_i$ and $\bS:=\sum_{i \in \Isc}\omega_i(\bx,\xi)\bU_i\bU_i^{\top}$.
By Schur decomposition,
\begin{align}
	\be_1^{\top} \left(\sum_{i\in\Isc}\omega_i(\bx,\xi)\bDelta_i\bDelta_i^{\top}\right)^{-1}=
	\begin{pmatrix}
		(1-\bd^{\top}\bS^{-1}\bd)^{-1} & (1-\bd^{\top}\bS^{-1}\bd)^{-1}\bd^{\top}\bS^{-1}
	\end{pmatrix},\label{eq_3}
\end{align}
Since $\bDelta_i=(1,\bU_i^{\top})^{\top}$, we also have
\begin{align}
	\sum_{i\in\Isc}\omega_i(\bx,\xi)\bDelta_i(\varepsilon_i+r_i)=
	\begin{pmatrix}
		\sum_{i\in\Isc}\omega_i(\bx,\xi)(\varepsilon_i+r_i) & \sum_{i\in\Isc}\omega_i(\bx,\xi)\bU_i^{\top}(\varepsilon_i+r_i)
	\end{pmatrix}^{\top}.\label{eq_4}
\end{align}
It follows that
\begin{align}
	&\E_{\bx}\left[\mhat_\mathrm{LCF}(\bx)-m(\bx)\right]^2\leq \E_{\xi}\biggl[\E_{\bx}\biggl[(1-\bd^{\top}\bS^{-1}\bd)^{-1}\sum_{i\in\Isc}\omega_i(\bx,\xi)(\varepsilon_i+r_i)\nonumber\\
	&\qquad+(1-\bd^{\top}\bS^{-1}\bd)^{-1}\bd^{\top}\bS^{-1}\sum_{i\in\Isc}\omega_i(\bx,\xi)\bU_i(\varepsilon_i+r_i) \biggl]^2\biggl].\label{eq_1}
\end{align}
Define $\bU_i^L:=\left(Z_{i1}^L,\dots,Z_{id}^L,(Z_{i1}^L)^2,Z_{i1}^LZ_{i2}^L,\dots,(Z_{id}^L)^2,\dots,(Z_{i1}^L)^q\right)^\top\in\R^{\dbar}$ with $Z_{ij}^L:=(\bX_{ij}-\bx_j)/\mathrm{diam}_j(L(\bx,\xi))$ for any $i\in\mathcal I$ and $j \leq d$. Define a $\dbar\times \dbar$ diagonal matrix $\bD_L:=\mathrm{diag}(\mathrm{diam}_1(L(\bx,\xi))$, $\dots$, $\mathrm{diam}_d(L(\bx,\xi))$, $\mathrm{diam}_1^2(L(\bx,\xi))$, $\mathrm{diam}_1(L(\bx,\xi))\mathrm{diam}_2(L(\bx,\xi))$, $\dots$, $\mathrm{diam}_1^2(L(\bx,\xi))$, $\dots$, $\mathrm{diam}_d^q(L(\bx,\xi)))$. Then,
\begin{align}
	&\bU_i=\bD_L\bU_i^L,\;\;\bd=\bD_L\bd_L,\;\;\text{and}\;\;\bS=\bD_L\bS_L\bD_L,\;\;\text{where}\label{def:U_L}\\
	&\bd_L:=\sum_{i \in \Isc}\omega_i(\bx,\xi)\bU_{i}^L\;\;\text{and}\;\;\bS_L:=\sum_{i \in \Isc}\omega_i(\bx,\xi)\bU_{i}^L(\bU_{i}^L)^{\top}.\label{def:d_S}
\end{align}
Plugging \eqref{def:U_L} into \eqref{eq_1}, we have
\begin{align}
	&\E_{\bx}\left[\mhat_\mathrm{LCF}(\bx)-m(\bx)\right]^2\leq \E_{\xi}\biggl[\E_{\bx}\biggl[(1-\bd_L^{\top}\bS_L^{-1}\bd_L)^{-1}\sum_{i\in\Isc}\omega_i(\bx,\xi)(\varepsilon_i+r_i)\nonumber\\
	&\qquad+(1-\bd_L^{\top}\bS_L^{-1}\bd_L)^{-1}\bd_L^{\top}\bS_L^{-1}\sum_{i\in\Isc}\omega_i(\bx,\xi)\bU_i^L (\varepsilon_i+r_i) \biggl]^2\biggl].
\end{align}
Let $\bc_L:=\bS_L-\bd_L \bd_L^{\top}$. On the event $\Bsc$, the matrix $\bc_L$ is invertible. Since $\bd_L\left(\bd_L^{\top} \bc_L^{-1} \bd_L+1\right)=\left(\bd_L \bd_L^{\top}+\bc_L\right)\bc_L^{-1} \bd_L$ and $\bd_L^{\top} \bc_L^{-1} \bd_L\geq0$, we have $\bd_L=\left(\bd_L^{\top} \bc_L^{-1} \bd_L+1\right)^{-1}\left(\bd_L \bd_L^{\top}+\bc_L\right)\bc_L^{-1} \bd_L$. It follows that
\begin{align}
	\bd_L^{\top}\bS_L^{-1}\bd_L
	&=\bd_L^{\top}\left(\bd_L \bd_L^{\top}+\bc_L\right)^{-1}\bd_L\nonumber\\
	&=\bd_L^{\top}\left(\bd_L \bd_L^{\top}+\bc_L\right)^{-1}\left(\bd_L^{\top} \bc_L^{-1} \bd_L+1\right)^{-1}\left(\bd_L \bd_L^{\top}+\bc_L\right)\bc_L^{-1} \bd_L\nonumber\\
	&=\frac{\bd_L^{\top}\bc_L^{-1} \bd_L}{\bd_L^{\top} \bc_L^{-1} \bd_L+1}\leq1. \label{eq:dSd}
\end{align}
Then, we have $\left(1-\bd_L^{\top}\bS_L^{-1}\bd_L\right)^{-1}=1+\bd_L^{\top} \left(\bS_L-\bd_L \bd_L^{\top}\right)^{-1} \bd_L$.
Therefore,
\begin{align*}
	\E_{\bx}\left[\mhat_\mathrm{LCF}(\bx)-m(\bx)\right]^2\leq \E_{\xi}\left[\E_{\bx}\left[ \sum_{i=1}^{4}\Delta_i(\bx,\xi) \right]^2\sup_{\bx\in[0,1]^d}\left\{1+\bd_L^{\top} \left(\bS_L-\bd_L \bd_L^{\top}\right)^{-1} \bd_L\right\}\right],
\end{align*}
where for any $\bx\in[0,1]^d$ and $\xi\in\Xi$, 
\begin{align}
	\Delta_1(\bx,\xi)&:=\bd_L^{\top}\bS_L^{-1}\sum_{i\in\Isc}\omega_i(\bx,\xi)\bU_i^L r_i,\;\;\Delta_2(\bx,\xi):=\sum_{i\in\Isc}\omega_i(\bx,\xi)r_i,\label{eq_5}\\
	\Delta_3(\bx,\xi)&:=\bd_L^{\top}\bS_L^{-1}\sum_{i\in\Isc}\omega_i(\bx,\xi)\bU_i^L \varepsilon_i,\;\;\Delta_4(\bx,\xi):=\sum_{i\in\Isc}\omega_i(\bx,\xi)\varepsilon_i.\label{eq_6}
\end{align}
By the finite form of Jensen's inequality, we have
\begin{align*}
	\left[\frac{1}{4} \sum_{i=1}^{4}\Delta_i(\bx,\xi) \right]^2\leq \frac{1}{4}\sum_{i=1}^{4}\left[\Delta_i(\bx,\xi)\right]^2,
\end{align*}
which implies that
\begin{align}
	\E_{\bx}\left[\mhat_\mathrm{LCF}(\bx)-m(\bx)\right]^2\leq \E_{\xi}\left[4\sum_{i=1}^{4}\E_{\bx}\left[ \Delta_i(\bx,\xi) \right]^2\sup_{\bx\in[0,1]^d}\left\{1+\bd_L^{\top} \left(\bS_L-\bd_L \bd_L^{\top}\right)^{-1} \bd_L\right\}\right].\label{consistency:R1-R4}
\end{align}
By Lemma \ref{lem:sample eigen}, we have
\begin{align}\label{eq:cov}
	\sup_{\bx\in[0,1]^d,\xi\in\Xi}\left\{1+\bd_L^{\top} \left(\bS_L-\bd_L \bd_L^{\top}\right)^{-1} \bd_L\right\}=O_p(1).
\end{align}
Since $m\in \mathcal{H}^{q,\beta}$, by the Taylor's theorem, we have
\begin{align*}
	m(\bX_i)=P_{q-1}(\bX_i)+R_{q-1}(\bX_i),\;\;\text{where}\;\; P_{q-1}(\bX_i):= \sum_{|\bgamma|=0}^{q-1}\frac{D^{\bgamma}m(\bx)}{\bgamma!}(\bX_i-\bx)^{\bgamma}
\end{align*}
and $R_{q-1}(\bX_i) := \sum_{|\bgamma|=q}D^{\bgamma}m(\bxi)(\bX_i-\bx)^{\bgamma}/\bgamma!$ for some $\bxi_i$ between $\bx$ and $\bX_i$. By definition, $r_i=m(\bX_i)-\bG(\bX_i)^{\top}\bbeta=R_{q-1}(\bX_i)-(\bG(\bX_i)^{\top}\bbeta-P_{q-1}(\bX_i))=\sum_{|\bgamma|=q}(D^{\bgamma}m(\bxi_i)-D^{\bgamma}m(\bx))(\bX_i-\bx)^{\bgamma}/\bgamma!$ since $\bG(\bX_i)^{\top}\bbeta=\sum_{|\bgamma|=0}^{q}D^{\bgamma}m(\bx)(\bX-\bx)^{\bgamma}/\bgamma!$. By Assumption \ref{cond:holder}, we have
\begin{align}\label{consistency:eq1}
	r_i\leq\sum_{|\bgamma|=q}\frac{L_0}{\bgamma!}\|\bxi_i-\bx\|^\beta\|\bX_i-\bx\|^q\leq\sum_{|\bgamma|=q}\frac{L_0}{\bgamma!}\|\bX_i-\bx\|^{q+\beta}.
\end{align}
It follows that, for any $\xi\in\Xi$,
\begin{align*}
	\E_{\bx}\left[ \Delta_1(\bx,\xi) \right]^2\leq\left[\sum_{|\bgamma|=q}\frac{L_0}{\bgamma!}\right]^2\E_{\bx}\left[ \sum_{i\in\Isc}\omega_i(\bx,\xi)\bd_L^{\top}\bS_L^{-1}\bU_i^L \|\bX_i-\bx\|^{q+\beta} \right]^2.
\end{align*}
By Cauchy-Schwarz inequality, 
\begin{align}
	&\left(\sum_{i\in\Isc}\omega_i(\bx,\xi)\bd_L^{\top}\bS_L^{-1}\bU_i^L \|\bX_i-\bx\|^{q+\beta} \right)^2\nonumber\\
	&\qquad\leq \left(\sum_{i\in\Isc}\omega_i(\bx,\xi)\bd_L^{\top}\bS_L^{-1}\bU_i^L(\bU_i^L)^{\top} \bS_L^{-1}\bd_L\right)\left(\sum_{i\in\Isc}\omega_i(\bx,\xi)\|\bX_i-\bx\|^{2(q+\beta)} \right)\nonumber\\
	&\qquad\overset{(i)}{=} \bd_L^{\top}\bS_L^{-1}\bd_L\sum_{i\in\Isc}\omega_i(\bx,\xi)\|\bX_i-\bx\|^{2(q+\beta)}\overset{(ii)}{\leq} \sum_{i\in\Isc}\omega_i(\bx,\xi)\|\bX_i-\bx\|^{2(q+\beta)},\label{eq_7}
\end{align}
where (i) holds by the fact that $\bS_L=\sum_{i\in\Isc}\omega_i(\bx,\xi)\bU_i^L (\bU_i^L)^{\top}$; (ii) holds by \eqref{eq:dSd}. Then, we have 
\begin{align*}
	\E_{\bx}\left[ \Delta_1(\bx,\xi) \right]^2\leq\left[\sum_{|\bgamma|=q}\frac{L_0}{\bgamma!}\right]^2\E_{\bx}\left[\sum_{i\in\Isc}\omega_i(\bx,\xi)\|\bX_i-\bx\|^{2(q+\beta)} \right].
\end{align*}
By construction, we have
\begin{align}
	\sum_{i\in\Isc}\omega_i(\bx,\xi)\|\bX_i-\bx\|^{2(q+\beta)}
	&\leq \sum_{i\in\Isc} \omega_i(\bx,\xi) \mathrm{diam}^{2(q+\beta)}(L(\bx,\xi))\overset{(i)}{=} \mathrm{diam}^{2(q+\beta)}(L(\bx,\xi)),\label{consistency:eq4}
\end{align}
where (i) holds by $\sum_{i\in\Isc} \omega_i(\bx,\xi)=1$. By Lemma \ref{lem:balance diam}, for any $\xi\in\Xi$,
\begin{align*}
	&\E_{\S_\Isc}\left[\E_{\bx}\left[\mathrm{diam}^{2(q+\beta)}(L(\bx,\xi)) \right]\right]\leq\sup_{\bx\in[0,1]^d}\E_{\S_\Isc}\left[\mathrm{diam}^{2(q+\beta)}(L(\bx,\xi))\right]\\
	&\qquad\leq 4^{q+\beta}d^{q+\beta}\exp \left(4(q+\beta)^2+2(q+\beta)+1\right)\left(\frac{\lfloor wN\rfloor}{2k-1}\right)^{-\frac{2(q+\beta)\log (1-\alpha)}{d \log (\alpha)}}.
\end{align*}
By Markov's inequality, as $N\to\infty$, we have
\begin{align}
	\E_{\bx}\left[\mathrm{diam}^{2(q+\beta)}(L(\bx,\xi)) \right]=O_p\left(\left(\frac{N}{k}\right)^{-\frac{2(q+\beta)\log (1-\alpha)}{d \log (\alpha)}}\right).\label{bound:diam}
\end{align}
Therefore, for any $\xi\in\Xi$,
\begin{align}
	\E_{\bx}\left[ \Delta_1(\bx,\xi) \right]^2&\leq\left[\sum_{|\bgamma|=q}\frac{L_0}{\bgamma!}\right]^2\E_{\bx}\left[\mathrm{diam}^{2(q+\beta)}(L(\bx,\xi)) \right]\nonumber\\
	&=O_p\left(\left(\frac{N}{k}\right)^{-\frac{2(q+\beta)\log (1-\alpha)}{d \log (\alpha)}}\right).\label{consistency:R1}
\end{align}
In addition, by \eqref{consistency:eq1}, for any $\xi\in\Xi$,
\begin{align*}
	\E_{\bx}\left[ \Delta_2(\bx,\xi) \right]^2\leq\left[\sum_{|\bgamma|=q}\frac{L_0}{\bgamma!}\right]^2\E_{\bx}\left[ \sum_{i\in\Isc}\omega_i(\bx,\xi)\|\bX_i-\bx\|^{q+\beta} \right]^2.
\end{align*}
By Cauchy-Schwarz inequality, 
\begin{align}
	&\left[ \sum_{i\in\Isc}\omega_i(\bx,\xi)\|\bX_i-\bx\|^{q+\beta} \right]^2\leq \sum_{i\in\Isc}\omega_i(\bx,\xi)\sum_{i\in\Isc}\omega_i(\bx,\xi)\|\bX_i-\bx\|^{2(q+\beta)}\nonumber\\
	&\qquad\overset{(i)}{=}\sum_{i\in\Isc}\omega_i(\bx,\xi)\|\bX_i-\bx\|^{2(q+\beta)}\overset{(ii)}{\leq} \mathrm{diam}^{2(q+\beta)}(L(\bx,\xi)),\label{eq_8}
\end{align}
where (i) holds by $\sum_{i\in\Isc} \omega_i(\bx,\xi)=1$; (ii) holds by \eqref{consistency:eq4}. Therefore, we have
\begin{align*}
	\E_{\bx}\left[ \Delta_2(\bx,\xi) \right]^2\leq\left[\sum_{|\bgamma|=q}\frac{L_0}{\bgamma!}\right]^2\E_{\bx}\left[\mathrm{diam}^{2(q+\beta)}(L(\bx,\xi)) \right].
\end{align*}
Together with \eqref{bound:diam}, for any $\xi\in\Xi$, we have
\begin{align}
	\E_{\bx}\left[ \Delta_2(\bx,\xi) \right]^2=O_p\left(\left(\frac{N}{k}\right)^{-\frac{2(q+\beta)\log (1-\alpha)}{d \log (\alpha)}}\right).\label{consistency:R2}
\end{align}
As for the term $\Delta_3(\bx,\xi)$, for any $\xi\in\Xi$,
\begin{align*}
	&\E_{\S_\Isc}\left[\E_{\bx}\left[ \Delta_3(\bx,\xi) \right]^2\right]=\E_{\bx}\left[\E_{\S_\Isc}\left[ \Delta_3(\bx,\xi) \right]^2\right]\\
	&\qquad=\E_{\bx}\left[\E_{\S_\Isc}\left[\bd_L^{\top}\bS_L^{-1}\sum_{i\in\Isc}\omega_i^2(\bx,\xi)\bU_i^L (\bU_i^L)^{\top}\varepsilon_i^2 \bS_L^{-1}\bd_L\right]\right]\\
	&\qquad\qquad+\E_{\bx}\left[\E_{\S_\Isc}\left[\sum_{i,j\in\Isc,i\neq j}\bd_L^{\top}\bS_L^{-1}\bU_i^L \bU_{L,j}^{\top}\bS_L^{-1}\bd_L\omega_i(\bx,\xi)\omega_j(\bx)\varepsilon_i\varepsilon_j\right]\right].
\end{align*}
By the tower rule, for any $i,j\in\Isc$ with $i\neq j$, we have
\begin{align*}
	&\E_{\S_\Isc}\left[\bd_L^{\top}\bS_L^{-1}\bU_i^L \bU_{L,j}^{\top}\bS_L^{-1}\bd_L\omega_i(\bx,\xi)\omega_j(\bx)\varepsilon_i\varepsilon_j\right]\\
	&\qquad\overset{(i)}{=}\E_{\S_\Isc}\left[\bd_L^{\top}\bS_L^{-1}\bU_i^L \bU_{L,j}^{\top}\bS_L^{-1}\bd_L\omega_i(\bx,\xi)\omega_j(\bx)\E_{\S_\Isc}\left[\varepsilon_i\varepsilon_j \mid \{\bX_l\}_{l=1}^N,\{Y_l\}_{l\in\Jsc}\right]\right]\\
	&\qquad\overset{(ii)}{=}\E_{\S_\Isc}\left[\bd_L^{\top}\bS_L^{-1}\bU_i^L \bU_{L,j}^{\top}\bS_L^{-1}\bd_L\omega_i(\bx,\xi)\omega_j(\bx)\E_{\S_\Isc}\left[\varepsilon_i \mid \bX_i \right]\E_{\S_\Isc}\left[\varepsilon_j \mid \bX_j \right]\right]\overset{(iii)}{=}0,
\end{align*}
where (i) holds by ``honesty" of the forests; (ii) holds by the independency of the samples; (iii) holds since $\E[\varepsilon\mid\bX]=0$. Therefore, we have 
\begin{align*}
	\E_{\S_\Isc}\left[\E_{\bx}\left[ \Delta_3(\bx,\xi) \right]^2\right]=\E_{\bx}\left[\E_{\S_\Isc}\left[\bd_L^{\top}\bS_L^{-1}\sum_{i\in\Isc}\omega_i^2(\bx,\xi)\bU_i^L (\bU_i^L)^{\top}\varepsilon_i^2 \bS_L^{-1}\bd_L\right]\right].
\end{align*}
By the tower rule, we have
\begin{align*}
	&\E_{\S_\Isc}\left[\E_{\bx}\left[ \Delta_3(\bx,\xi) \right]^2\right]\\
	&\qquad\overset{(i)}{=}\E_{\bx}\left[\E_{\S_\Isc}\left[\E_{\S_\Isc}\left[\bd_L^{\top}\bS_L^{-1}\sum_{i\in\Isc}\omega_i^2(\bx,\xi)\bU_i^L (\bU_i^L)^{\top}\varepsilon_i^2 \bS_L^{-1}\bd_L\mid \{\bX_l\}_{l=1}^N, \{Y_l\}_{l\in\Jsc}\right]\right] \right]\\
	&\qquad\overset{(ii)}{=}\E_{\bx}\left[\E_{\S_\Isc}\left[\E_{\S_\Isc}\left[\bd_L^{\top}\bS_L^{-1}\sum_{i\in\Isc}\omega_i^2(\bx,\xi)\bU_i^L (\bU_i^L)^{\top}\bS_L^{-1}\bd_L\E[\varepsilon_i^2\mid\bX_i]\right]\right] \right]\\
	&\qquad\overset{(iii)}{\leq} M\E_{\bx}\left[\E_{\S_\Isc}\left[\bd_L^{\top}\bS_L^{-1}\sum_{i\in\Isc}\omega_i^2(\bx,\xi)\bU_i^L (\bU_i^L)^{\top} \bS_L^{-1}\bd_L\right] \right],
\end{align*}
where (i) holds by ``honesty" of the forests; (ii) holds by the independency of the samples; (iii) holds by Assumption \ref{cond:noise}.
By \eqref{consistency:eq3}, we have
\begin{align*}
	\E_{\S_\Isc}\left[\E_{\bx}\left[ \Delta_3(\bx,\xi) \right]^2\right]
	\leq \frac{M}{k}\E_{\bx}\left[\E_{\S_\Isc}\left[\bd_L^{\top}\bS_L^{-1}\sum_{i\in\Isc}\omega_i(\bx,\xi)\bU_i^L (\bU_i^L)^{\top} \bS_L^{-1}\bd_L\right] \right].
\end{align*}
Since $\bS_L=\sum_{i\in\Isc}\omega_i(\bx,\xi)\bU_i^L (\bU_i^L)^{\top}$ and \eqref{eq:dSd} holds, we have $\bd_L^{\top}\bS_L^{-1}\sum_{i\in\Isc}\omega_i(\bx,\xi)\bU_i^L (\bU_i^L)^{\top} $ $\bS_L^{-1}\bd_L=\bd_L^{\top}\bS_L^{-1}\bd_L\leq 1$, and hence
\begin{align*}
	\E_{\S_\Isc}\left[\E_{\bx}\left[ \Delta_3(\bx,\xi) \right]^2\right]\leq \frac{M}{k}.
\end{align*}
By Markov's inequality, for any $\xi\in\Xi$, we have
\begin{align}
	\E_{\bx}\left[ \Delta_3(\bx,\xi) \right]^2=O_p\left(\frac{1}{k}\right).\label{consistency:R3}
\end{align}
Lastly, for the term $\Delta_4(\bx,\xi)$, with any $\xi\in\Xi$,
\begin{align*}
	&\E_{\S_\Isc}\left[\E_{\bx}\left[ \Delta_4(\bx,\xi) \right]^2\right]=\E_{\bx}\left[\E_{\S_\Isc}\left[ \Delta_4(\bx,\xi) \right]^2\right]\\
	&\qquad=\E_{\bx}\left[\E_{\S_\Isc}\left[\sum_{i\in\Isc}\omega_i^2(\bx,\xi)\varepsilon_i^2\right]\right]+\E_{\bx}\left[\E_{\S_\Isc}\left[\sum_{i,j\in\Isc,i\neq j}\omega_i(\bx,\xi)\omega_j(\bx)\varepsilon_i\varepsilon_j\right]\right].
\end{align*}
Using the tower rule, we also have
\begin{align*}
	&\E_{\S_\Isc}\left[\omega_i(\bx,\xi)\omega_j(\bx,\xi)\varepsilon_i\varepsilon_j\right]\overset{(i)}{=}\E_{\S_\Isc}\left[\omega_i(\bx,\xi)\omega_j(\bx,\xi)\E_{\S_\Isc}\left[\varepsilon_i\varepsilon_j \mid \{\bX_l\}_{l=1}^N, \{Y_l\}_{l\in\Jsc} \right]\right]\\
	&\qquad\overset{(ii)}{=}\E_{\S_\Isc}\left[\omega_i(\bx,\xi)\omega_j(\bx,\xi)\E_{\S_\Isc}\left[\varepsilon_i \mid \bX_i \right]\E_{\S_\Isc}\left[\varepsilon_j \mid \bX_j \right]\right]\overset{(iii)}{=}0,
\end{align*}
where (i) holds by ``honesty" of the forests; (ii) holds by the independency of the samples; (iii) holds since $\E[\varepsilon\mid\bX]=0$. Therefore, we have 
\begin{align*}
	&\E_{\S_\Isc}\left[\E_{\bx}\left[ \Delta_4(\bx,\xi) \right]^2\right]=\E_{\bx}\left[\E_{\S_\Isc}\left[\sum_{i\in\Isc}\omega_i^2(\bx,\xi)\varepsilon_i^2\right]\right]\\
	&\quad\overset{(i)}{=}\E_{\bx}\left[\E_{\S_\Isc}\left[\E_{\S_\Isc}\left[\sum_{i\in\Isc}\omega_i^2(\bx,\xi) \varepsilon_i^2\mid \{\bX_l\}_{l=1}^N, \{Y_l\}_{l\in\Jsc} \right]\right]\right]\\
	&\quad\overset{(iii)}{=}\E_{\bx}\left[\E_{\S_\Isc}\left[\sum_{i\in\Isc}\omega_i^2(\bx,\xi)\E_{\S_\Isc}\left[ \varepsilon_i^2\mid \bX_i\right]\right]\right]\\
	&\quad\overset{(iv)}{\leq} M\E_{\bx}\left[\E_{\S_\Isc}\left[\sum_{i\in\Isc}\omega_i^2(\bx,\xi)\right]\right],
\end{align*}
where (i) holds by the tower rule and ``honesty" of the forests; (ii) holds by the independency of the samples; (iii) holds by Assumption \ref{cond:noise}. By \eqref{consistency:eq3} and $\sum_{i\in\Isc}\omega_i(\bx,\xi)=1$, we have
\begin{align*}
	\E_{\S_\Isc}\left[\E_{\bx}\left[ \Delta_4(\bx,\xi) \right]^2\right]\leq\frac{M}{k}.
\end{align*}
By Markov's inequality, for any $\xi\in\Xi$, we have 
\begin{align}
	\E_{\bx}\left[ \Delta_4(\bx,\xi) \right]^2=O_p\left(\frac{1}{k}\right). \label{consistency:R4}
\end{align}
Combining \eqref{consistency:R1}, \eqref{consistency:R2}, \eqref{consistency:R3} and \eqref{consistency:R4} with \eqref{eq:cov}, we have
\begin{align*}
	&\E_{\xi}\left[4\sum_{i=1}^{4}\E_{\bx}\left[ \Delta_i(\bx,\xi) \right]^2\sup_{\bx\in[0,1]^d}\left\{1+\bd_L^{\top} \left(\bS_L-\bd_L \bd_L^{\top}\right)^{-1} \bd_L\right\}\right]\\
	&\qquad=O_p\left(\frac{1}{k}+ \left(\frac{N}{k}\right)^{-\frac{2(q+\beta)\log (1-\alpha)}{d \log (\alpha)}} \right).
\end{align*}
Together with \eqref{consistency:R1-R4}, we conclude that
$$\E_{\bx}\left[\mhat_\mathrm{L}(\bx)-m(\bx)\right]^2=O_p\left(\frac{1}{k}+ \left(\frac{k}{N}\right)^{\frac{2(q+\beta)\log (1-\alpha)}{d \log (\alpha)}} \right). $$
\end{proof}

\section{Proofs of the uniform convergence results}

\begin{proof}[Proof of Lemma \ref{lem:balance diam_point-wise}]
In this proof, we use the same notation as Lemma \ref{lem:balance diam}. Define $T$ as in \eqref{diam_eq7}. For any $1 \leq j \leq d$, we have
\begin{align}
	\mathrm{diam}_j(L(\bx,\xi))\leq\mathrm{diam}_j(L_{k_{T,j}}(\bx,\xi))\overset{(i)}{=}\prod_{i=1}^{T} \frac{\mathrm{diam}_j(L_{k_{i,j}}(\bx,\xi))}{\mathrm{diam}_j(L_{k_{i,j}-1}(\bx,\xi))},\label{u_1}
\end{align}
where (i) holds by $\mathrm{diam}_j(L_{k_{i,j}-1}(\bx,\xi))=\mathrm{diam}_j(L_{k_{i-1,j}}(\bx,\xi))$ for any $2\leq i\leq T$ and the fact that $\mathrm{diam}_j(L_{k_{1,j}}(\bx,\xi))=1$. 

For any $m\in\{1,2,\dots,Td\}$, choose $\omega_m=k(2n(1-\alpha)^{Td-m})^{-1}$, $\epsilon_m=1/ \sqrt{n\omega_m}$ and $\delta=1/\sqrt{n}$ throughout this proof, where $n=\lfloor wN\rfloor$. By \eqref{balance diam_eq1}, $\omega_m\leq1/2$. Additionally, we also have $(n\omega_m)^{-1}\leq2/k=o(1)$ and $\epsilon_m\leq\sqrt{2/k}=o(1)$ as $N\to\infty$. Define $\mathcal{R}_m:=\mathcal{R}_{\mathcal{D},\omega_m,\epsilon_m}$ as in Lemma \ref{lem:R cardinality} and the event
\begin{align}\label{Asc_i}
\Asc_m:=\left\{\sup\left\{ \frac{|\# R-n\mu(R)|}{\sqrt{n\mu(R)}}: R\in \mathcal{R}_m, \mu(R)\geq\exp(-\epsilon_m)\omega_m \right\}\leq\sqrt{3\log\left(\frac{\# \mathcal{R}_m }{\delta}\right)}\right\},
\end{align}
for any $m\leq Td$. By Lemma \ref{lem:R cardinality}, as $N\to\infty$, we have $\log(\#\mathcal{R}_m /\delta)\asymp\log(n)$ and $1/\#\mathcal{R}_m=O(n^{-1}(n\omega_m)^{1-d})=O(n^{-1})$ since $d\geq1$. Hence,
\begin{align*}
\frac{\log(\#\mathcal{R}_m)}{n\exp(-\epsilon_m)\omega_m}=O\left(\frac{\log(n)}{k}\right)=o(1)\;\;\text{and}\;\;\frac{\sqrt {n} }{\#\mathcal{R}_m}=o(1),
\end{align*}
as long as $k\gg\log(N)\asymp\log(n)$. By Lemma \ref{lem:event A}, when $n$ is large enough, we have $\P_{\S_\Isc}\left(\Asc_m^c\right)\leq 1/\sqrt n.$ By the union bound, we have
\begin{align*}
	\P_{\S_\Isc}\left(\bigcup_{m=1}^{Td}\Asc_m^c\right)\leq \frac{Td}{\sqrt n} \leq \frac{\log(n/k)}{\sqrt n\log\left((1-\alpha)^{-1}\right)}=o(1),
\end{align*}
since $T$, defined as \eqref{diam_eq7}, satisfies $Td\leq\log(n/k)/\log((1-\alpha)^{-1})$. Condition on the event $\cap_{m=1}^{Td}\Asc_m$. Note that $\omega_m\geq\exp(-\epsilon_m)\omega_m$. By Lemma 13 of \cite{wager2015adaptive},
\begin{align*}
&\sup\{\# R:\mu(R)=\omega_m\}\leq \exp(\epsilon_m)n\omega_m+\exp(\epsilon_m/2)\sqrt{3n\omega_m\log(\#\mathcal{R}_m/\delta)}\\
&\qquad= \exp(1/\sqrt{n\omega_m})n\omega_m+\exp\{1/(2\sqrt{n\omega_m})\}\sqrt{3n\omega_m\log(\#\mathcal{R}_m/\delta)}< 2n\omega_m,
\end{align*}
when $n\omega_m\geq k/2$ is large enough, since $\log(\#\mathcal{R}_m/\delta)=O(\log(n))=o(n\omega_m)$. Therefore, any rectangle containing at least $2n\omega_m$ samples must have size greater than $\omega_m$. Meanwhile, by \eqref{balance diam_eq2}, we have $\# L_m(\bx,\xi)\geq k/(1-\alpha)^{Td-m}=2n\omega_m$. Hence, $\mu(L_m(\bx,\xi))\geq \omega_m$ for all $\bx\in [0,1]^d$, $\xi\in\Xi$, and $m\leq Td$. By Lemma \ref{lem:R cardinality}, we can choose some $\Rbar_m:=\Rbar_m(\bx,\xi)\in \mathcal{R}_m$ as an inner approximation of $L_m(\bx,\xi)$ satisfying $\Rbar_m\subseteq L_m(\bx,\xi)$ with $\mu(L_m(\bx,\xi))\leq\exp(\epsilon_m)\mu(\Rbar_m)$ and hence $\mu(\Rbar_m)\geq\exp(-\epsilon_m)\mu(L_m(\bx,\xi))\geq\exp(-\epsilon_m)\omega_m$ when $k$ is large enough. Since $\Asc_m$ occurs, for any $\bx\in [0,1]^d$, $\xi\in\Xi$, and $m\leq Td$,
\begin{align*}
\#\Rbar_m\geq n\mu(\Rbar_m)-\sqrt{3n\mu(\Rbar_m)\log(\#\mathcal{R}_m /\delta)},
\end{align*}
which implies that
\begin{align*}
\sqrt{\mu(\Rbar_m)}\leq\sqrt\frac{3\log(\#\mathcal{R}_m /\delta)}{n}+\sqrt\frac{\#\Rbar_m}{n}.
\end{align*}
Hence,
\begin{align}
&\mu(L_m(\bx,\xi))\leq\exp(\epsilon_m)\mu(\Rbar_m)\leq\exp(\epsilon_m)\left(\sqrt\frac{3\log(\#\mathcal{R}_m /\delta)}{n}+\sqrt\frac{\#\Rbar_m}{n}\right)^2\nonumber\\
&\qquad\overset{(i)}{\leq}\exp(\sqrt{2/k})\left(\sqrt\frac{3\log(\#\mathcal{R}_m /\delta)}{n}+\sqrt\frac{n_m}{n}\right)^2\nonumber\\
&\qquad\overset{(ii)}{\leq}\exp(\sqrt{2/k})\left(C\frac{\sqrt{n_m\log(n)}}{n}+\frac{n_m}{n}\right),\label{bound:mu-}
\end{align}
where (i) holds since $\exp(\epsilon_m)=\exp(1/\sqrt{n\omega_m})\leq\exp(\sqrt{2/k})$, $\Rbar_m\subseteq L_m(\bx,\xi)$, and hence $\#\Rbar_m\leq\#L_m(\bx,\xi)=:n_m$; (ii) holds with some constant $C>0$ as $\log(\#\mathcal{R}_m /\delta)=O(\log(n))=O(k)=O(n_m)$. 

By Lemma \ref{lem:R cardinality}, we can also choose some $\Rtil_m:=\Rbar_m(\bx,\xi)\in \mathcal{R}_m$ as an outer approximation of $L_m(\bx,\xi)$ satisfying $\Rtil_m\supseteq L_m(\bx,\xi)$ with $\mu(\Rtil_m)\leq\exp(\epsilon_m)\mu(L_m(\bx,\xi))$ when $k$ is large enough. In addition, since $\mu(\Rtil_m)\geq\mu(L_m(\bx,\xi))\geq \omega_m\geq\exp(-\epsilon_m)\omega_m$, on the event $\Asc_m$,
$$\#\Rtil_m\leq n\mu(\Rtil_m)+\sqrt{3n\mu(\Rtil_m)\log(\#\mathcal{R}_m /\delta)},$$
which implies that
\begin{align*}
\sqrt{\mu(\Rtil_m)}&\geq\frac{\sqrt{3n\log(\#\mathcal{R}_m /\delta)+4n\#\Rtil_m}-\sqrt{3n\log(\#\mathcal{R}_m /\delta)}}{2n}\\
&\geq\frac{\sqrt{3\log(\#\mathcal{R}_m /\delta)+4n_m}-\sqrt{3\log(\#\mathcal{R}_m /\delta)}}{2\sqrt n}\\
&=\frac{2n_m/\sqrt n}{\sqrt{3\log(\#\mathcal{R}_m /\delta)+4n_m}+\sqrt{3\log(\#\mathcal{R}_m /\delta)}},
\end{align*}
since $\#\mathcal{R}_m\geq\#L_m(\bx,\xi)=n_m$. It follows that
\begin{align}
&1/\mu(L_{m-1}(\bx,\xi))\leq1/\mu(\Rtil_{m-1})\nonumber\\
&\qquad\leq\frac{6\log(\#\mathcal{R}_{m-1} /\delta)+4n_{m-1}+\sqrt{(3\log(\#\mathcal{R}_{m-1} /\delta)+4n_{m-1})3\log(\#\mathcal{R}_{m-1} /\delta)}}{4n_{m-1}^2/n}\nonumber\\
&\qquad\leq\frac{n}{n_{m-1}}+\frac{cn\sqrt{\log(n)}}{n_{m-1}^{3/2}},\label{bound:mu+}
\end{align}
with some constant $c>0$, since $\log(\#\mathcal{R}_{m-1} /\delta)\asymp\log(n)=O(k)=O(n_{m-1})$.

By definition, the leaves $L_{k_{i,j}}(\bx,\xi)$ and $L_{k_{i,j}-1}(\bx,\xi))$ only differ along the $j$th coordinate. Hence, for any $j\leq d$ and $i\leq T$, we have
\begin{align*}
&\frac{\mathrm{diam}_j(L_{k_{i,j}}(\bx,\xi))}{\mathrm{diam}_j(L_{k_{i,j}-1}(\bx,\xi))}=\frac{\mu(L_{k_{i,j}}(\bx,\xi))}{\mu(L_{k_{i,j}-1}(\bx,\xi))}\\
&\qquad\overset{(i)}{\leq}\exp(\sqrt{2/k})\left(C\frac{\sqrt{n_{k_{i,j}}\log(n)}}{n}+\frac{n_{k_{i,j}}}{n}\right)\left(\frac{n}{n_{k_{i,j}-1}}+\frac{cn\sqrt{\log(n)}}{n_{k_{i,j}-1}^{3/2}}\right)\\
&\qquad\overset{(ii)}{\leq}\exp(\sqrt{2/k})\left(1-\alpha+C'\sqrt\frac{\log(n)}{k}\right),
\end{align*}
where (i) holds by \eqref{bound:mu-} and \eqref{bound:mu+}, (ii) holds with some constant $C'>0$ since $n_{k_{i,j}}\leq (1-\alpha)n_{k_{i,j}-1}$, $n_{k_{i,j}-1}\geq k$, and $k\gg\log(n)$. Together with \eqref{u_1}, we have for any $\bx\in [0,1]^d$ and $\xi\in\Xi$,
\begin{align*}
&\mathrm{diam}_j(L(\bx,\xi))\leq\prod_{i=1}^{T}\exp(\sqrt{2/k})\left(1-\alpha+C'\sqrt\frac{\log(n)}{k}\right)\\
&\qquad\leq(1-\alpha)^T\exp(T\sqrt{2/k})\left(1+\frac{C'}{1-\alpha}\sqrt\frac{\log(n)}{k}\right)^T.
\end{align*}
Since $T\leq\frac{\log(n/(2k-1))}{d\log(\alpha^{-1})}$ and $k\gg\log^3(n)$, we have $T\sqrt{2/k}=o(1)$ and hence $\exp(T\sqrt{2/k})=1+o(1)$. Additionally, when $k\gg\log^3(n)$,
$$\left(1+\frac{C'}{1-\alpha}\sqrt\frac{\log(n)}{k}\right)^T\leq\left(1+\frac{C'}{1-\alpha}\sqrt\frac{\log(n)}{k}\right)^{\sqrt\frac{k}{\log(n)}\cdot\sqrt\frac{\log^3(n)}{k}}=1+o(1).$$
Therefore, on the event $\cap_{m=1}^{Td}\Asc_m$, when $n$ is large enough, for all $\bx\in[0,1]^d$ and $\xi\in\Xi$,
\begin{align*}
&\mathrm{diam}_j(L(\bx,\xi))\leq2(1-\alpha)^\frac{\log(n/(2k-1))}{d\log(\alpha^{-1})}=2\left(\frac{n}{2k-1}\right)^{-\frac{\log(1-\alpha)}{d\log(\alpha)}}.
\end{align*}
Since $N\asymp n$ and $\mathrm{diam}^r(L(\bx,\xi))= \left[\sum_{j=1}^d\mathrm{diam}_j^2(L(\bx,\xi))\right]^{r/2}$ for any $r\geq 1$, as $N\to\infty$, we have
\begin{align*}
	\sup_{\bx\in[0,1]^d,\xi\in\Xi}\left\{\mathrm{diam}^r(L(\bx,\xi))\right\} =O\left(\left(\frac{N}{k}\right)^{-\frac{r\log (1-\alpha)}{d \log (\alpha)}} \right).
\end{align*}	
\end{proof}

\begin{proof}[Proof of Theorem \ref{thm:local_consistency_uniform}]
In this proof, we condition on the event $\Asc\cap\Bsc\cap\Csc\cap \bar{\Asc}$, where $ \bar{\Asc}:=\cap_{m=1}^{Td}\Asc_m$. Let $n=\lfloor wN\rfloor$. The event $\Asc$ is defined in Lemma \ref{lem:event A}, with $\mathcal{R}=\mathcal{R}_{\mathcal{D},\omega,\epsilon}$, $\mu_{min}=\omega$, and $\delta=1/\sqrt n$, where $\omega$ and $\epsilon$ are chosen as in \eqref{par: R}. The events $\Bsc$, $\Csc$, and $\Asc_m$ are defined as \eqref{event B}, \eqref{event C}, and \eqref{Asc_i}, respectively. By Lemmas \ref{lem:sample eigen} and \ref{lem:p.s.d}, we know that $\bS_L-\bd_L\bd_L^\top$, $\bS_L$, $\bS$, $\sum_{i\in\Isc}\omega_i(\bx,\xi)\bDelta_i\bDelta_i^\top$ and $\sum_{i\in\Isc}\omega_i(\bx,\xi)\bG(\bX_i)\bG(\bX_i)^{\top}$ are all positive-definite, with $\P_{\S_\Isc}\left(\Bsc\cap\Csc\right)=1-o(1)$. Together with Lemma \ref{lem:event A}, we have $\P_{\S_\Isc}(\Asc\cap\Bsc\cap\Csc\cap\bar{\Asc})=1-o(1)$.
Recall the definition of $\mhat_\mathrm{LCF}(\bx)$, we have
\begin{align*}
	&\sup_{\bx\in[0,1]^d}\left|\mhat_\mathrm{LCF}(\bx)-m(\bx)\right|= \sup_{\bx\in[0,1]^d}\left|\E_{\xi}\left[\bG(\bx)^\top\left(\bbetahat(\bx,\xi)-\bbeta\right)\right]\right|\\
	&\qquad\leq
	\sup_{\bx\in[0,1]^d}\left[\E_{\xi}\left|\bG(\bx)^\top\left(\bbetahat(\bx,\xi)-\bbeta\right)\right|\right].
\end{align*}
By \eqref{eq_2}, we have
\begin{align}
	&\sup_{\bx\in[0,1]^d}\left|\mhat_\mathrm{LCF}(\bx)-m(\bx)\right|\nonumber\\
	&\qquad\leq\sup_{\bx\in[0,1]^d}\left[ \E_{\xi}\left|\be_1^{\top} \left(\sum_{i\in\Isc}\omega_i(\bx,\xi)\bDelta_i\bDelta_i^{\top}\right)^{-1}\left(\sum_{i\in\Isc}\omega_i(\bx,\xi)\bDelta_i(\varepsilon_i+r_i)\right)\right|\right]\nonumber\\
	&\qquad\overset{(i)}{\leq} \sup_{\bx\in[0,1]^d} \biggl[\E_{\xi}\biggl|(1-\bd^{\top}\bS^{-1}\bd)^{-1}\sum_{i\in\Isc}\omega_i(\bx,\xi)(\varepsilon_i+r_i)\nonumber\\
	&\qquad\qquad+(1-\bd^{\top}\bS^{-1}\bd)^{-1}\bd^{\top}\bS^{-1}\sum_{i\in\Isc}\omega_i(\bx,\xi)\bU_i(\varepsilon_i+r_i) \biggl|\biggl].\label{p1}
\end{align}
where (i) hold by \eqref{eq_3} and \eqref{eq_4}.
Plugging \eqref{def:U_L} into \eqref{p1}, we have
\begin{align*}
	\sup_{\bx\in[0,1]^d}\left|\mhat_\mathrm{LCF}(\bx)-m(\bx)\right|&\leq \sup_{\bx\in[0,1]^d} \biggl[\E_{\xi}\biggl|(1-\bd_L^{\top}\bS_L^{-1}\bd_L)^{-1}\sum_{i\in\Isc}\omega_i(\bx,\xi)(\varepsilon_i+r_i)\nonumber\\
	&\qquad+(1-\bd_L^{\top}\bS_L^{-1}\bd_L)^{-1}\bd_L^{\top}\bS_L^{-1}\sum_{i\in\Isc}\omega_i(\bx,\xi)\bU_i^L (\varepsilon_i+r_i)\biggl|\biggl].
\end{align*}
By \eqref{eq:dSd} and the triangle inequality,
\begin{align}
	&\sup_{\bx\in[0,1]^d}\left|\mhat_\mathrm{LCF}(\bx)-m(\bx)\right|\nonumber\\
	&\qquad\leq 
	\sum_{i=1}^{4}\left[\sup_{\bx\in[0,1]^d}\E_{\xi}\left| \Delta_i(\bx,\xi) \right|\right]\sup_{\bx\in[0,1]^d,\xi\in\Xi}\left\{1+\bd_L^{\top} \left(\bS_L-\bd_L \bd_L^{\top}\right)^{-1} \bd_L\right\},\label{delta}
\end{align}
where $	\Delta_i(\bx,\xi)$ ($i\in\{1,2,3,4\}$) are defined as \eqref{eq_5}-\eqref{eq_6}.
Since $m\in \mathcal{H}^{q,\beta}$, by \eqref{consistency:eq1}, we have
\begin{align*}
	\sup_{\bx\in[0,1]^d}\E_{\xi}\left| \Delta_1(\bx,\xi) \right|&\leq\left[\sum_{|\bgamma|=q}\frac{L_0}{\bgamma!}\right]\sup_{\bx\in[0,1]^d}\E_{\xi}\left| \sum_{i\in\Isc}\omega_i(\bx,\xi)\bd_L^{\top}\bS_L^{-1}\bU_i^L \|\bX_i-\bx\|^{q+\beta} \right|,\\
	\sup_{\bx\in[0,1]^d}\E_{\xi}\left| \Delta_2(\bx,\xi) \right|&\leq\left[\sum_{|\bgamma|=q}\frac{L_0}{\bgamma!}\right]\sup_{\bx\in[0,1]^d}\E_{\xi}\left| \sum_{i\in\Isc}\omega_i(\bx,\xi) \|\bX_i-\bx\|^{q+\beta} \right|.
\end{align*}
By \eqref{eq_7}, \eqref{consistency:eq4} and \eqref{eq_8}, we have
\begin{align*}
	\left| \sum_{i\in\Isc}\omega_i(\bx,\xi)\bd_L^{\top}\bS_L^{-1}\bU_i^L \|\bX_i-\bx\|^{q+\beta} \right|&\leq\mathrm{diam}^{(q+\beta)}(L(\bx,\xi)),\\
	\left| \sum_{i\in\Isc}\omega_i(\bx,\xi) \|\bX_i-\bx\|^{q+\beta} \right|&\leq\mathrm{diam}^{(q+\beta)}(L(\bx,\xi)).
\end{align*}
Then, we have
\begin{align*}
	\sup_{\bx\in[0,1]^d}\E_{\xi}\left| \Delta_1(\bx,\xi) \right|&\leq\left[\sum_{|\bgamma|=q}\frac{L_0}{\bgamma!}\right]\sup_{\bx\in[0,1]^d}\E_{\xi}\left[\mathrm{diam}^{(q+\beta)}(L(\bx,\xi))\right],\\
	\sup_{\bx\in[0,1]^d}\E_{\xi}\left| \Delta_2(\bx,\xi) \right|&\leq\left[\sum_{|\bgamma|=q}\frac{L_0}{\bgamma!}\right]\sup_{\bx\in[0,1]^d}\E_{\xi}\left[\mathrm{diam}^{(q+\beta)}(L(\bx,\xi))\right].
\end{align*}
Hence, by Lemma \ref{lem:balance diam_point-wise} with $r=q+\beta$, conditional on the event $\bar{\Asc}$, as $N\to\infty$, we have
\begin{align}
	\sup_{\bx\in[0,1]^d}\E_{\xi}\left| \Delta_1(\bx,\xi) \right|&=O_p\left( \left(\frac{N}{k}\right)^{-\frac{(q+\beta)\log (1-\alpha)}{d\log (\alpha)}}\right),\label{delta_1}\\
	\sup_{\bx\in[0,1]^d}\E_{\xi}\left| \Delta_2(\bx,\xi) \right|&=O_p\left( \left(\frac{N}{k}\right)^{-\frac{(q+\beta)\log (1-\alpha)}{d\log (\alpha)}}\right).\label{delta_2}
\end{align}
Conditional on the event $\Asc$ above, we follow the proof of Lemma \ref{lem:sample eigen} to choose some $\Rbar:=\Rbar(\bx,\xi)\in \mathcal{R}_{\mathcal{D},\omega,\epsilon}$ as an inner approximation of $L(\bx,\xi)$ satisfying $\Rbar\subseteq L(\bx,\xi)$ with \eqref{eigen_eq4} and \eqref{eigen_eq2}. Recall the definition $\omega_{i}^L:=\omega_i(\bx,\xi)=\mathbbm{1}_{\left \{\bX_i\in L(\bx,\xi)\right\}}/\#L$ and $\omega_{i}^{\Rbar}:=\mathbbm{1}_{\left \{\bX_i\in \Rbar\right\}}/\# \Rbar$, where $\#L:=\#L(\bx,\xi)$ and $\#\Rbar:=\#\Rbar(\bx,\xi)$. By the triangle inequality, 
\begin{align}
	\sup_{\bx\in[0,1]^d}\E_{\xi}\left| \Delta_3(\bx,\xi) \right| \leq \sup_{\bx\in[0,1]^d,\xi\in\Xi}\left| \bd_L^{\top}\bS_L^{-1}\sum_{i\in\Isc}\omega_i(\bx,\xi)\bU_i^L \varepsilon_i\right|\leq \sum_{j=1}^2\sup_{\bx\in[0,1]^d,\xi\in\Xi} |\Delta_{3,j}|\label{p8}
\end{align}
where for any $\bx\in[0,1]^d$ and $\xi\in\Xi$,
\begin{align*}
	\Delta_{3,1}&:=\Delta_{3,1}(\bx, \xi)= \sum_{i\in\Isc}\omega_{i}^L\bd_L^{\top}\bS_L^{-1}\bU_i^L \varepsilon_i-\sum_{i\in\Isc}\omega_{i}^{\Rbar} \bd_L^{\top}\bS_L^{-1}\bU_i^L \varepsilon_i,\\
	\Delta_{3,2}&:=\Delta_{3,2}(\bx, \xi)=\sum_{i\in\Isc}\omega_{i}^{\Rbar} \bd_L^{\top}\bS_L^{-1}\bU_i^L \varepsilon_i.
\end{align*}
Note that
\begin{align}
	\left|\Delta_{3,1}\right|
	&\leq\left|\frac{1}{\# L}\sum_{i\in\left\{i\in\Isc:\bX_i\in \Rbar\right\}}\bd_L^{\top}\bS_L^{-1}\bU_i^L \varepsilon_i-\frac{1}{\# \Rbar}\sum_{i\in\left\{i\in\Isc:\bX_i\in \Rbar\right\}}\bd_L^{\top}\bS_L^{-1}\bU_i^L \varepsilon_i\right|\nonumber\\
	&\;\;+\left|\frac{1}{\# L}\sum_{i\in\left\{i\in\Isc:\bX_i\in L(\bx, \xi)\setminus \Rbar\right\}}\bd_L^{\top}\bS_L^{-1}\bU_i^L \varepsilon_i\right|
	\leq\frac{2(\# L-\# \Rbar)}{\# L}\sup_{i\in\left\{i\in\Isc:\bX_i\in L(\bx, \xi)\right\}}\left| \bd_L^{\top}\bS_L^{-1}\bU_i^L \varepsilon_i\right|\nonumber\\
	&\overset{(i)}{\leq} 4M \frac{(\# L-\# \Rbar)}{\# L}\sup_{i\in\left\{i\in\Isc:\bX_i\in L(\bx, \xi)\right\}}\left|\bd_L^{\top}\bS_L^{-1}\bU_i^L \right|,\label{p4}
\end{align}
where (i) holds by $|\varepsilon|=|Y-\E[Y\mid\bX]|\leq2M$ since $\E[\varepsilon\mid \bX]=0$ and $Y\in[-M,M]$. 
By the triangle inequality, we have 
\begin{align*}
	\|\bd_L\|_2\leq\sum_{i\in\Isc}\omega_i(\bx,\xi)\left\|\bU_i^L \right\|_2\overset{(i)}{\leq}\sqrt{\dbar},
\end{align*}
where (i) holds by \eqref{bound_U} and $\sum_{i\in\Isc}\omega_i(\bx,\xi)=1$.
By Cauchy-Schwarz inequality and the sub-multiplicative property of matrix norm, we have for any $\bx\in[0,1]^d$ and $\xi\in\Xi$,
\begin{align}
	\sup_{i\in\left\{i\in\Isc:\bX_i\in L(\bx, \xi)\right\}}\left|\bd_L^{\top}\bS_L^{-1}\bU_i^L \right| \leq\sup_{i\in\left\{i\in\Isc:\bX_i\in L(\bx, \xi)\right\}}\|\bd_L\|_2\|\bS_L\|_2\|\bU_i^L\|_2\overset{(i)}{\leq}\frac{\dbar}{\Lambda_0}.\label{p2}
\end{align}
where (i) holds by Lemma \ref{lem:sample eigen} and \eqref{bound_U}.
By \eqref{eigen_eq1} and \eqref{eigen_eq2}, conditional on the event $\Asc$, as $N\to \infty$, we have
\begin{align}
	\sup_{\bx\in[0,1]^d,\xi\in\Xi}\left\{ \frac{\# L-\# \Rbar}{\# L}\right\}=O\left(\sqrt{\frac{\log(N)}{k}}\right).\label{p3}
\end{align}
Combining \eqref{p2} and \eqref{p3} with \eqref{p4}, conditional on the event $\Asc$, as $N\to \infty$ we have
\begin{align}
	\sup_{\bx\in[0,1]^d,\xi\in\Xi} |\Delta_{3,1}|=O\left(\sqrt{\frac{\log(N)}{k}}\right).\label{p7}
\end{align}
By the tower rule and $\E[\varepsilon\mid\bX]=0$, we have $\E\left[\sum_{i\in\Isc}\omega_{i}^{R} \bd_L^{\top}\bS_L^{-1}\bU_i^L \varepsilon_i \mid\{\bX_l\}_{l=1}^N,\{Y_l\}_{l\in\Jsc}\right]=0$. Since $\varepsilon_i\in[-2M, 2M]$ for all $i\in\{i\in\mathcal I:\bX_i\in R\}$ and \eqref{p2}, by Theorem 2 of \cite{hoeffding1963probability}, for any $\zeta>0$,
\begin{align*}
	\P_{\S_\Isc}\left(\left|\frac{1}{\# R}\sum_{i\in\left\{i\in\Isc:\bX_i\in R\right\}}\bd_L^{\top}\bS_L^{-1}\bU_i^L \varepsilon_i\right|\geq\zeta\mid \{\bX_l\}_{l=1}^N,\{Y_l\}_{l\in\Jsc}\right)\leq 2 \exp\left\{-\frac{\Lambda_0^2\# R\zeta^2}{8M^2\dbar^2}\right\}.
\end{align*}
For any $R\in \mathcal{R}_{\mathcal{D},\omega,\epsilon}$, when $\# R=\sum_{i\in\mathcal I}\mathbbm{1}_{\{\bX_i \in R\}}\geq k/2$,
\begin{align*}
	\P_{\S_\Isc}\left(\left|\frac{1}{\# R}\sum_{i\in\left\{i\in\Isc:\bX_i\in R\right\}}\bd_L^{\top}\bS_L^{-1}\bU_i^L \varepsilon_i\right|\geq\zeta\mid \{\bX_l\}_{l=1}^N,\{Y_l\}_{l\in\Jsc}\right)\leq 2 \exp\left\{-\frac{\Lambda_0^2k\zeta^2}{16M^2\dbar^2}\right\}.
\end{align*}
By the tower rule, for any $R\in \mathcal{R}_{\mathcal{D},\omega,\epsilon}$,
\begin{align*}
	\P_{\S_\Isc}\left(\left|\frac{1}{\# R}\sum_{i\in\left\{i\in\Isc:\bX_i\in R\right\}}\bd_L^{\top}\bS_L^{-1}\bU_i^L \varepsilon_i\right|\geq\zeta\mid \# R\geq k/2\right)\leq 2 \exp\left\{-\frac{\Lambda_0^2k\zeta^2}{16M^2\dbar^2}\right\},
\end{align*}
and hence
\begin{align*}
	\P_{\S_\Isc}\left(\left|\frac{1}{\# R}\sum_{i\in\left\{i\in\Isc:\bX_i\in R\right\}}\bd_L^{\top}\bS_L^{-1}\bU_i^L \varepsilon_i\right|\geq\zeta\;\mbox{and}\; \# R\geq k/2\right)\leq 2 \exp\left\{-\frac{\Lambda_0^2k\zeta^2}{16M^2\dbar^2}\right\}.
\end{align*}
By the union bound, we have
\begin{align*}
	&\P_{\S_\Isc}\left( \bigcup_{R\in \mathcal{R}_{\mathcal{D},\omega,\epsilon}}\left\{\left|\frac{1}{\# R}\sum_{i\in\left\{i\in\Isc:\bX_i\in R\right\}}\bd_L^{\top}\bS_L^{-1}\bU_i^L \varepsilon_i\right|\geq\zeta\;\mbox{and}\;\#R\geq k/2\right\}\right)\\
	&\qquad\leq 2 \# \mathcal{R}_{\mathcal{D},\omega,\epsilon} \exp\left\{-\frac{\Lambda_0^2k\zeta^2}{16M^2\dbar^2}\right\}.
\end{align*}
By $k\gg\log(N)$ and \eqref{p3}, conditional on the event $\Asc$, as $N\to \infty$, we have $(\# L-\# \Rbar)/\# L=o(1)$ uniformly for any $\bx\in [0,1]^d$ and $\xi\in\Xi$.
Hence, there exists $n_1\in \mathbb N$ such that whenever $n\geq n_1$,
\begin{equation}\label{bound:Rbar}
\Rbar=\Rbar(\bx,\xi)\;\mbox{satisfies}\;\# \Rbar\geq k/2\;\mbox{for all}\;\bx\in[0,1]^d\;\mbox{and}\;\xi\in\Xi\;\mbox{when}\;\Asc\;\mbox{occurs}.
\end{equation}
Note that $\Rbar\in\mathcal{R}_{\mathcal{D},\omega,\epsilon}$ for all $\bx\in[0,1]^d$ and $\xi\in\Xi$. It follows that
\begin{align*}
	&\P_{\S_\Isc}\left(\sup_{\bx\in[0,1]^d,\xi\in\Xi}\left|\Delta_{3,2}\right|\geq\zeta\cap\Asc\right)\\
	&\quad\leq\P_{\S_\Isc}\left(\sup_{\bx\in[0,1]^d,\xi\in\Xi}\left|\Delta_{3,2}\right|\geq\zeta\;\mbox{and}\;\bigcap_{\bx\in[0,1]^d,\xi\in\Xi}\#\Rbar(\bx,\xi)\geq k/2\right)\\
	&\quad\leq\P_{\S_\Isc}\left( \bigcup_{R\in \mathcal{R}_{\mathcal{D},\omega,\epsilon}}\left\{\left|\frac{1}{\# R}\sum_{i\in\left\{i\in\Isc:\bX_i\in R\right\}}\bd_L^{\top}\bS_L^{-1}\bU_i^L \varepsilon_i\right|\geq\zeta\;\mbox{and}\;\#R\geq k/2\right\}\right)\\
	&\quad\leq 2 \# \mathcal{R}_{\mathcal{D},\omega,\epsilon} \exp\left\{-\frac{\Lambda_0^2k\zeta^2}{16M^2\dbar^2}\right\}.
\end{align*}
Therefore,
\begin{align*}
	&\P_{\S_\Isc}\left(\sup_{\bx\in[0,1]^d,\xi\in\Xi}\left|\Delta_{3,2}\right|\geq\zeta\right)
	\leq\P_{\S_\Isc}\left(\sup_{\bx\in[0,1]^d,\xi\in\Xi}\left|\Delta_{3,2}\right|\geq\zeta\cap\Asc\right)+\P_{\S_\Isc}(\Asc^c)\\
	&\quad\leq 2 \# \mathcal{R}_{\mathcal{D},\omega,\epsilon} \exp\left\{-\frac{\Lambda_0^2k\zeta^2}{16M^2\dbar^2}\right\}+\P_{\S_\Isc}(\Asc^c).
\end{align*}
Let $\zeta=\sqrt{\frac{32M^2\dbar^2\log(\# \mathcal{R}_{\mathcal{D},\omega,\epsilon})}{\Lambda_0^2k}}$. By \eqref{eigen_eq1}, there exists $n_2\in \mathbb N$ such that $\# \mathcal{R}_{\mathcal{D},\omega,\epsilon}\geq2\sqrt n$ whenever $n\geq n_2$. 
By \eqref{eigen_eq3}, provided that $n\geq\max\{n_0, n_1,n_2\}$, we have
\begin{align}
	\P_{\S_\Isc}\left(\sup_{\bx\in[0,1]^d,\xi\in\Xi}\left|\Delta_{3,2}\right|\geq\sqrt{\frac{32M^2\dbar^2\log(\# \mathcal{R}_{\mathcal{D},\omega,\epsilon})}{\Lambda_0^2k}}\right)\leq \frac{2}{\# \mathcal{R}_{\mathcal{D},\omega,\epsilon}}+\frac{1}{\sqrt n}\leq\frac{2}{\sqrt n}.\label{p6}
\end{align}
Combining \eqref{p7} and \eqref{p6} with \eqref{p8}, as $N\to \infty$, 
\begin{align}
	\sup_{\bx\in[0,1]^d}\E_{\xi}\left| \Delta_3(\bx,\xi) \right| =O_p\left(\sqrt{\frac{\log(N)}{k}}\right).\label{delta_3}
\end{align}
Note that
\begin{align*}
	\sup_{\bx\in[0,1]^d} \E_{\xi}\left| \Delta_4(\bx,\xi) \right| =\sup_{\bx\in[0,1]^d}\left[ \E_{\xi}\left|\sum_{i\in\Isc}\omega_i(\bx,\xi)\varepsilon_i\right|\right]\leq \sup_{\bx\in[0,1]^d,\xi\in\Xi} \left|\sum_{i\in\Isc}\omega_i(\bx,\xi)\varepsilon_i\right|.
\end{align*}
Repeating the same procedure as \eqref{delta_3} except replacing $\bd_L^{\top}\bS_L^{-1}\bU_i^L \varepsilon_i $ with $\varepsilon_i $, conditional on the event $\Asc$, as $N\to \infty$, we have
\begin{align}
	\sup_{\bx\in[0,1]^d}\E_{\xi}\left| \Delta_4(\bx,\xi) \right| =O_p\left(\sqrt{\frac{\log(N)}{k}}\right).\label{delta_4}
\end{align}
Combining \eqref{delta_1}, \eqref{delta_2}, \eqref{delta_3}, \eqref{delta_4}, \eqref{eq:cov} with \eqref{delta}, we have
\begin{align*}
	\sup_{\bx\in[0,1]^d}\left|\mhat_\mathrm{LCF}(\bx)-m(\bx)\right|=O_p\left(\sqrt\frac{\log(N)}{k}+ \left(\frac{N}{k}\right)^{-\frac{(q+\beta)\log (1-\alpha)}{d\log (\alpha)}} \right).
\end{align*}
\end{proof}

\section{Proofs of the average treatment effect (ATE) estimation results}

\begin{proof}[Proof of Theorem \ref{thm:ATE}]
	For this proof, it sufficient to check the conditions of Assumptions 2.1 from Theorem 2.1 of \cite{chernozhukov2017double}.
Let $V:=A-\pi^*(\bX)$ and $U:=U_1+U_{0}$ with $U_a:=\mathbbm{1}_{\{A=a\}} (Y(a)-\mu_a^*(\bX))$ for $a\in\{0,1\}$. By the definition of $\pi^*(\bX)$ and $\mu_a^*(\bX)$, we have $\E[V\mid \bX]=0$ and $\E[U_a\mid \bX, A=a]=0$ for $a\in\{0,1\}$. By the law of total probability, we have $\E[U\mid \bX, A]=\E[U_1\mid \bX, A=1] \P(A=1\mid \bX)+\E[U_0\mid \bX, A=0] \P(A=0\mid \bX)=0$. Hence, the condition (i) of Assumptions 2.1 is satisfied. Let $r>4$ be any fixed positive constant. Since $|Y|\leq M$, we have $|\mu_a^*(\bX)|\leq M$, which implies $\{\E[\mu_a^*(\bX)]^r\}^{1/r}\leq M$. By $|Y|\leq M$, we also get $\{\E[Y]^r\}^{1/r}\leq M$. By the triangle inequality and $\mathbbm{1}_{\{A=a\}}\leq1$, we have $|U|\leq2|U_a|\leq2|Y-\mu_a^*(\bX)|\leq2 |Y|+2|\mu_a^*(\bX)|\leq 4M$, which implies $\P(\E[U^2\mid \bX]\leq4M)=1$. Since $\E[\mathbbm{1}_{\{A=a\}}(Y(a)-\mu_a^*)]^2\geq C_0$, we have $\{\E[U]^2\}^{1/2}=\{\E[U_0]^2+\E[U_1]^2\}^{1/2}\geq\sqrt{2C_0}$. 
By overlap condition under Assumption \ref{identification}, we have $\P(|A-\pi^*(\bX)|\geq c_0)=1$, which implies $\{\E[V^2]\}^{1/2}\geq c_0$. Hence, the condition (ii) of Assumptions 2.1 is satisfied.
By Theorem \ref{thm:local_consistency}, we have
\begin{align*}
	\left\{\E_{\bX}\left[\muhat_a^{-k}(\bX)-\mu_a^*(\bX)\right]^2\right\}^{1/2}&=O_p\left(N^{-\frac{(q_a+\beta_a)\log \left((1-\alpha_a)^{-1}\right)}{d\log \left(\alpha_a^{-1}\right)+2(q_a+\beta_a)\log \left((1-\alpha_a)^{-1}\right)}} \right)=o_p(1),\\
	\left\{\E_{\bX}\left[\pihat^{-k}(\bX)-\pi^*(\bX)\right]^2\right\}^{1/2}&=O_p\left(N^{-\frac{(q_2+\beta_2)\log \left((1-\alpha_2)^{-1}\right)}{d\log \left(\alpha_2^{-1}\right)+2(q_2+\beta_2)\log \left((1-\alpha_2)^{-1}\right)}} \right)=o_p(1).
\end{align*}
By $d<2\sqrt{\frac{(q_a+\beta_a)(q_2+\beta_2)\log \left((1-\alpha_a)^{-1}\right)\log \left((1-\alpha_2)^{-1}\right)}{\log \left(\alpha_a^{-1}\right)\log \left(\alpha_2^{-1}\right)}}$ for $a=\{0,1\}$, we have
\begin{align*}
	\frac{(q_a+\beta_a)\log \left((1-\alpha_a)^{-1}\right)}{d\log \left(\alpha_a^{-1}\right)+2(q_a+\beta_a)\log \left((1-\alpha_a)^{-1}\right)}+\frac{(q_2+\beta_2)\log \left((1-\alpha_2)^{-1}\right)}{d\log \left(\alpha_2^{-1}\right)+2(q_2+\beta_2)\log \left((1-\alpha_2)^{-1}\right)}> \frac{1}{2},
\end{align*}
which implies
\begin{align*}
	\left\{\E_{\bX}\left[\muhat_a^{-k}(\bX)-\mu_a^*(\bX)\right]^2\right\}^{1/2}\left\{\E_{\bX}\left[\pihat^{-k}(\bX)-\pi^*(\bX)\right]^2\right\}^{1/2}=o_p(N^{-1/2}).
\end{align*}
By Lemma \ref{lem:pihat}, we have $\P(c_1<\pihat^{-k}(\bX)\leq 1-c_1)=1$ with probability approaching one and some constant $c_1\in(0,1/2)$. Hence, the condition (iii) of Assumptions 2.1 is satisfied.
\end{proof}

\section{Proofs of the auxiliary Lemmas}\label{sec:proof-lemma}

\begin{proof}[Proof of Lemma \ref{lem:sample eigen}]
Choose $\omega$ and $\epsilon$ as in \eqref{par: R}. By Lemma \ref{lem:R cardinality}, there exists some $\widetilde{\mathcal{R}}_{\mathcal{D},\omega,\epsilon}$ satisfying the approximation property as in Lemma \ref{lem:R cardinality} with $\log(\#\widetilde{\mathcal{R}}_{\mathcal{D},\omega,\epsilon})=O(\log(N))$. Therefore, we can choose some $\mathcal{R}_{\mathcal{D},\omega,\epsilon}\supseteq\widetilde{\mathcal{R}}_{\mathcal{D},\omega,\epsilon}$ satisfying $\log(\#\mathcal{R}_{\mathcal{D},\omega,\epsilon})=O(\log(N))$ and $\sqrt n=o(\#\mathcal{R}_{\mathcal{D},\omega,\epsilon})$. Condition on the event $\Asc$ defined in Lemma \ref{lem:event A}, with $\mathcal{R}=\mathcal{R}_{\mathcal{D},\omega,\epsilon}$, $\mu_{min}=\omega$, and $\delta=1/\sqrt n$. By $k\gg\log(N)$, as $N\to \infty$, we have
\begin{align}
\frac{\log(\#\mathcal{R}_{\mathcal{D},\omega,\epsilon})}{k}=O\left(\frac{\log(N)}{k}\right)=o(1)\;\;\text{and}\;\;\frac{\sqrt n}{\#\mathcal{R}_{\mathcal{D},\omega,\epsilon}}=o(1).\label{eigen_eq1}
\end{align}
Thus, the condition \eqref{par:event A} is satisfied. By Lemma \ref{lem:event A}, there exists $n_0\in\mathbb{N}$ such that 
\begin{align}
\P_{\S_\Isc}\left(\Asc\right)\geq 1-\frac{1}{\sqrt n}\;\;\text{for any}\;\;n\geq n_0.\label{eigen_eq3}
\end{align}
Condition on the event $\Asc$ above. For any $\bx\in[0,1]^d$ and $\xi\in\Xi$, under $(\alpha,k)$-regular, and by Corollary 14 of \cite{wager2015adaptive}, we have 
\begin{align}\label{eigen_eq5}
\mu(L(\bx,\xi))\geq \omega.
\end{align}
By Lemmas \ref{lem:R cardinality} and \ref{lem:R lowerbound},
we can choose some $\Rbar:=\Rbar(\bx,\xi)\in \mathcal{R}_{\mathcal{D},\omega,\epsilon}$ as an inner approximation of $L(\bx,\xi)$ satisfying $\Rbar\subseteq L(\bx,\xi)$,
\begin{align}\label{eigen_eq4}
\lambda(L(\bx,\xi))&\overset{(i)}{=}\mu(L(\bx,\xi))\leq \exp\{\epsilon\}\lambda(\Rbar)\overset{(i)}{=}\exp\{\epsilon\}\mu(\Rbar),\;\;\text{and}\;\\\
\frac{\# L-\# \Rbar}{\# L}&\leq \frac{3}{\sqrt k}+2\sqrt{\frac{3\log(\#\mathcal{R}_{\mathcal{D},\omega,\epsilon})}{\# L}}+O\left(\frac{\log(\#\mathcal{R}_{\mathcal{D},\omega,\epsilon})}{\# L}\right),\label{eigen_eq2}
\end{align}
where we denote $\#L:=\#L(\bx,\xi)$ for the sake of simplicity and (i) holds since $\bX_i\sim\mathrm{Uniform}[0,1]^d$. Define $\omega_{i}^L:=\omega_i(\bx,\xi)=\mathbbm{1}_{\left \{\bX_i\in L(\bx,\xi)\right\}}/\#L$ and $\omega_{i}^{R}:=\mathbbm{1}_{\left \{\bX_i\in R\right\}}/\# R$ for any $R \in \mathcal{R}_{\mathcal{D},\omega,\epsilon}$.
Note that
\begin{align*}
\bS_L-\bd_L \bd_L^{\top}=	\sum_{i\in\Isc}\omega_i(\bx,\xi)\bU_i^L (\bU_i^L)^{\top}-\sum_{i\in\Isc}\omega_i(\bx,\xi)\bU_i^L \sum_{i\in\Isc}\omega_i(\bx,\xi)(\bU_i^L)^\top=\sum_{i=1}^4 \bQ_i,
\end{align*}
where
\begin{align*}
\bQ_1&:=\sum_{i\in\Isc}\omega_{i}^L\bU_i^L (\bU_i^L)^{\top}-\sum_{i\in\Isc}\omega_{i}^L\bU_i^L \sum_{i\in\Isc}\omega_{i}^L(\bU_i^L)^\top\\
&\qquad-\sum_{i\in\Isc}\omega_{i}^{\Rbar}\bU_i^L (\bU_i^L)^{\top}+\sum_{i\in\Isc}\omega_{i}^{\Rbar}\bU_i^L \sum_{i\in\Isc}\omega_{i}^{\Rbar}(\bU_i^L)^\top,\\
\bQ_2&:=\sum_{i\in\Isc}\omega_{i}^{\Rbar}\bU_i^L (\bU_i^L)^{\top}-\sum_{i\in\Isc}\omega_{i}^{\Rbar}\bU_i^L \sum_{i\in\Isc}\omega_{i}^{\Rbar}(\bU_i^L)^\top-\Var\left(\bU \mid \bX \in \Rbar\right),\\	
\bQ_3&:=\Var\left(\bU^L \mid \bX \in \Rbar\right)-\Var\left(\bU^L \mid \bX \in L(\bx, \xi)\right),\\
\bQ_4&:=\Var\left(\bU^L \mid \bX \in L(\bx, \xi)\right),
\end{align*}
where $\bU$ and $\bU^L$ are independent copies of $\bU_i$ and $\bU_i^L$, respectively. By the triangle inequality,
\begin{align*}
&\inf_{\bx\in[0,1]^d,\xi\in\Xi}\Lambda_{\min}\left(\bS_L-\bd_L \bd_L^{\top}\right)=\inf_{\bx\in[0,1]^d,\xi\in\Xi,\|\ba\|_2=1}\sum_{i=1}^4 \ba^\top\bQ_i\ba\\
&\qquad\geq\inf_{\bx\in[0,1]^d,\xi\in\Xi,\|\ba\|_2=1}\ba^\top\bQ_4\ba-\sum_{i=1}^3 \sup_{\bx\in[0,1]^d,\xi\in\Xi,\|\ba\|_2=1}\left|\ba^\top\bQ_i\ba\right|.
\end{align*}
In the following, we show that there exists some constant $\Lambda_0>0$ such that
\begin{align*}
\inf_{\bx\in[0,1]^d,\xi\in\Xi}\Lambda_{\min}\left(\bS_L-\bd_L \bd_L^{\top}\right)\geq\Lambda_0,
\end{align*}
with probability approaching one as $N\to \infty$.

\textbf{Step 1.} We first demonstrate that on the event $\mathcal A$, as $N\to\infty$,
\begin{align}
\sup_{\bx\in[0,1]^d,\xi\in\Xi,\|\ba\|_2=1}\left|\ba^\top\bQ_1\ba\right|=o(1).\label{rate_Q1}
\end{align}
By the triangle inequality, we have 
\begin{align}\label{Q1_eq1}
\sup_{\bx\in[0,1]^d,\xi\in\Xi,\|\ba\|_2=1}\left|\ba^\top\bQ_1\ba\right|\leq \sum_{j=1}^2\sup_{\bx\in[0,1]^d,\xi\in\Xi,\|\ba\|_2=1}\left|\ba^\top\bQ_{1,j}\ba\right|,
\end{align}
where for any $\bx\in[0,1]^d$ and $\xi\in\Xi$,
\begin{align*}
\bQ_{1,1}&:=\bQ_{1,1}(\bx,\xi)=\sum_{i\in\Isc}\omega_{i}^L\bU_i^L (\bU_i^L)^{\top}-\sum_{i\in\Isc}\omega_{i}^{\Rbar}\bU_i^L (\bU_i^L)^{\top},\\	
\bQ_{1,2}&:=\bQ_{1,1}(\bx,\xi)=\sum_{i\in\Isc}\omega_{i}^L\bU_i^L \sum_{i\in\Isc}\omega_{i}^L(\bU_i^L)^\top-\sum_{i\in\Isc}\omega_{i}^{\Rbar}\bU_i^L \sum_{i\in\Isc}\omega_{i}^{\Rbar}(\bU_i^L)^\top.
\end{align*}
Note that $\Rbar\subseteq L(\bx,\xi)$, for any $\ba\in\R^{\dbar}$, we have
\begin{align*}
\left|\ba^\top\bQ_{1,1}\ba\right|
&\leq\left|\frac{1}{\# L}\sum_{i\in\left\{i\in\Isc:\bX_i\in \Rbar\right\}}\left(\ba^\top\bU_i^L \right)^2-\frac{1}{\# \Rbar}\sum_{i\in\left\{i\in\Isc:\bX_i\in \Rbar\right\}}\left(\ba^\top\bU_i^L \right)^2\right|\\
&\qquad+\left|\frac{1}{\# L}\sum_{i\in\left\{i\in\Isc:\bX_i\in L(\bx, \xi)\setminus \Rbar\right\}}\left(\ba^\top\bU_i^L \right)^2\right|\\
&\leq\frac{2(\# L-\# \Rbar)}{\# L}\sup_{i\in\left\{i\in\Isc:\bX_i\in L(\bx, \xi)\right\}}\left(\ba^\top\bU_i^L \right)^2.
\end{align*}
For any $\bx\in[0,1]^d$ and $\xi\in\Xi$, if $\omega_i(\bx,\xi)\neq0$, i.e., $\bX_i \in L(\bx, \xi)$, we have $(\bX_{ij}-\bx_j)/\mathrm{diam}_j(L(\bx,\xi))\in [-1,1]$. By the construction of $\bU_i^L$,
\begin{align}\label{bound_U}
\|\bU_i^L\|_2\leq\sqrt{\dbar}\|\bU_i^L\|_\infty\leq\sqrt{\dbar},\;\;\forall i\in\{i\in\mathcal I:\omega_i(\bx,\xi)\neq0\},
\end{align}
where $\dbar= \sum_{i=1}^q d^i$. Hence, it follows that
\begin{align}\label{Q1_eq3}
&\sup_{i\in\left\{i\in\Isc:\bX_i\in L(\bx, \xi)\right\},\|\ba\|_2=1}\left\{\left(\ba^\top\bU_i^L \right)^2\right\}=\sup_{i\in\left\{i\in\Isc:\bX_i\in L(\bx, \xi)\right\}} \left\|\bU_i^L \right\|_2^2\leq\dbar.
\end{align}
Therefore,
\begin{align}
\sup_{\bx\in[0,1]^d,\xi\in\Xi,\|\ba\|_2=1}\left|\ba^\top\bQ_{1,1}\ba\right|	&\leq 2\dbar\sup_{\bx\in[0,1]^d,\xi\in\Xi}\left\{ \frac{\# L-\# \Rbar}{\# L}\right\}.\label{Q1_eq2}
\end{align}
By \eqref{eigen_eq1} and \eqref{eigen_eq2}, as $N\to \infty$, we have $(\# L-\# \Rbar)/\# L=o(1)$ for any $\bx\in [0,1]^d$ and $\xi\in\Xi$, which implies that
\begin{align}\label{Q1_eq5}
\sup_{\bx\in[0,1]^d,\xi\in\Xi}\left\{ \frac{\# L-\# \Rbar}{\# L}\right\}=o(1).
\end{align}
By \eqref{Q1_eq2}, we have
\begin{align}\label{rate_Q11}
\sup_{\bx\in[0,1]^d,\xi\in\Xi,\|\ba\|_2=1}\left|\ba^\top\bQ_{1,1}\ba\right|=o(1).
\end{align}
In addition, note that for any $\ba\in\R^{\dbar}$,
\begin{align*}
\left|\ba^\top\bQ_{1,2}\ba\right|&=\left|\frac{1}{\# L}\sum_{i\in\left\{i\in\Isc:\bX_i\in L(\bx, \xi)\right\}}\ba^\top\bU_i^L +\frac{1}{\# \Rbar}\sum_{i\in\left\{i\in\Isc:\bX_i\in \Rbar\right\}}\ba^\top\bU_i^L \right|\\
&\qquad\cdot\left|\frac{1}{\# L}\sum_{i\in\left\{i\in\Isc:\bX_i\in L(\bx, \xi)\right\}}\ba^\top\bU_i^L -\frac{1}{\# \Rbar}\sum_{i\in\left\{i\in\Isc:\bX_i\in \Rbar\right\}}\ba^\top\bU_i^L \right|.
\end{align*}
By the triangle inequality, we have
\begin{align*}
&\left|\frac{1}{\# L}\sum_{i\in\left\{i\in\Isc:\bX_i\in L(\bx, \xi)\right\}}\ba^\top\bU_i^L -\frac{1}{\# \Rbar}\sum_{i\in\left\{i\in\Isc:\bX_i\in \Rbar\right\}}\ba^\top\bU_i^L \right|\\
&\qquad\leq \left|\frac{1}{\# L}\sum_{i\in\left\{i\in\Isc:\bX_i\in \Rbar\right\}}\ba^\top\bU_i^L -\frac{1}{\# \Rbar}\sum_{i\in\left\{i\in\Isc:\bX_i\in \Rbar\right\}}\ba^\top\bU_i^L \right|
+\left|\frac{1}{\# L}\sum_{i\in\left\{i\in\Isc:\bX_i\in L(\bx, \xi)\setminus \Rbar\right\}}\ba^\top\bU_i^L \right|\\
&\qquad\leq\frac{2(\# L-\# \Rbar)}{\# L}\sup_{i\in\left\{i\in\Isc:\bX_i\in L(\bx, \xi)\right\} } \left|\ba^\top\bU_i^L \right|.
\end{align*}
Besides, we also have 
\begin{align*}
\left|\frac{1}{\# L}\sum_{i\in\left\{i\in\Isc:\bX_i\in L(\bx, \xi)\right\}}\ba^\top\bU_i^L +\frac{1}{\# \Rbar}\sum_{i\in\left\{i\in\Isc:\bX_i\in \Rbar\right\}}\ba^\top\bU_i^L \right|\leq 2\sup_{i\in\left\{i\in\Isc:\bX_i\in L(\bx, \xi)\right\}}\left|\ba^\top\bU_i^L \right|.
\end{align*}
Therefore,
\begin{align*}
\left|\ba^\top\bQ_{1,2}\ba\right|\leq\frac{4(\# L-\# \Rbar)}{\# L}\sup_{i\in\left\{i\in\Isc:\bX_i\in L(\bx, \xi)\right\} } \left|\ba^\top\bU_i^L \right|^2.
\end{align*}
By \eqref{Q1_eq3} and \eqref{Q1_eq5}, we have
\begin{align}\label{rate_Q12}
\sup_{\bx\in[0,1]^d,\xi\in\Xi,\|\ba\|_2=1}\left|\ba^\top\bQ_{1,2}\ba\right|=o(1).
\end{align}
Combining \eqref{rate_Q11} and \eqref{rate_Q12} with \eqref{Q1_eq1}, we conclude that \eqref{rate_Q1} holds.

\textbf{Step 2.} We now demonstrate that on the event $\Asc$, as $N\to\infty$,
\begin{align}
\sup_{\bx\in[0,1]^d,\xi\in\Xi,\|\ba\|_2=1}\left|\ba^\top\bQ_3\ba\right|=o(1).\label{rate_Q3}
\end{align}
By the triangle inequality, we have 
\begin{align}\label{Q3_eq1}
\sup_{\bx\in[0,1]^d,\xi\in\Xi,\|\ba\|_2=1}\left|\ba^\top\bQ_3\ba\right|\leq\sum_{j=1}^2\sup_{\bx\in[0,1]^d,\xi\in\Xi,\|\ba\|_2=1}\left|\ba^\top\bQ_{3,j}\ba\right|,
\end{align}
where for any $\bx\in[0,1]^d$ and $\xi\in\Xi$,
\begin{align*}
\bQ_{3,1}:=\bQ_{3,1}(\bx,\xi)=&\E\left(\bU^L (\bU^L)^\top\mid \bX \in \Rbar\right)-\E\left(\bU^L (\bU^L)^\top \mid \bX \in L(\bx, \xi)\right),\\	
\bQ_{3,2}:=\bQ_{3,1}(\bx,\xi)=&\E\left(\bU^L \mid \bX \in \Rbar\right)\E\left((\bU^L)^\top \mid \bX \in \Rbar\right)-\\
&\qquad\E\left(\bU^L \mid \bX \in L(\bx, \xi)\right)\E\left((\bU^L)^\top \mid \bX \in L(\bx, \xi)\right).
\end{align*}
Let $\mu_L=\E\left[\mathbbm{1}_{\left\{ \bX \in L(\bx, \xi)\right\}}\right]$ and $\mu_{R}=\E\left[\mathbbm{1}_{\left\{ \bX \in R\right\}}\right]$ for any $R \in \mathcal{R}_{\mathcal{D},\omega,\epsilon}$. By the triangle inequality,
\begin{align*}
\left|\ba^\top\bQ_{3,1}\ba\right|
&\leq\left|\frac{1}{\mu_L}\E\left[\left(\ba^\top\bU^L\right)^2 \mathbbm{1}_{\left\{ \bX \in \Rbar\right\}}\right] -\frac{1}{\mu_{\Rbar}}\E\left[\left(\ba^\top\bU^L\right)^2 \mathbbm{1}_{\left\{ \bX \in \Rbar\right\}}\right]\right|\\
&\qquad+\left|\frac{1}{\mu_L}\E\left[\left(\ba^\top\bU^L\right)^2 \mathbbm{1}_{\left\{ \bX \in L(\bx, \xi)\setminus \Rbar\right\}}\right] \right|\\
&\leq\frac{2(\mu_L-\mu_{\Rbar})}{\mu_L}\sup_{\bX\in[0,1]^d} \left(\ba^\top\bU^L \right)^2\mathbbm{1}_{\{\bX\in L(\bx, \xi)\}}.
\end{align*}
Similarly as in \eqref{Q1_eq3}, we also have 
\begin{align*}
\sup_{\bX\in[0,1]^d,\|\ba\|_2=1} \left(\ba^\top\bU^L \right)^2\mathbbm{1}_{\{\bX\in L(\bx, \xi)\}}\leq\dbar.
\end{align*}
Hence,
\begin{align}\label{Q3_eq4}
\sup_{\bx\in[0,1]^d,\xi\in\Xi,\|\ba\|_2=1}\left|\ba^\top\bQ_{3,1}\ba\right|
\leq 2\dbar\sup_{\bx\in[0,1]^d,\xi\in\Xi}\left\{\frac{\mu_L-\mu_{\Rbar}}{\mu_L}\right\}.
\end{align}
By \eqref{eigen_eq4} since $\bX_i\sim\mathrm{Uniform}[0,1]^d$, we have $\mu_{\Rbar}\geq \exp\{-\epsilon\}\mu_L$ for any $\bx\in[0,1]^d$ and $\xi\in\Xi$, which implies that
\begin{align}\label{Q3_eq3}
\sup_{\bx\in[0,1]^d,\xi\in\Xi}\left\{\frac{\mu_L-\mu_{\Rbar}}{\mu_L}\right\}\leq 1-\exp\{-\epsilon\}= 1-\exp\{-1/\sqrt k\}=o(1)
\end{align}
as $N\to\infty$. Together with \eqref{Q3_eq4}, we have
\begin{align}
\sup_{\bx\in[0,1]^d,\xi\in\Xi,\|\ba\|_2=1}\left|\ba^\top\bQ_{3,1}\ba\right|=o(1).\label{rate_Q31}
\end{align}
In addition, for any $\ba\in\R^{\dbar}$,
\begin{align*}
\left|\ba^\top\bQ_{3,2}\ba\right|&=\left|\E\left(\ba^\top\bU^L \mid \bX \in L(\bx, \xi)\right)+\E\left(\ba^\top\bU^L \mid \bX \in \Rbar\right) \right|\\
&\qquad\cdot\left|\E\left(\ba^\top\bU^L \mid \bX \in L(\bx, \xi)\right)-\E\left(\ba^\top\bU^L \mid \bX \in \Rbar\right)\right|.
\end{align*}
By the triangle inequality, we have
\begin{align*}
&\left|\E\left(\ba^\top\bU^L \mid \bX \in L(\bx, \xi)\right)-\E\left(\ba^\top\bU^L \mid \bX \in \Rbar\right) \right|\\
&\qquad\leq 
\left|\frac{1}{\mu_L}\E\left[\ba^\top\bU^L \mathbbm{1}_{\left\{ \bX \in \Rbar\right\}}\right] -\frac{1}{\mu_{\Rbar}}\E\left[\ba^\top\bU^L \mathbbm{1}_{\left\{ \bX \in \Rbar\right\}}\right]\right|\\
&\qquad\qquad+\left|\frac{1}{\mu_L}\E\left[\ba^\top\bU^L \mathbbm{1}_{\left\{ \bX \in L(\bx, \xi)\setminus \Rbar\right\}}\right] \right|\\
&\qquad\leq\frac{2(\mu_L-\mu_{\Rbar})}{\mu_L}\sup_{\bX\in[0,1]^d} \left|\ba^\top\bU^L \right|\mathbbm{1}_{\{\bX\in L(\bx, \xi)\}}.
\end{align*}
Besides, we also have
\begin{align*}
\left|\E\left(\ba^\top\bU^L \mid \bX \in L(\bx, \xi)\right)+\E\left(\ba^\top\bU^L \mid \bX \in \Rbar\right)\right|\leq 2\sup_{\bX\in[0,1]^d} \left|\ba^\top\bU^L \right|\mathbbm{1}_{\{\bX\in L(\bx, \xi)\}}.
\end{align*}
Therefore,
\begin{align*}
\left|\ba^\top\bQ_{3,2}\ba\right|\leq\frac{4(\mu_L-\mu_{\Rbar})}{\mu_L}\sup_{\left\{i:\bX_i\in L(\bx, \xi)\right\} } \left|\ba^\top\bU_i^L \right|^2.
\end{align*}
By \eqref{Q1_eq3} and \eqref{Q3_eq3}, we have
\begin{align}
\sup_{\bx\in[0,1]^d,\xi\in\Xi,\|\ba\|_2=1}\left|\ba^\top\bQ_{3,2}\ba\right|
=o(1).\label{rate_Q32}
\end{align}
Combining \eqref{rate_Q31} and \eqref{rate_Q32} with \eqref{Q3_eq1}, we conclude that \eqref{rate_Q3} holds.

\textbf{Step 3.} We next demonstrate that condition on the event $\Asc$, as $N\to\infty$, with probability at least $1-4/\sqrt n$,
\begin{align}
\sup_{\bx\in[0,1]^d,\xi\in\Xi,\|\ba\|_2=1}\left|\ba^\top\bQ_2\ba\right|=o(1).\label{rate_Q2}
\end{align}

For any $i\in\mathcal I$ and $j\leq d$, define $\bU_i^{R}:=\left(Z_{i1}^{R},\dots,Z_{id}^{R},(Z_{i1}^{R})^2,Z_{i1}^{R}Z_{i2}^{R},\dots,(Z_{id}^{R})^2,\dots,(Z_{id}^{R})^q\right)^\top$ with $Z_{ij}^{R}:=(\bX_{ij}-\bx_j)/\mathrm{diam}_j(R)$ for any $R \in \mathcal{R}_{\mathcal{D},\omega,\epsilon}$. Then, we have
\begin{align}\label{def:Ubar}
\bU_i^L=\bD_{\Rbar}\bU_i^{\Rbar},
\end{align}
where $\bD_{R}\;:=\;\mathrm{diag}\biggl(\frac{\mathrm{diam}_1(R)}{\mathrm{diam}_1(L(\bx,\xi))},\;\dots,\;\frac{\mathrm{diam}_d(R)}{\mathrm{diam}_d(L(\bx,\xi))},\;\frac{\mathrm{diam}_1^2(R)}{\mathrm{diam}_1^2(L(\bx,\xi))},\;\frac{\mathrm{diam}_1(R)\mathrm{diam}_2(R)}{\mathrm{diam}_1(L(\bx,\xi))\mathrm{diam}_2(L(\bx,\xi))}
,\;\dots,$ $\frac{\mathrm{diam}_2^2(R)}{\mathrm{diam}_1^2(L(\bx,\xi))},\dots,\frac{\mathrm{diam}_d^q(R)}{\mathrm{diam}_d^q(L(\bx,\xi))}\biggl)$ for any $R \in \mathcal{R}_{\mathcal{D},\omega,\epsilon}$. For any $R \in \mathcal{R}_{\mathcal{D},\omega,\epsilon}$, denote $R_j=[r_j^-,r_j^+]\subseteq[0,1]$ as the interval of the $j$-axis of the rectangle $R$ for each $1\leq j\leq d$.
Define 
$\Vbar_{ij}^{R} =(\bX_{ij}-r_j^-)/\mathrm{diam}_j(R)$ and $\cbar_{R,j}=(r_j^- -x_j)/\mathrm{diam}_j(R)$ for any $1\leq j\leq d$ and $R \in \mathcal{R}_{\mathcal{D},\omega,\epsilon}$. Then, the $\dbar$-dimensional vector $\bU_i^{R}$ for any $R \in \mathcal{R}_{\mathcal{D},\omega,\epsilon}$ can be represented as 
$\bU_i^{R}=(\Vbar_{i1}^{R}+\cbar_{R,1},\dots,\Vbar_{id}^{R}+\cbar_{R,d},(\Vbar_{i1}^{R}+\cbar_{R,1})^2,(\Vbar_{i1}^{R}+\cbar_{R,1})(\Vbar_{i2}^{R}+\cbar_{R,2}),\dots,(\Vbar_{id}^{R}+\cbar_{R,d})^2,\dots,(\Vbar_{id}^{R}+\cbar_{R,d})^q)^\top.$
Note that there exists some $\dbar\times \dbar$ lower triangular matrix $\bP_{\Rbar}$ with $1$ on main diagonal such that 
\begin{align}\label{def:Vbar}
\bU_i^{\Rbar}=\bP_{\Rbar} \bV_i^{\Rbar}+\bC_{\Rbar},
\end{align}
where $\bV_i^{R}\;:=(V_{i1}^{R},\dots,V_{id}^{R},(V_{i1}^{R})^2,V_{i1}^{R}V_{i2}^{R},\dots,(V_{id}^{R})^2,\dots,(V_{id}^{R})^q)^\top$ and $\bC_{R}:=(c_{R,1},\dots,$ $c_{R,d},c_{R,1}^2,c_{R,1}c_{R,2},\dots,c_{R,d}^2,\dots,c_{R,d}^q)^\top$ for any $R \in \mathcal{R}_{\mathcal{D},\omega,\epsilon}$. By \eqref{def:Ubar} and \eqref{def:Vbar}, we have 
$$\bU_i^L=\bD_{\Rbar}\bP_{\Rbar} \bV_i^{\Rbar}+\bD_{\Rbar}\bC_{\Rbar}.$$
 Let $\bV^{R}$ be an independent copy of $\bV_i^{R}$. Then, we can express $\bQ_2$ as
\begin{align*}
\bQ_2=\bD_{\Rbar}\bP_{\Rbar}\bQ_{\Rbar}\bP_{\Rbar}^\top\bD_{\Rbar},
\end{align*}
where $\bQ_{R}:=\sum_{i\in\Isc}\omega_{i}^{R}\bV_{i}^{R}(\bV_{i}^{R})^{\top}-\sum_{i\in\Isc}\omega_{i}^{R}\bV_{i}^{R}\sum_{i\in\Isc}\omega_{i}^{R}(\bV_{i}^{R})^\top-\Var\left(\bV^{R} \mid \bX \in R\right)$ for any $R \in \mathcal{R}_{\mathcal{D},\omega,\epsilon}$. By the sub-multiplicative property of matrix norm, we have
\begin{align*}
\|\bQ_2\|_2\leq\|\bD_{\Rbar}\|_2^2\|\bP_{\Rbar}\|_2^2\|\bQ_{\Rbar}\|_2.
\end{align*}
Since $\bD_{\Rbar}$ is a diagonal matrix and its largest eigenvalue is smaller than $1$, we have
$\|\bD_{\Rbar}\|_2^2\leq1$. In addition, since the eigenvalues of an lower triangular matrix are the diagonal entries of the matrix, we also have
$\|\bP_{\Rbar}\|_2^2=1$. Then, we have
\begin{align*}
&\sup_{\bx\in[0,1]^d,\xi\in\Xi,\|\ba\|_2=1}\left|\ba^\top\bQ_2\ba\right|=\sup_{\bx\in[0,1]^d,\xi\in\Xi}\|\bQ_2\|_2\leq\sup_{\bx\in[0,1]^d,\xi\in\Xi}\|\bQ_{\Rbar}\|_2.
\end{align*}
Let $\mathbf{m}_{R}:=\E\left[\bV^R\mid \bX \in R\right]$ for any $R \in \mathcal{R}_{\mathcal{D},\omega,\epsilon}$.
By the triangle inequality, we have 
\begin{align*}
	&\sup_{\bx\in[0,1]^d,\xi\in\Xi}\left\|\bQ_{R}\right\|_2\leq\sup_{\bx\in[0,1]^d,\xi\in\Xi}\left\|\bQ_{R,1}\right\|_2 + \sup_{\bx\in[0,1]^d,\xi\in\Xi}\left\|\bQ_{R,2}\right\|_2,
\end{align*}
where for any $R \in \mathcal{R}_{\mathcal{D},\omega,\epsilon}$,
\begin{align*}
	\bQ_{R,1}&:=\sum_{i\in\Isc}\omega_{i}^R\left(\bV_{i}^R-\mathbf{m}_{R}\right)\left(\bV_{i}^R-\mathbf{m}_{R}\right)^{\top}- \Var\left(\bV^R\mid \bX \in R\right),\\	
	\bQ_{R,2}&:=\left(\sum_{i\in\Isc}\omega_{i}^R\bV_{i}^R-\mathbf{m}_{R}\right)\left(\sum_{i\in\Isc}\omega_{i}^R\bV_{i}^R-\mathbf{m}_{R}\right)^\top.
\end{align*}
Therefore, we have
\begin{align}\label{Q2_eq1}
	\sup_{\bx\in[0,1]^d,\xi\in\Xi,\|\ba\|_2=1}\left|\ba^\top\bQ_2\ba\right|\leq\sup_{\bx\in[0,1]^d,\xi\in\Xi}\left\|\bQ_{\Rbar,1}\right\|_2 + \sup_{\bx\in[0,1]^d,\xi\in\Xi}\left\|\bQ_{\Rbar,2}\right\|_2.
\end{align}

For all $i\in\{i\in\mathcal I:\bX_i\in R\}$ and $j \leq d$, we have $\bV_{ij}^R\in[0,1]$, where $\bV_{ij}^R$ denotes the $j$-th coordinate of $\bV_i^R$. Note that $(\bV_{i}^R)_{i\in\mathcal I:\bX_i \in R}$ are i.i.d. random vectors condition on the indicators $\left\{\mathbbm{1}_{\left\{\bX_i \in R\right\}}\right\}_{i\in\Isc}$. As shown in Example 2.4 of \cite{wainwright2019high}, condition on $\left\{\mathbbm{1}_{\left\{\bX_i \in R\right\}}\right\}_{i\in\Isc}$, $(\bV_{ij}^R)_{i\in\mathcal I:\bX_i \in R,j\leq d}$ are sub-Gaussian with parameter at most $\sigma=1$. Note that $\E[\bQ_{R,1}\mid\left\{\mathbbm{1}_{\left\{\bX_i \in R\right\}}\right\}_{i\in\Isc}]=\bzero$. By Theorem 6.5 of \cite{wainwright2019high}, for all $\zeta_1\geq0$ and any $R\in \mathcal{R}_{\mathcal{D},\omega,\epsilon}$,
\begin{align*}
&\P_{\S_\Isc}\left(\left\|\bQ_{R,1}\right\|_2\geq C_1\left(\sqrt{\frac{\dbar}{\# R}}+\frac{\dbar}{\# R}\right)+\zeta_1 \mid \left\{\mathbbm{1}_{\left\{\bX_i \in R\right\}}\right\}_{i\in\Isc}\right)\\
&\qquad\leq C_2 \exp\left\{-C_3\# R \min \{\zeta_1,\zeta_1^2\}\right\},
\end{align*}
where $C_1$, $C_2$, and $C_3$ are some positive constants. Then,
\begin{align*}
\P_{\S_\Isc}\left(\left\|\bQ_{R,1}\right\|_2\geq C_1\left(\sqrt{\frac{2\dbar}{k}}+\frac{2\dbar}{k}\right)+\zeta_1 \mid \#R\geq k/2\right)\leq C_2 \exp\left\{-\frac{C_3}{2} k \min \{\zeta_1,\zeta_1^2\}\right\},
\end{align*}
By the union bound, we have
\begin{align*}
&\P_{\S_\Isc}\left( \bigcup_{R\in \mathcal{R}_{\mathcal{D},\omega,\epsilon}}\left\{\left\|\bQ_{R,1}\right\|_2\geq C_1\left(\sqrt{\frac{2\dbar}{k}}+\frac{2\dbar}{k}\right)+\zeta_1\;\mbox{and}\;\#R\geq k/2\right\}\right)\\
&\qquad\leq C_2 \# \mathcal{R}_{\mathcal{D},\omega,\epsilon} \exp\left\{-\frac{C_3}{2} k \min \{\zeta_1,\zeta_1^2\}\right\}.
\end{align*}
As shown in \eqref{bound:Rbar}, when $\Asc$ occurs and $n\geq n_1$, we have $\#\Rbar\geq k/2$ for all $\bx\in[0,1]^d$ and $\xi\in\Xi$. In addition, we note that the chosen $\Rbar$ satisfies $\Rbar=\Rbar(\bx,\xi)\in\mathcal{R}_{\mathcal{D},\omega,\epsilon}$ for all $\bx\in[0,1]^d$ and $\xi\in\Xi$. It follows that
\begin{align}\label{eq_G.23}
&\P_{\S_\Isc}\left(\sup_{\bx\in[0,1]^d,\xi\in\Xi}\left\|\bQ_{\Rbar,1}\right\|_2\geq C_1\left(\sqrt{\frac{2\dbar}{k}}+\frac{2\dbar}{k}\right)+\zeta_1\cap\Asc\right)\\
&\quad\leq\P_{\S_\Isc}\left(\sup_{\bx\in[0,1]^d,\xi\in\Xi}\left\|\bQ_{\Rbar,1}\right\|_2\geq C_1\left(\sqrt{\frac{2\dbar}{k}}+\frac{2\dbar}{k}\right)+\zeta_1\;\mbox{and}\;\bigcap_{\bx\in[0,1]^d,\xi\in\Xi}\#\Rbar(\bx,\xi)\geq k/2\right)\nonumber\\
&\quad\leq\P_{\S_\Isc}\left( \bigcup_{R\in \mathcal{R}_{\mathcal{D},\omega,\epsilon}}\left\{\left\|\bQ_{R,1}\right\|_2\geq C_1\left(\sqrt{\frac{2\dbar}{k}}+\frac{2\dbar}{k}\right)+\zeta_1\;\mbox{and}\;\#R\geq k/2\right\}\right)\nonumber\\
&\quad\leq C_2 \# \mathcal{R}_{\mathcal{D},\omega,\epsilon} \exp\left\{-\frac{C_3}{2} k \min \{\zeta_1,\zeta_1^2\}\right\}.\nonumber
\end{align}
Therefore,
\begin{align*}
&\P_{\S_\Isc}\left(\sup_{\bx\in[0,1]^d,\xi\in\Xi}\left\|\bQ_{\Rbar,1}\right\|_2\geq C_1\left(\sqrt{\frac{2\dbar}{k}}+\frac{2\dbar}{k}\right)+\zeta_1\right)\\
&\quad\leq\P_{\S_\Isc}\left(\sup_{\bx\in[0,1]^d,\xi\in\Xi}\left\|\bQ_{\Rbar,1}\right\|_2\geq C_1\left(\sqrt{\frac{2\dbar}{k}}+\frac{2\dbar}{k}\right)+\zeta_1\cap\Asc\right)+\P_{\S_\Isc}(\Asc^c)\\
&\quad\leq C_2 \# \mathcal{R}_{\mathcal{D},\omega,\epsilon} \exp\left\{-\frac{C_3}{2} k \min \{\zeta_1,\zeta_1^2\}\right\}+\P_{\S_\Isc}(\Asc^c).
\end{align*}
Let $\zeta_1=\sqrt{\frac{4\log(\# \mathcal{R}_{\mathcal{D},\omega,\epsilon})}{C_3 k}}$. By \eqref{eigen_eq1}, there exists $n_2\in \mathbb N$ such that $\zeta_1^2\leq \zeta_1$ and $\# \mathcal{R}_{\mathcal{D},\omega,\epsilon}\geq\max\{C_2,\dbar/2\}\sqrt n$ whenever $n\geq n_2$. 
By \eqref{eigen_eq3}, provided that $n\geq\max\{n_0, n_1,n_2\}$, we have
\begin{align}
&\P_{\S_\Isc}\left(\sup_{\bx\in[0,1]^d,\xi\in\Xi}\left\|\bQ_{\Rbar,1}\right\|_2\geq C_1\left(\sqrt{\frac{2\dbar}{k}}+\frac{2\dbar}{k}\right)+\sqrt{\frac{4\log(\# \mathcal{R}_{\mathcal{D},\omega,\epsilon})}{C_3 k}}\right)\nonumber\\
&\quad\leq C_2/\# \mathcal{R}_{\mathcal{D},\omega,\epsilon}+1/\sqrt n \leq 2/\sqrt n.\label{Q2_eq3}
\end{align}
Additionally, note that for any $R\in \mathcal{R}_{\mathcal{D},\omega,\epsilon}$, 
\begin{align}
\left\|\bQ_{R,2}\right\|_2
&=\left\|\sum_{i\in\Isc}\omega_{i}^R\bV_{i}^R-\mathbf{m}_{R}\right\|_2^2=\sum_{j=1}^{\dbar} \left(\frac{1}{\# R}\sum_{i\in\left\{i\in\Isc:\bX_i\in R\right\}}\bV_{ij}^R-\mathbf{m}_{R,j}\right)^2,\label{Q2_eq2}
\end{align}
where $\mathbf{m}_{R,j}$ is the $j$-th coordinate of $\mathbf{m}_R$.
Since $\bV_{ij}^R\in[0,1]$ for all $i\in\{i\in\mathcal I:\bX_i\in R\}$ and $j\leq d$, by Theorem 2 of \cite{hoeffding1963probability}, for any $j \leq \dbar$ and $\zeta_2>0$,
\begin{align*}
\P_{\S_\Isc}\left(\left|\frac{1}{\# R}\sum_{i\in\left\{i\in\Isc:\bX_i\in R\right\}}\bV_{ij}^R-\mathbf{m}_{R,j}\right|\geq\zeta_2\mid \left\{\mathbbm{1}_{\left\{\bX_i \in R\right\}}\right\}_{i\in\Isc}\right)\leq 2 \exp\left\{-2\# R\zeta_2^2\right\}.
\end{align*}
For any $R\in \mathcal{R}_{\mathcal{D},\omega,\epsilon}$, $j \leq \dbar$, and $\zeta_2>0$,
\begin{align*}
	\P_{\S_\Isc}\left(\left|\frac{1}{\# R}\sum_{i\in\left\{i\in\Isc:\bX_i\in R\right\}}\bV_{ij}^R-\mathbf{m}_{R,j}\right|\geq\zeta_2\mid \# R\geq k/2\right)\leq 2 \exp\left\{-k\zeta_2^2\right\}.
\end{align*}
By the union bound and \eqref{Q2_eq2}, for all $\zeta_2\geq0$,
\begin{align*}
	&\P_{\S_\Isc}\left( \bigcup_{R\in \mathcal{R}_{\mathcal{D},\omega,\epsilon}}\left\{\left\|\bQ_{R,2}\right\|_2\geq\dbar\zeta_2^2\;\mbox{and}\; \# R\geq k/2\right\}\right)\leq 2\dbar \# \mathcal{R}_{\mathcal{D},\omega,\epsilon} \exp\left\{-k\zeta_2^2\right\}.
\end{align*}
Repeating the similar procedure as \eqref{eq_G.23}, we have
\begin{align*}
\P_{\S_\Isc}\left(\sup_{\bx\in[0,1]^d,\xi\in\Xi}\left\|\bQ_{\Rbar,2}\right\|_2\geq \dbar\zeta_2^2\cap\Asc\right)\leq 2\dbar \# \mathcal{R}_{\mathcal{D},\omega,\epsilon} \exp\left\{-k\zeta_2^2\right\}.
\end{align*}
Therefore,
\begin{align*}
	&\P_{\S_\Isc}\left(\sup_{\bx\in[0,1]^d,\xi\in\Xi}\left\|\bQ_{\Rbar,2}\right\|_2\geq \dbar\zeta_2^2\right)\leq\P_{\S_\Isc}\left(\sup_{\bx\in[0,1]^d,\xi\in\Xi}\left\|\bQ_{\Rbar,2}\right\|_2\geq \dbar\zeta_2^2\cap\Asc\right)+\P_{\S_\Isc}(\Asc^c)\\
	&\quad\leq 2\dbar \# \mathcal{R}_{\mathcal{D},\omega,\epsilon} \exp\left\{-k\zeta_2^2\right\}+\P_{\S_\Isc}(\Asc^c).
\end{align*}
Let $\zeta_2=\sqrt{\frac{2\log(\# \mathcal{R}_{\mathcal{D},\omega,\epsilon})}{k}}$.
By \eqref{eigen_eq3}, provided that $n\geq\max\{n_0, n_1,n_2\}$, we have
\begin{align}
&\P_{\S_\Isc}\left(\sup_{\bx\in[0,1]^d,\xi\in\Xi}\left\|\bQ_{\Rbar,2}\right\|_2\geq\frac{2\dbar\log(\# \mathcal{R}_{\mathcal{D},\omega,\epsilon})}{k}\right)\leq 2\dbar/\# \mathcal{R}_{\mathcal{D},\omega,\epsilon}+1/\sqrt n \leq2/\sqrt n,\label{Q2_eq4}
\end{align}
since $\# \mathcal{R}_{\mathcal{D},\omega,\epsilon}\geq2\dbar\sqrt n$ whenever $n\geq n_2$. Combining \eqref{Q2_eq3} and \eqref{Q2_eq4} with \eqref{Q2_eq1}, provided that $n\geq\max\{n_0, n_1,n_2\}$, we have
\begin{align*}
\P_{\S_\Isc}&\biggl(\sup_{\bx\in[0,1]^d,\xi\in\Xi,\|\ba\|_2=1}\left|\ba^\top\bQ_2\ba\right|\geq C_1\left(\sqrt{\frac{2\dbar}{k}}+\frac{2\dbar}{k}\right)+\sqrt{\frac{4\log(\# \mathcal{R}_{\mathcal{D},\omega,\epsilon})}{C_3 k}}\\
&\qquad+\frac{2\log(\# \mathcal{R}_{\mathcal{D},\omega,\epsilon})}{k}\biggl)\leq 4/\sqrt n.
\end{align*}	
Therefore, condition on the event $\Asc$, as $N\to \infty$,\begin{align*}
\sup_{\bx\in[0,1]^d,\xi\in\Xi,\|\ba\|_2=1}\left|\ba^\top\bQ_2\ba\right|=O\left(\sqrt{\frac{1}{k}}+\frac{1}{k}+\sqrt{\frac{\log(\# \mathcal{R}_{\mathcal{D},\omega,\epsilon})}{k}}+\frac{\log(\# \mathcal{R}_{\mathcal{D},\omega,\epsilon})}{k} \right)=o(1),
\end{align*}
with probability at least $1-4/\sqrt n$.

\textbf{Step 4.} We demonstrate that there exists some constant $\Lambda_0>0$ such that
\begin{align}
\inf_{\bx\in[0,1]^d,\xi\in\Xi,\|\ba\|_2=1}\ba^\top\bQ_4\ba\geq 2\Lambda_0.\label{rate_Q4}
\end{align}
Let $L_j(\bx,\xi)=[a_j,b_j]\subseteq[0,1]$ be the interval of the $j$-axis of the leaf $L(\bx,\xi)$ for each $j\leq d$. Define $V_j^L=(\bX_j-a_j)/\mathrm{diam}_j(L(\bx,\xi))$ and $c_{L,j}=(a_j-\bx_j)/\mathrm{diam}_j(L(\bx,\xi))$ for any $j\leq d$. The $\dbar$-dimensional vector $\bU^L$ can be represented as 
$\bU^L=(V_{1}^L+c_{L,1},\dots,V_{d}^L+c_{L,d},(V_{1}^L+c_{L,1})^2,(V_{1}^L+c_{L,1})(V_{2}^L+c_{L,2}),\dots,(V_{d}^L+c_{L,d})^2,\dots,(V_{d}^L+c_{L,d})^q)^\top$. Then, there exists some $\dbar\times \dbar$ lower triangular matrix $\bP_L$ with $1$ on main diagonal such that 
\begin{align}\label{def:Vtil}
\bU^L=\bP_L \bV^L+\bC_L,
\end{align}
where $\bV^L:=(V_{L,1},\dots,V_{L,d},V_{L,1}^2,V_{L,1}V_{L,2},\dots,V_{L,d}^2,\dots,V_{L,d}^q)^\top$ and $\bC_L:=(c_{L,1},\dots,c_{L,d},c_{L,1}^2,$ $c_{L,1}c_{L,2},\dots,c_{L,d}^2,\dots,c_{L,d}^q)^\top$. Here, $\P_L$ and $\bC_L$ are both deterministic given $L(\bx,\xi)$. Plugging $\bU^L=\bP_L \bV^L+\bC_L$ into $\bQ_4$, we have
\begin{align*}
\bQ_4=\Var\left(\bP_L\bV^L \mid \bX \in L(\bx, \xi)\right)=\bP_L\Var\left(\bV^L \mid \bX \in L(\bx, \xi)\right)\bP_L^\top.
\end{align*}
By the sub-multiplicative property of matrix norm, we have	
\begin{align*}
\|\bQ_4^{-1}\|_2\leq\|\bP_L^{-1}\|_2^2/\Lambda_{\min}(\Var\left(\bV^L \mid \bX \in L(\bx, \xi)\right))
\end{align*}
Since $\bP_L$ is a lower triangular matrix with $1$ on main diagonal, we know that $\bP_L^{-1}$ is an upper triangular matrix with $1$ on main diagonal, and it follows that $\left\|\bP_L^{-1}\right\|_2=1$. Therefore, 
\begin{align*}
\inf_{\bx\in[0,1]^d,\xi\in\Xi,\|\ba\|_2=1}\ba^\top\bQ_4\ba
&\qquad \geq \inf_{\bx\in[0,1]^d,\xi\in\Xi,\|\ba\|=1}\Var\left(\ba^\top\bV^L \mid \bX \in L(\bx, \xi)\right).
\end{align*}
Since the coordinates of $\bX$ are i.i.d. uniformly distributed, we know that $(V_j^L)_{j=1}^d$ are also i.i.d. uniformly distributed given $\bX \in L(\bx, \xi)$. Let $(\Vtil_{j})_{j=1}^d$ be a sequence of i.i.d. uniform random variables with support $[0,1]$, and denote $\bVtil:=(\Vtil_{1},\dots,\Vtil_{d},\Vtil_{1}^2,\Vtil_{1}\Vtil_{2},\dots,$ $\Vtil_{d}^2,\dots,\Vtil_{d}^q)^\top$. Then, for any $\bx\in[0,1]^d$ and $\xi\in\Xi$, we have
\begin{align*}
&\inf_{\|\ba\|=1}\Var\left(\ba^\top\bV^L \mid \bX \in L(\bx, \xi)\right)=\inf_{\|\ba\|_2=1}\Var\left(\ba^\top\bVtil \right).
\end{align*}
Let $\Lambda_0=\inf_{\|\ba\|_2=1}\Var\left(\ba^\top\bVtil \right)/2$. Note that the quantity $\Lambda_0$ is deterministic given the dimension $\dbar$ and hence is independent of the sample size $N$. Suppose that $\Var\left(\ba^\top\bVtil \right)=0$ with some $\ba\neq\bzero$. Then, we have $\P(\ba^\top\bVtil=a_0)=1$ with some constant $a_0\in\R$. However, note that $\ba^\top\bVtil-a_0$ is a $q$-th polynomial function of $(\Vtil_{1},\Vtil_{2},\dots,\Vtil_{d})$. As shown in Section 2.6.5 of \cite{federer2014geometric}, $\P(\ba^\top\bVtil=a_0)=1$ occurs only if $a_0=0$ and $\ba=\bzero$; this contradicts with $\ba\neq\bzero$. Therefore, we conclude that $\Lambda_0>0$ and $\inf_{\bx\in[0,1]^d,\xi\in\Xi,\|\ba\|_2=1}\ba^\top\bQ_4\ba\geq 2\Lambda_0$.

Combining the results of Steps 1-4 and note that \eqref{eigen_eq3} holds, we conclude that 
\begin{align*}
\lim_{N\to \infty}\P_{\S_\Isc}\left(\Bsc\right)=1.
\end{align*}
Lastly, condition on the event $\Bsc$, \eqref{event B}. Then, $\bS_L-\bd_L \bd_L^{\top}$ and $\bS_L$ are both positive-definite. In addition, we also have
\begin{align*}
&\sup_{\bx\in[0,1]^d,\xi\in\Xi}\bd_L^{\top}(\bS_L-\bd_L \bd_L^{\top})^{-1}\bd_L\leq \frac{1}{\Lambda_0} 
\sup_{\bx\in[0,1]^d,\xi\in\Xi}\left\|\sum_{i\in\Isc}\omega_i(\bx,\xi)\bU_i^L \right\|_2^2\\
&\qquad\overset{(i)}{\leq}\frac{1}{\Lambda_0}\sup_{\bx\in[0,1]^d,\xi\in\Xi}\left(\sum_{i\in\Isc}\omega_i(\bx,\xi)\left\|\bU_i^L \right\|_2\right)^2\overset{(ii)}{\leq}\frac{\dbar}{\Lambda_0},
\end{align*}
where (i) holds by the triangle inequality; (ii) holds by \eqref{bound_U} and $\sum_{i\in\Isc}\omega_i(\bx,\xi)=1$. 
\end{proof}

\begin{proof}[Proof of Lemma \ref{lem:p.s.d}] 
Condition on the event $\Bsc$, \eqref{event B}. Then, the matrix
$\bS_L-\bd_L \bd_L^{\top}$ is positive-definite, which implies that $\bS_L$ is also positive-definite. 
Recall that $\bD_L:=\mathrm{diag}$ $(\mathrm{diam}_1(L(\bx,\xi)),\;\dots,\mathrm{diam}_d(L(\bx,\xi)),\mathrm{diam}_1^2(L(\bx,\xi)),\mathrm{diam}_1(L(\bx,\xi))\mathrm{diam}_2(L(\bx,\xi)),\dots,\mathrm{diam}_1^2$ $(L(\bx,\xi)),\dots,\mathrm{diam}_d^q(L(\bx,\xi)))$. On the event $\Csc$, \eqref{event C}, the diagonal matrix $\bD_L$ is invertible. By \eqref{def:U_L}, we have $\bS=\bD_L\bS_L\bD_L$ and $\bS-\bd \bd^{\top}=\bD_L(\bS_L-\bd_L \bd_L^{\top})\bD_L$. 
Hence, on the event $\Bsc \cap \Csc$, we have $\bS$ and $\bS-\bd \bd^{\top}$ are both positive-definite. 
In addition, note that
\begin{align*}
\sum_{i\in\Isc}\omega_i(\bx,\xi)\bDelta_i\bDelta_i^{\top}=
\begin{pmatrix}
1 & \bd^{\top}\\
\bd & \bS
\end{pmatrix}.
\end{align*}
For any $(a,\bb)\in\R^{\dbar+1}\setminus\{\bzero\}$, we have 
\begin{align*}
	\begin{pmatrix}
		a & \bb^{\top}
	\end{pmatrix}
	\begin{pmatrix}
		1 & \bd^{\top}\\
		\bd & \bS
	\end{pmatrix}
	\begin{pmatrix}
		a\\
		\bb
	\end{pmatrix}&=(a+\bd^\top\bb)^2+\bb^\top(\bS-\bd \bd^{\top})\bb>0,
\end{align*}
since $(a+\bd^\top\bb)^2=\bb^\top(\bS-\bd \bd^{\top})\bb=0$ only when $\bb=\bzero$ and $a=-d^\top\bb=0$. Hence, $\sum_{i\in\Isc}\omega_i(\bx,\xi)\bDelta_i\bDelta_i^{\top}$ is positive-definite on the event $\Bsc \cap \Csc$. Recall that the lower triangular matrix $\bT$ is invertible. By \eqref{def:bT}, we have $\sum_{i\in\Isc}\omega_i(\bx,\xi)\bG(\bX_i)\bG(\bX_i)^{\top}= \bT^{-1}\sum_{i\in\Isc}$ $\omega_i(\bx,\xi)\bDelta_i\bDelta_i^{\top}(\bT^{-1})^{\top}$. Hence, $\sum_{i\in\Isc}\omega_i(\bx,\xi)\bG(\bX_i)\bG(\bX_i)^{\top}$ is also positive-definite on the event $\Bsc \cap \Csc$. 

In the following, we further show that $\P_{\S_\Isc}\left(\Csc\right)=1$. Let $\bX_{ij}$ be the $j$-th coordinate of the vector $\bX_{i}$ and $c\in[0,1]$ be some constant. Then, we have
\begin{align*}
&\P_{\S_\Isc}\left(\Csc^c\right)=\P_{\S_\Isc}\left(\exists\;j\leq d,\;\bx\in[0,1]^d,\;\xi\in\Xi,\;\text{s.t.}\;\mathrm{diam}_j(L(\bx,\xi))=0\right)\\
&\quad=\P_{\S_\Isc}\left(\exists\;j\leq d,\;\bx\in[0,1]^d,\;\xi\in\Xi,\;\text{s.t.}\;\bX_{ij}=\bX_{i'j}\;\forall i,i'\in\{i\in\mathcal I:\bX_i\in L(\bx,\xi)\}\right)\\
&\quad\overset{(i)}{\leq}\P_{\S_\Isc}\left(\exists\;j\leq d, i,i'\in\{i\in\mathcal I:\bX_i\in L(\bx,\xi)\}\;\text{s.t.}\;\;i\neq i'\;\text{and}\;\bX_{ij}=\bX_{i'j}\right)\\
&\quad\overset{(ii)}{\leq}\sum_{1\leq j\leq d, i,i'\in\Isc, i\neq i'} \P_{\S_\Isc}\left(\bX_{ij}=\bX_{i'j}\right)\overset{(iii)}{=}0,
\end{align*}
where (i) holds since the minimum leaf size $\# \left \{ l:\bX_l\in L(\bx,\xi) \right \}\geq k\geq 2$; (ii) holds by the union bound; (iii) holds since $\P_{\S_\Isc}\left(\bX_{ij}=\bX_{i'j}\right)=0$ as $\bX_{ij}$ and $\bX_{i'j}$ are independent uniform random variables for any $i\neq i'$ and $j\leq d$.
Therefore, we conclude that $\P_{\S_\Isc}\left(\Csc\right)=1$ holds.
\end{proof}

\begin{proof}[Proof of Lemma \ref{lem:pihat}]
By Theorem \ref{thm:local_consistency_uniform}, as $N\to \infty$, we have $\sup_{\bx\in[0,1]^{d_1}}|\pihat^{-k}(\bx)-\pi^*(\bx)|=o_p(1)$. Then, we have
\begin{align*}
	\lim_{N\to\infty}\P_{\S_n}\left(\sup_{\bx\in[0,1]^{d_1}}|\pihat^{-k}(\bx)-\pi^*(\bx)|\leq c_0/2\right)=1,
\end{align*}
which implies
\begin{align*}
	\lim_{N\to\infty}\P_{\S_n}\left[\P_{\bX}\left(\pi^*(\bS_1)- c_0/2<\pihat^{-k}(\bX)\leq \pi^*(\bX)+ c_0/2\right)=1\right]=1,
\end{align*}
By the overlap condition $\P_{\bX}(c_0<\pi^*(\bX)<1-c_0)=1$, we have
\begin{align*}
	\lim_{N\to\infty}\P_{\S_n}\left(\P_{\bX}(c_0/2<\pihat^{-k}(\bX)\leq 1-c_0/2)=1\right)=1.
\end{align*}
Let $c_1=c_0/2$. Then, we have 
\begin{align*}
	\lim_{N\to\infty}\P_{\S_n}\left(\P_{\bX}(c_1<\pihat^{-k}(\bX)\leq 1-c_1)=1\right)=1.
\end{align*}
\end{proof}

\end{document}